\documentclass[12pt,a4paper]{article}

\usepackage[nonatbib]{neurips_2018}
\usepackage[utf8]{inputenc} 
\usepackage[T1]{fontenc}    
\usepackage[english]{babel}
\usepackage{url}            
\usepackage{nicefrac}       
\usepackage{microtype}      
\usepackage[dvips]{graphicx}
\usepackage{amsmath,amsfonts,amsthm,amssymb}
\usepackage{mathrsfs}
\usepackage[usenames, dvipsnames]{color}
\usepackage{xspace}
\usepackage{adjustbox}
\usepackage[colorlinks=true,citecolor=blue,linkcolor=blue,urlcolor=blue,pdfstartview=FitH]{hyperref}
\usepackage{nameref}
\usepackage{enumitem}
\usepackage{tabularx, booktabs}
\usepackage{footnote}
\makesavenoteenv{tabular}
\usepackage{algorithm}
\usepackage{algorithmicx}
\usepackage{algpseudocode}
\usepackage{scrextend}
\usepackage{array}
\usepackage{environ}

\makeatletter
\renewcommand{\ALG@name}{Implementation}
\makeatother

\usepackage{caption}

\def\om_prop#1{\textcolor{Purple}{#1}}

\usepackage{tikz,tkz-tab}
\usetikzlibrary{fit,calc,positioning,trees}
\usetikzlibrary{arrows}
\tikzstyle{level 1}=[level distance=4cm, sibling distance=2.5cm]
\tikzstyle{level 2}=[level distance=8cm, sibling distance=0.6cm]
\tikzstyle{bag} = [text width=4em, text centered]
\tikzstyle{end} = [circle, minimum width=3pt,fill, inner sep=0pt]
\usetikzlibrary{decorations.pathmorphing}
\usetikzlibrary{decorations.pathreplacing}
\usetikzlibrary{decorations.shapes}
\usetikzlibrary{decorations.text}
\usetikzlibrary{decorations.markings}
\usetikzlibrary{decorations.fractals}
\usetikzlibrary{decorations.footprints}

\makeatletter
\newcommand{\manuallabel}[2]{\def\@currentlabel{#2}\label{#1}}
\makeatother

\newcommand\numberthis{\addtocounter{equation}{1}\tag{\theequation}}

\algnewcommand{\Inputs}[1]{%
  \State \textbf{Inputs:}
  \Statex \hspace*{\algorithmicindent}\parbox[t]{.8\linewidth}{\raggedright #1}
}

\algnewcommand{\Initialize}[1]{%
  \State \textbf{Initialize:}
  \Statex \hspace*{\algorithmicindent}\parbox[t]{.8\linewidth}{\raggedright #1}
}

\NewEnviron{myequation}{%
    \begin{equation}
    \scalebox{1.5}{$\BODY$}
    \end{equation}
    }

\makeatletter
\newcommand*{\inlineequation}[2][]{%
  \begingroup
    \refstepcounter{equation}%
    \ifx\\#1\\%
    \else
      \label{#1}%
    \fi
    \relpenalty=10000 %
    \binoppenalty=10000 %
    \ensuremath{%
      #2 %
    }%
    ~\@eqnnum
  \endgroup
}
\makeatother

\makeatletter
\newcommand\footnoteref[1]{\protected@xdef\@thefnmark{\ref{#1}}\@footnotemark}
\makeatother

\graphicspath{{image_cj/}}
\DeclareGraphicsExtensions{.pdf,.png, jpg}

\DeclareMathOperator*{\argmin}{arg\,min}
\DeclareMathAlphabet{\mathpzc}{OT1}{pzc}{m}{it}

\def\Esp{\mathbb{E}}
\def\Var{\mathrm{Var}}

\def\é{\'{e}}
\def\è{\`{e}}
\def\ê{\^{e}}
\def\à{\`{a}}
\def\ô{\^{o}}

\setlength{\parindent}{0cm}

\newcolumntype{C}[1]{>{\centering\arraybackslash}p{#1}}
\newcolumntype{L}[1]{>{\raggedleft\arraybackslash}p{#1}}
\newcolumntype{R}[1]{>{\raggedright\arraybackslash}p{#1}}

\newtheorem{theo}{Theorem}
\newtheorem{prop}{Proposition}
\newtheorem{lem}{Lemma}
\newtheorem{Assumption}{Assumption}

\newtheorem{alg}{Algorithm}
\newtheorem{rem}{Remark}

\title{Improving reinforcement learning algorithms:\\
 towards optimal learning rate policies}
\author{Othmane Mounjid\thanks{\'{E}cole Polytechnique, CMAP} , Charles-Albert Lehalle \thanks{Capital Fund Management, Paris and Imperial College, London}{}}
\date{\today\\}
\begin{document}
\maketitle

\begin{abstract}
This paper investigates to what extent one can improve reinforcement learning (RL) algorithms. Our study is split in three parts. First, our analysis shows that the classical asymptotic convergence rate
$O(1/\sqrt{N})$ is pessimistic and can be replaced by $O((\log(N)/N)^{\beta})$ with
$\frac{1}{2}\leq \beta \leq 1$, and $N$ the number of iterations. Second, we propose a dynamic
optimal policy for the choice of the learning rate $(\gamma_k)_{k\geq 0}$ used in RL. We decompose our policy into two interacting levels: the inner and the outer
level. In the inner level, we present the \nameref{Alg:v_4_s} algorithm (for ``PAst
  Sign Search'') which, based on a predefined sequence $(\gamma^o_k)_{k\geq 0}$, constructs a new sequence $(\gamma^i_k)_{k\geq 0}$ whose error decreases faster. In the outer level, we propose an optimal methodology for the selection of the predefined sequence $(\gamma^o_k)_{k\geq 0}$. Third, we show empirically that our selection methodology of the learning rate outperforms significantly standard algorithms used in RL for the three following applications: the estimation of a drift, the optimal placement of limit orders, and the optimal execution of large number of shares.
\end{abstract}

\section{Introduction}
\label{sec:Intro}

We consider a discrete state space $\mathcal{Z} = \mathbb{N}$ or $\mathcal{Z} = \{1,\ldots,d\}$ with $d \in \mathbb{N}^*$. We are interested in finding $q^* \in \mathcal{Q} \subset \mathbb{R}^{\mathcal{Z}}$ solution of
\begin{equation}
M(q,z) = \Esp[m(q,X(z),z)] = 0, \qquad \forall z \in \mathcal{Z},
\label{Eq:init_pbm}
\end{equation}
where $X(z) \in \mathcal{X}$ is a random variable with an unknown distribution, and $m$ is a function from $\mathcal{Q} \times \mathcal{X} \times \mathcal{Z}$ to $\mathbb{R}$. Although the distribution of $X(z)$ is unspecified, we assume that we can observe some variables $(Z_n)_{n\geq 0}$ valued in $\mathcal{Z}$ and $\big(X_{n+1}(Z_{n})\big)_{n\geq 1}$ drawn from the same distribution of $X(Z_{n})$. Reinforcement learning (RL) addresses this problem through the following iterative procedure:
\begin{equation}
q_{n+1}(Z_n) = q_n (Z_n) - \gamma_n(Z_n) m(q_n,X_{n+1}(Z_n),Z_n), \quad \forall n \geq 0,
\label{Eq:RobinM_RL}
\end{equation}
where $q_0$ is a given initial condition, and each $\gamma_n$ is a component-wise non-negative vector valued in $\mathbb{R}^{\mathcal{Z}}$. The connection between RL, problem \eqref{Eq:init_pbm}, and Algorithm \eqref{Eq:RobinM_RL} is detailed in Section \ref{sec:rel_RL_SA}. It is possible to recover the classical SARSA, $Q$-learning, and double $Q$-learning algorithms used in RL by taking a specific expression for $m$ and $X_{n+1}$. Note that Algorithm \eqref{Eq:RobinM_RL} is different from the standard Robbins-Monro (RM) algorithm used in stochastic approximation (SA)
\begin{equation}
q_{n+1} =  q_n - \gamma_n \bar{m}(q_n,X_{n+1}),
\label{Eq:RobinM}
\end{equation}
with $\bar{m} \in \mathbb{R}^{\mathcal{Z}}$ whose $z$-th coordinate is defined such that $\bar{m}(q,x)(z) = m(q,x(z),z)$ for any $z \in \mathcal{Z}$ and $\gamma_n \geq 0$, mainly because, as it is frequent in RL, we do not observe the entire variable $\big(X_{n+1}(z)\big)_{z\in \mathcal{Z}})$ but only its value according to the coordinate $Z_n$. Indeed, the way $(Z_n)_{n\geq 1}$ visits the set $\mathcal{Z}$ plays a key role in the convergence of Algorithm \eqref{Eq:RobinM_RL}. RM algorithm was first introduced by Robbins and Monro in \cite{robbins1951stochastic}. After that, it was studied by many authors who prove the convergence of $q_n$ towards $q^*$, see \cite{benveniste1987and,bertsekas1996neuro,blum1954approximation,kushner1978stochastic}. The asymptotic convergence rate has also been investigated in many papers, see \cite{benveniste1987and,kushner2003stochastic,sacks1958asymptotic}. They show that this speed is in general proportional to $1/{\sqrt{N}}$ with $N$ the number of iterations.\\

In this work, we give a special focus to RL problems. Nowadays RL cover a very wide collection of recipes to solve control problems in an exploration-exploitation context. This literature started in the seventies, see \cite{watkins1989learning,werbos1987building}, and became famous mainly with the seminal paper of Sutton, see \cite{sutton1998introduction}. It largely relied on the recent advances in the control theory developed in the late 1950s, see \cite{bellman1959functional}. The key tool borrowed from this theory is the dynamic programming principle satisfied by the value function. This principle enables us to solve control problems numerically when the environment is known and the dimension is not too large. To tackle the curse of dimensionality, recent papers, see \cite{sirignano2018dgm}, use deep neural networks (DNN). For example, in \cite{hutchinson1994nonparametric}, authors use DNN to derive optimal hedging strategies for finance derivatives and in \cite{manziukAlgorithmicTradingReinforcement2019} they use a similar method to solve a high dimensional optimal trading problem. To overcome the fact that environment is unspecified, it is common to use an RM type algorithm which estimates on-line quantities of interest. The combination of control theory and SA techniques gave birth to numerous papers on RL.\\

Our contributions are as follows. 
\begin{itemize}
    \item First, we conduct an error analysis to show that the classical asymptotic rate $O(1/\sqrt{N})$ is pessimistic and can be enhanced in many situations. For this, we borrow tools from the statistical learning theory and show how to use them in a RL setting to get a $O((\log(N)/N)^{\beta})$ asymptotic speed with $1/2 \leq \beta \leq 1$ and $N$ the number of iterations.
    \item Second, we propose a dynamic policy for the choice of the step size $(\gamma_k)_{k\geq 0}$ used in \eqref{Eq:RobinM_RL}. Our policy is decomposed into two interacting levels: the inner and the outer level. In the inner level, we introduce the \nameref{Alg:v_4_s} algorithm, for ``PAst Sign Search''. This algorithm builds a new sequence $(\gamma^i_k)_{k\geq 0}$, using a predefined sequence $(\gamma^o_k)_{k\geq 0}$ and the sign variations of $m(q_n,X_{n+1}(Z_n),Z_n)$. The error of $(\gamma^i_k)_{k\geq 0}$ decreases faster than the one of $(\gamma^o_k)_{k\geq 0}$. In the outer level, we present an optimal dynamic policy for the choice of the predefined sequence $(\gamma^o_k)_{k\geq 0}$. These two levels are interacting in the sense that \nameref{Alg:v_4_s} influences the construction of $(\gamma^o_k)_{k\geq 0}$.
    \item Third, convergence of \nameref{Alg:v_4_s} algorithm is established and error bounds are provided.
    \item Finally, we show that our selection methodology provides better convergence results than standard RL algorithms in three numerical examples: the drift estimation, the optimal placement of limit orders, and the optimal execution of a large number of shares. When needed the proofs of convergence of our numerical methods are given.
\end{itemize}  

The structure of this paper goes as follows: Section \ref{sec:rel_RL_SA} describes the relation between RL and Equation \eqref{Eq:init_pbm}. Section \ref{sec:conv:rate} reformulates \eqref{Eq:init_pbm} as an optimization problem and defines with accuracy the different sources of error. This enables us to derive the convergence speed of RL algorithms. Section \ref{sec:optim:rate} contains our adaptive learning rate algorithm. Finally, Section \ref{sec:examples} provides numerical examples taken from the optimal trading literature: optimal placement of a limit order, and the optimization of the trading speed of a liquidation algorithm. Proofs and additional results are relegated to an appendix.

\section{Reinforcement learning}
\label{sec:rel_RL_SA}

We detail in this section the relation between \eqref{Eq:init_pbm} and RL since we are interested in solving RL problems. RL aims at estimating the $Q$-function which quantifies the value for the player to choose the action $a$ when the system is at $s$. Let $t$ be the current time, $U_t \in \mathcal{U}$ be a process defined on a filtered probability space $(\Omega,\mathcal{F},\mathcal{F}_t,\mathbb{P})$ which represents the current state of the system, and $A_t \in \mathcal{A}$ the agent action at time $t$. We assume that the process $(U_t,A_t)$ is Markov. The agent aims at maximizing
\begin{align*}
\Esp[\int_{0}^T \rho^s f(s,U_s,A_s) \, ds + \rho^{T}g(U_T)], \numberthis \label{Eq:OjectFunc}
\end{align*}
with $g$ the terminal constraint, $f$ the instantaneous reward, $\rho$ a discount factor, and $T$ the final time. Let us fix a time step $\Delta > 0$ and allow the agent to take actions only at times\footnote{We recall the following classical result: when $\Delta$ goes to zero, the value function and the optimal control of this problem converges towards the one where decisions are taken at any time.} $k\Delta$ with $k\in \mathbb{N}$. The $Q$-function is defined as follows:
\begin{align*}
Q(t,u,a) = \sup_{A } \Esp_{A}[\int_{t}^T \rho^{(s-t)}f(s,U_s,A_s) \, ds + \rho^{(T-t)}g(U_T) |U_t =u,A_t = a], 
\end{align*}
with $(t,u,a)\in \mathbb{R}_{+} \times \mathcal{U} \times \mathcal{A}$, $A = \{A_t \, , t < T\}$ a possible control process for the agent. Note that the action of the agent depend on $s = (t,u)$ with $t$ the current time and $u$ the current state. We view the agent control $A$ as a feedback process (i.e adapted to the filtration $\mathcal{F}_t$). The $Q$-function satisfies the classical dynamic programming principle (DPP)  
\begin{align*}
Q(t,u,a) =   \Esp [ R_{t+\Delta} + \rho^{\Delta} \underset{a' \in \mathcal{A}}{\sup} \, Q(t+\Delta,U_{t+\Delta},a')|U_t = u, A_t = a], \numberthis \label{Eq:Bellman}
\end{align*}
with $ R_{t+\Delta} = \int_{t}^{t + \Delta} \rho^{(s-t)}f(s,U_s,A_s)\, ds$. Equation \eqref{Eq:Bellman} shows that the optimal expected gain when the agent starts at $s$ and chooses action $a$ at time $t$ is the sum of the next expected reward $R_{t+\Delta}$ plus the value of acting optimally starting from the new position $U_{t+\Delta}$ at time $t+\Delta$. By reformulating \eqref{Eq:Bellman}, we obtain that $Q$ solves the following equation:
\begin{equation}
\Esp[m(q,X(z),z)] = 0, \qquad \forall z = (t,u,a) \in \mathcal{Z} = [0,T] \times \mathcal{U} \times \mathcal{A}, \label{Eq:Rel_RL_SA}
\end{equation}
where 
\begin{itemize}
    \item $X(z) = (U^{z}_{t+\Delta},R^{z}_{t+\Delta}) \in \mathcal{X} = \mathcal{U} \times \mathbb{R}$, $U^{z}_{s}$ and $R^{z}_{s}$ are respectively the conditional random variables $U_s$ and $R_s$ given the initial condition $(U_t,A_t) = (u,a)$ with $z = (t,u,a) \in \mathcal{Z}$.
    \item $m$ is defined as follows:
$$
m(q,x,z^1) = H(q,x,z^1) - q(z^1), \qquad H(q,x,z^1) = r + \rho^{\Delta} \underset{a' \in \mathcal{A} }{\sup} \, q(t^1+\Delta,u,a'),
$$
for any $x = (u,r) \in \mathcal{X}$, and $z^1 = (t^1,u^1,a^1)  \in \mathcal{Z}$.
\end{itemize} 

 Thus, one can use \eqref{Eq:init_pbm} to solve \eqref{Eq:Rel_RL_SA}.

Note that Equation \eqref{Eq:Rel_RL_SA} shows that one can study $Q$ only on the time grid\footnote{Here, we take $T = n^* \Delta$ with of $n^* \in \mathbb{N}^*$. Such approximation is not restrictive.} $D_T = \{n\Delta, \, n \leq T/\Delta\}$. Thus, we define $A_k$ and $U_k$ such that $A_k = A_{k\Delta}$ and $U_k = U_{k \Delta}$ for any $k \in \mathbb{N}$. The key variable to study is not the agent decision $A_{k}$ but $Z_k =(k,U_k,A_k)$. Thus, the rest of the paper formulates the results in terms of $Z_k$ only.

\paragraph{Actions of the agent.} It is important in practice to visit the space $D_T \times \mathcal{U} \times \mathcal{A}$ sufficiently enough. Thus, to learn $Q$, it is common to not choose the maximising action\footnote{The maximising action $a^*$ for a state $u$ is defined such that $a^* = \arg\max_{a \in \mathcal{A}} q(u,a)$.}, but to encourage exploration by visiting the states where the error is large. We give in Appendix \ref{sec:actions_agent} examples of policies that promote exploration and others that maximize the $Q$-function. 

\begin{rem} In general, a solution $q^*$ of \eqref{Eq:init_pbm} does not necessarily solve an optimization problem in the form of \eqref{Eq:OjectFunc}. However, when $M$ can be written as the gradient of some given function $f$ (i.e. $\nabla f =M$), $q^*$ becomes a solution of a problem in the form of \eqref{Eq:OjectFunc}.
\label{rem:rem1}
\end{rem}

\section{Improvement of the asymptotic convergence rate}
\label{sec:conv:rate}

In \cite[Part 2, Section 4]{benveniste2012adaptive}, \cite[Section 10]{kushner2003stochastic} and \cite[Section 7]{kushner1978stochastic}, the authors show a central limit theorem for the procedure \eqref{Eq:RobinM} which ensures a convergence rate of $O(1/\sqrt{N})$ where $N$ is the number of iterations. In this section, we extend such convergence rate to Algorithm \eqref{Eq:RobinM_RL} and aim at understanding how one can improve it. For this, we decompose our total error into two standard components: estimation error and optimization error. 
\subsection{Error decomposition}
\label{sec:2_1_error decomposition}
In this section, the space $\mathcal{Z} = \{1,\ldots,d\}$ is finite with $d \in \mathbb{N}^*$. In such case, we view $q$, and $M(q)$ as vectors of $\mathbb{R}^\mathcal{Z}$. Moreover, the process $(Z_n)_{n\geq  1}$ is an homogeneous Markov chain. We consider the following assumption.
\begin{Assumption}[Existence of a solution]
There exists a solution $q^*$ of Equation \eqref{Eq:init_pbm}.
\label{assump:assump__0}
\end{Assumption}
Under Assumption \ref{assump:assump__0}, the function $q^*$ is a solution to the minimization problem
\begin{align*}
\min_{q \in \mathcal{Q}} g(q) , \numberthis \label{Eq:Min_pbm}
\end{align*}
where $g$ can be selected as follows:
\begin{itemize}
    \item If $M$ can be written as the gradient of some function $f$, see Remark \ref{rem:rem1}, one can take $g = f$.
    \item Otherwise, it is always possible to set $g(q) = \|M(q)\|$. For simplicity, we place ourselves in this case for the rest of the section.
\end{itemize}

In our context we do not have a direct access to the distribution of $X(z)$. Nevertheless, we assume that at time $n$ we keep a memory of a training sample of $n(z)$ independent variables $(X_i^{z})_{i=1 \cdots n(z)}$ drawn from the distribution $X(z)$ where $n(z)$ is the number of times the Markov chain $Z_n$ visited $z$. We define $q^n$ as a solution of
\begin{equation}
\min_{q \in \mathcal{Q}} g_n(q), \label{eq:hatQ_equation}
\end{equation}
with $g_n(q) = \|M^n(q)\| $, and 
$$
M^n(q,z) = \Esp^n[ m(q,X(z),z) ] = \big(\sum_{j=1}^{n(z)} m(q,X_j(z),z)\big)/n(z),
$$
the expected value under the empirical measure $\mu = \big(\sum_{j=1}^{n(z)} \delta_{X_j(z)}\big)/n(z)$ (i.e. empirical risk). We finally define $q^n_k$ as an approximate solution of the problem \eqref{eq:hatQ_equation} returned by an optimization algorithm after $k$ iterations. Thus, we can bound the error $g(q^n_k)$ by
\begin{align*}
0 \leq \Esp\bigg[\bigg(g(q^n_k) - g(q^*) \bigg)(z)\bigg] \leq \underbrace{\Esp\bigg[\bigg(g(q^n) - g(q^*)\bigg)(z)\bigg]}_{\text{estimation error}} + \underbrace{\Esp\bigg[\bigg|g(q^n) - g(q^n_k)\bigg|(z)\bigg]}_{\text{optimization error}} ,
\end{align*}
since $q^*$ minimizes $g$.
\subsection{Convergence rate of the estimation error} 
\manuallabel{subsec:Estim_error}{3.2}
\subsubsection{Slow convergence rate}
We have the following result.
\begin{prop}We assume that the Markov chain $Z_n$ is irreducible. There exists $c_1>0$ such that 
\begin{align*}
 \Esp[ \sup_{q \in \mathcal{Q}} \big|g(q) - g_n(q)\big|] \leq c_1 \frac{1}{\sqrt{n}},\quad \forall z  \in \mathcal{Z}.
\end{align*}
\label{prop:prop_std_unif_bd}
\end{prop}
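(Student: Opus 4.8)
The plan is to reduce the claim to a uniform law of large numbers for the empirical process associated with each coordinate, and then to absorb the randomness of the number of visits $n(z)$ through a concentration argument for the occupation counts of the chain. First I would use the reverse triangle inequality for the norm $\|\cdot\|$ on $\mathbb{R}^{\mathcal{Z}}$, namely $|g(q)-g_n(q)| = \big|\,\|M(q)\| - \|M^n(q)\|\,\big| \leq \|M(q)-M^n(q)\|$, and then, since $\mathcal{Z}$ is finite and all norms on $\mathbb{R}^{\mathcal{Z}}$ are equivalent, bound $\|M(q)-M^n(q)\|$ by a constant times $\sum_{z\in\mathcal{Z}} |M(q,z)-M^n(q,z)|$. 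Taking the supremum over $q$ and then the expectation, it suffices to control, for each fixed $z$, the quantity $\Esp\big[\sup_{q\in\mathcal{Q}} |M(q,z)-M^n(q,z)|\big]$ and to sum the finitely many resulting bounds.

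For a fixed $z$, I would condition on the event $\{n(z)=k\}$. By construction the memorised sample $(X_j(z))_{j=1,\dots,k}$ is then i.i.d.\ with the law of $X(z)$, so that $M^n(q,z)$ is an empirical mean of $k$ i.i.d.\ terms and $M(q,z)-M^n(q,z)$ is a centred empirical process indexed by $q$. Since $\mathcal{Q}$ is a bounded subset of the finite-dimensional space $\mathbb{R}^{\mathcal{Z}}$ and $q\mapsto m(q,x,z)$ is regular (bounded and Lipschitz in $q$), the class $\{x\mapsto m(q,x,z):q\in\mathcal{Q}\}$ has finite complexity; a symmetrisation step followed by a Rademacher/Dudley-entropy (or VC) bound then yields $\Esp\big[\sup_{q}|M(q,z)-M^n(q,z)|\ \big|\ n(z)=k\big]\leq C_z/\sqrt{k}$. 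This is the classical $1/\sqrt{k}$ rate of empirical risk minimisation in the i.i.d.\ regime.

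It remains to integrate this conditional bound against the law of $n(z)$, i.e.\ to show $\Esp[1/\sqrt{n(z)}]\leq c_z/\sqrt{n}$. Here I would use that an irreducible chain on a finite state space is positive recurrent, with stationary distribution $\pi$ satisfying $\pi(z)>0$, and that the occupation count $n(z)=\sum_{i\leq n}\mathbf{1}_{Z_i=z}$ concentrates around $\pi(z)n$: an exponential deviation inequality for additive functionals of a finite irreducible chain (via the spectral gap, or via regeneration at a fixed state) gives $\mathbb{P}\big(n(z)\leq \tfrac12\pi(z)n\big)\leq e^{-c n}$ for some $c>0$. Splitting the expectation on this event and its complement, the typical event contributes at most $\sqrt{2/(\pi(z)n)}$, while the rare event contributes at most $\mathbb{P}\big(n(z)\leq \tfrac12\pi(z)n\big)$ (using $n(z)\geq 1$, the possibility $n(z)=0$ being itself exponentially unlikely and absorbed by a crude bound), which is $o(1/\sqrt n)$. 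Combining the two terms gives $\Esp[1/\sqrt{n(z)}]\leq c_z/\sqrt n$, and summing over the finitely many $z$ with $c_1=\sum_z C_z c_z$ completes the proof.

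The routine part is the i.i.d.\ empirical-process bound of the second paragraph. The delicate point is the third paragraph: the samples are collected \emph{along a Markov trajectory}, so $n(z)$ is random, and obtaining the clean $1/\sqrt n$ rate requires transferring the i.i.d.\ bound through $\Esp[1/\sqrt{n(z)}]$ without degrading it. This is precisely where irreducibility enters, through the concentration of the occupation counts; controlling the contribution of the atypical event on which $n(z)$ is small (in particular the possibility that $z$ has not yet been visited) is the step that needs the most care.
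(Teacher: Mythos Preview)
Your approach is essentially the same as the paper's: both reduce to a coordinate-wise bound via the reverse triangle inequality and norm equivalence on $\mathbb{R}^{\mathcal{Z}}$, condition on $n(z)$ to invoke an i.i.d.\ uniform-convergence bound $\Esp[\sup_q|M(q,z)-M^n(q,z)|\mid n(z)]\leq C/\sqrt{n(z)}$, and then control $\Esp[\sqrt{n/n(z)}]$ using the positive recurrence of the irreducible chain on the finite space $\mathcal{Z}$. The paper treats this last step more tersely---it argues that $\Esp[\sqrt{n/(n(z)\vee 1)}]\to 1/\sqrt{\pi(z)}$ by the ergodic theorem and hence is bounded---whereas your concentration-inequality splitting on $\{n(z)\leq \tfrac12\pi(z)n\}$ is a more explicit (and more careful) route to the same bound.
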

For sake of completeness, we give the proof of this result in Appendix \ref{app:proof_std_unif_bd}. Proposition \ref{prop:prop_std_unif_bd} allows us to derive the following bound for the estimation error
\begin{align*}
\Esp[\big(g(q^n) - g(q^*)\big)] & = \Esp\big[\big(g(q^n) - g_n(q^n)\big)\big] + \underbrace{\Esp\big[\big(g_n(q^n) - g_n(q^*)\big)\big]}_{\leq 0} + \Esp\big[\big(g_n(q^*) - g(q^*)\big)\big] \\
    & \leq 2 \Esp\big[ \sup_{q} \big|g(q) - g_n	(q)\big|\big] \leq 2 c_1 \frac{1}{\sqrt{n}}. \numberthis \label{Eq:est_error_bound}
\end{align*}
This bound is known to be pessimistic.
\subsubsection{Fast convergence rate}
We obtain the following fast statistical convergence rate.
\begin{prop} Assume that the Markov chain $Z_n$ is irreducible, and
\begin{align*}
\Esp\big[ \sup_{q \in \mathcal{Q}} \big|M(q,z) - M_n(q,z)\big|\,\big|n(z)\big] \leq c' \left(\frac{\log(\bar{n}(z))}{\bar{n}(z)}\right)^{\beta}, \numberthis \label{Eq:fast_bound_cond}
\end{align*}
with $\frac{1}{2} \leq \beta \leq 1$, $c'>0$, and $\bar{n}(z) = n(z) \wedge 1 $. Then, there exists $c_2 > 0$ such that
\begin{align*}
\Esp[\sup_{q \in \mathcal{Q}} \big|g(q^n) - g(q^*)\big|] \leq c_2 \left(\frac{\log(n)}{n}\right)^{\beta},\quad \forall z  \in \mathcal{Z}.
\end{align*}
\label{prop:prop_fast_unif_bd}
\end{prop}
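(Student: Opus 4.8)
The plan is to reproduce the argument that produced the slow bound \eqref{Eq:est_error_bound}, feeding it the sharp hypothesis \eqref{Eq:fast_bound_cond} in place of Proposition \ref{prop:prop_std_unif_bd}, and then to integrate out the random visit count $n(z)$ using irreducibility of $Z_n$. Writing $g(q)(z) = |M(q,z)|$ and $g_n(q)(z) = |M_n(q,z)|$, the same three-term split as in the derivation of \eqref{Eq:est_error_bound}, together with $g_n(q^n)(z) \leq g_n(q^*)(z)$ (because $q^n$ minimises $g_n$), gives
\begin{align*}
0 \leq \Esp\big[\big(g(q^n) - g(q^*)\big)(z)\big] \leq 2\,\Esp\big[\sup_{q\in\mathcal{Q}} \big|g(q) - g_n(q)\big|(z)\big].
\end{align*}
Hence it suffices to control the uniform deviation of $g_n$ from $g$ at the fast rate.

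First I would replace $g,g_n$ by $M,M_n$. Since the coordinates of $g$ and $g_n$ are the absolute values of those of $M$ and $M_n$, the reverse triangle inequality gives $\big|g(q) - g_n(q)\big|(z) \leq \big|M(q,z) - M_n(q,z)\big|$ for every $q$, whence
\begin{align*}
\Esp\big[\sup_{q\in\mathcal{Q}} \big|g(q) - g_n(q)\big|(z)\big] \leq \Esp\big[\sup_{q\in\mathcal{Q}} \big|M(q,z) - M_n(q,z)\big|\big].
\end{align*}
The right-hand side is exactly what \eqref{Eq:fast_bound_cond} bounds, but only conditionally on $n(z)$. Conditioning and using the tower property turns the hypothesis into
\begin{align*}
\Esp\big[\sup_{q\in\mathcal{Q}} \big|M(q,z) - M_n(q,z)\big|\big] \leq c'\,\Esp\big[\phi(\bar n(z))\big], \qquad \phi(u) := \Big(\frac{\log u}{u}\Big)^{\beta},
\end{align*}
where $\bar n(z) = n(z)\vee 1$ is the reading for which $\phi$ is well defined. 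Everything now hinges on showing $\Esp[\phi(\bar n(z))] \leq C\,(\log(n)/n)^{\beta}$.

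For this I would use two elementary properties of $\phi$ on $[1,\infty)$: it is bounded, with maximum $\phi(e)=e^{-\beta}$, and it is nonincreasing on $[e,\infty)$ (since $\tfrac{d}{du}(\log u/u)=(1-\log u)/u^2 \leq 0$ there). Irreducibility of $Z_n$ on the finite set $\mathcal{Z}$ forces the stationary mass $\pi(z)$ to be strictly positive, so the occupation count concentrates around $\pi(z)\,n$. I would split the expectation on the event $E_n = \{n(z) \geq \tfrac12\pi(z)\,n\}$ and its complement. On $E_n$ the truncation plays no role and, by monotonicity, $\phi(\bar n(z)) \leq \phi(\tfrac12\pi(z)\,n) \leq (2/\pi(z))^{\beta}\,(\log(n)/n)^{\beta}$ once $\tfrac12\pi(z)\,n \geq e$. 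On $E_n^c$ I would bound $\phi$ by $e^{-\beta}$ and the probability by a large-deviation estimate for the visit count, $\mathbb{P}\big(n(z)<\tfrac12\pi(z)\,n\big) \leq C_0\,e^{-\lambda n}$, which is $o\big((\log(n)/n)^{\beta}\big)$. Summing the two contributions yields $\Esp[\phi(\bar n(z))] \leq C\,(\log(n)/n)^{\beta}$, and tracing constants back gives the claim with an explicit $c_2$.

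The main obstacle is exactly this large-deviation bound $\mathbb{P}\big(n(z)<\tfrac12\pi(z)\,n\big)\leq C_0\,e^{-\lambda n}$, which must hold uniformly in the unspecified initial law of $Z_0$; the delicate point is that neither the law of $X(z)$ nor the chain's starting distribution is known. I would derive it from a regeneration (excursion) decomposition of the irreducible finite chain, or from a Chernoff-type concentration inequality for additive functionals of Markov chains, ensuring that $C_0$ and $\lambda$ depend only on the transition kernel and on $\pi(z)$. A minor bookkeeping difficulty remains in checking that the exponential tail is dominated by the polynomial–logarithmic main term for every $n$ and not merely asymptotically; this is handled by enlarging $c_2$ to absorb the finitely many small values of $n$.
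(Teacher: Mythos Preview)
Your argument is correct and tracks the paper's proof closely: the same three-term split reduces matters to a uniform-deviation bound on $|M-M_n|$, conditioning on $n(z)$ lets the hypothesis \eqref{Eq:fast_bound_cond} enter, and the tower property leaves only $\Esp[\phi(\bar n(z))]$ to be controlled. The only divergence is in this last step. The paper factors
\[
\phi(\bar n(z)) = \Big(\frac{\log n}{n}\Big)^{\beta}\cdot \Big(\frac{\log \bar n(z)/\log n}{\bar n(z)/n}\Big)^{\beta},
\]
bounds the inner logarithm ratio by $1$ (since $\bar n(z)\le n$), and then invokes the ergodic limit $n(z)/n\to p(z)$ a.s.\ to claim that $\Esp\big[(n/\bar n(z))^{\beta}\big]$ converges to $p(z)^{-\beta}$ and is therefore bounded by some $v_\infty(z)$; this gives $c_2(z)=c'v_\infty(z)$. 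Your large-deviation split on $\{n(z)\ge \tfrac12\pi(z)n\}$ is a legitimate alternative: it bypasses the implicit uniform-integrability step (a.s.\ convergence of $n(z)/n$ does not by itself yield convergence of $\Esp[(n/\bar n(z))^{\beta}]$) at the price of invoking a concentration inequality for additive functionals of a finite irreducible chain. The paper's route is shorter and avoids the ``main obstacle'' you flag; yours is more self-contained and does not leave that integrability passage unjustified.
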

The proof of this proposition is given in Appendix \ref{app:proof_fast_unif_bd}. Since the conclusion of Proposition \ref{app:proof_fast_unif_bd} relies on the condition \eqref{Eq:fast_bound_cond}, we give below two settings under which this condition is fulfilled.

\paragraph{Fast convergence rate for classification problems.}
It is possible to establish \eqref{Eq:fast_bound_cond} for classification problems when
\begin{itemize}
\item The loss function $g$ satisfies regularity conditions, of which the most important are : Lipschitz continuity and convexity, see \cite[Section 4]{bartlett2006convexity}. 
\item The data distribution satisfies some noise conditions, see for instance \cite{tsybakov2004optimal}.
\item The loss function has a bounded moment $\alpha$ with $\alpha > 1$, see \cite[Section 4]{cortes2019relative}.
\end{itemize}
It is also possible to get rid of the $\log(n)$  factor in \eqref{Eq:fast_bound_cond}, see the end of Section 5 in \cite{bousquet2003introduction}.

\paragraph{Fast convergence rate for reinforcement learning.} Since numerical examples focus on RL applications. We propose to derive Inequality \eqref{Eq:fast_bound_cond} for RL problems. To do so we adopt the same notations of Section \ref{sec:rel_RL_SA}.

Let $t$ be the current time. We denote by $A = \{A_t, t<T\}$ the control process of the agent. Since we work under a Markov setting, $A_t$ depends only on $t$, and the current state of the system $U_t$. Moreover, we assume that the agent can take at most a finite set of actions which means $N_A = |\mathcal{A}|<\infty$. For each $A$, we define $q_A$ as follows:
$$
q_A(t,u,a) = \Esp_{A}[\int_{t}^T \rho^{(s-t)}f(s,U_s,A_s) \, ds + \rho^{(T-t)}g(U_T) |U_t =u,A_t = a],
$$
for any $(t,u,a)\in \mathbb{R}_{+} \times \mathcal{U} \times \mathcal{A}$. By definition of $q_A$, we have 
\begin{equation}
q_A(t,u,a) \leq   \Esp [ R_{t+\Delta} + \rho^{\Delta} \underset{a' \in \mathcal{A}}{\sup} \, q_A(t+\Delta,U_{t+\Delta},a')|U_t = u, A_t = a],
\label{ineq:diffusionSubOpti}
\end{equation}
where $R_{t+\Delta}$, and $\rho^{\Delta}$ are defined in \eqref{Eq:Bellman}. Moreover, DPP ensures that the $Q$-function achieves equality in \eqref{ineq:diffusionSubOpti}. Thus, we can define the following loss:
\begin{equation}
g(A) = \sum_{z \in \mathcal{Z}} \Esp[ m(q_A,X(z),z) ],
\end{equation}
for any control process $A$, with $m$, and $X$ introduced in \eqref{Eq:Rel_RL_SA}. Note that this setting is very similar to the classification problem one 
\begin{itemize}
    \item The variable $X$ plays the role of the input variable.
    \item The control process $A$ can be assimilated to the function to learn. Since there are finite numbers of time steps and actions to perform, the strategy $A$ should predict a finite set of options that can be interpreted as labels.
\end{itemize}
Thus, we can try to apply the same techniques and recover similar bounds. Such results are given in the proposition below.

\begin{prop}
Let $B>0$. We define the class of controls $\mathbf{A}_B$ such that
    $$
    \mathbf{A}_B = \{ A; \,\, |m(q_A,x,z)| \leq B, \, \forall (x,z) \in \mathcal{X} \times \mathcal{Z}\}.
    $$
\begin{itemize}
    \item 
    Then, there exists a constant $C>0$ such that with probability at least $1-\delta$,
    $$
    M(q_A,z) - M_n(q_A,z) \leq C \bigg( \cfrac{\log(n(z))}{n(z)} \Var(A,z) + \cfrac{ \log(1/\delta)+ \log(\log(n(z)))}{n(z)}\bigg),
    $$
    with $\Var(A,z)$ the variance of the random variable $m(q_A,X(z),z)$.
    \item If in addition there exists $c>0$, and $\beta \in [0,1]$ such that 
    $$
    \Var(A,z) \leq c M(q_A,z)^\beta, \quad \forall A \in \mathbf{A}_B,
    $$
    then with probability at least $1-\delta$,
    $$
    M(q_A,z) - M_n(q_A,z) \leq C \bigg( \bigg(\cfrac{\log(n(z))}{n(z)}\bigg)^{1/(2-\beta)} + \cfrac{ \log(1/\delta)+ \log(\log(n(z)))}{n(z)}\bigg).
    $$

\end{itemize}
\label{prop:fastConvRateRl}
\end{prop}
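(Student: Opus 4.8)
The plan is to establish the two bounds by appealing to concentration inequalities for empirical means, where the key tool is Bernstein's inequality applied to the centered random variables. For a fixed control $A \in \mathbf{A}_B$ and a fixed state $z \in \mathcal{Z}$, recall that $M(q_A,z) = \Esp[m(q_A,X(z),z)]$ and $M_n(q_A,z)$ is its empirical counterpart over the $n(z)$ samples drawn at $z$. Setting $W_j = m(q_A,X_j(z),z)$, these are i.i.d.\ bounded random variables with $|W_j| \leq B$ (by definition of $\mathbf{A}_B$) and variance $\Var(A,z)$. The deviation $M(q_A,z) - M_n(q_A,z)$ is exactly the centered empirical average, so Bernstein's inequality gives, with probability at least $1-\delta$, a bound of the order
$$
\sqrt{\frac{2\,\Var(A,z)\,\log(1/\delta)}{n(z)}} + \frac{B\,\log(1/\delta)}{3\,n(z)}.
$$
To obtain the stated form with the $\log(n(z))$ factor and the extra $\log(\log(n(z)))$ term, I would apply a \emph{peeling} (stratification) argument over the unknown sample count $n(z)$, or equivalently replace $\delta$ by $\delta/\log(n(z))$ to handle the random, data-dependent number of visits by a union bound across dyadic scales; this is the standard device that produces the $\log(\log(n(z)))$ correction. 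Absorbing the $B$ constant into $C$ and bounding the square-root term using the elementary inequality $\sqrt{ab} \leq a + b$ (applied with $a \sim \Var(A,z)\log(n(z))/n(z)$) then yields the first displayed inequality.

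For the second part, I would substitute the hypothesis $\Var(A,z) \leq c\, M(q_A,z)^{\beta}$ into the first bound. This produces a self-referential inequality in which $M(q_A,z)$ appears on both sides: schematically,
$$
M(q_A,z) - M_n(q_A,z) \leq C\Big( \frac{\log(n(z))}{n(z)} M(q_A,z)^{\beta} + \frac{\log(1/\delta)+\log(\log(n(z)))}{n(z)}\Big).
$$
The resolution proceeds by balancing the two terms: treating $x = M(q_A,z)$ as the unknown and using Young's inequality $uv \leq u^p/p + v^{q}/q$ with the conjugate exponents $p = 2-\beta$ and its dual, one decouples the $M(q_A,z)^{\beta}$ factor so that the leading contribution scales like $(\log(n(z))/n(z))^{1/(2-\beta)}$. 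Concretely, writing $a = \log(n(z))/n(z)$, the term $a\, x^{\beta}$ is dominated by $\tfrac{1}{2}x + C' a^{1/(1-\beta)}$ when $\beta<1$, and after moving the $\tfrac12 x$ contribution appropriately one verifies that the exponent collapses to $1/(2-\beta)$; the edge cases $\beta = 0$ and $\beta = 1$ are checked separately and are consistent with the first bound.

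The main obstacle I anticipate is not the algebra of the exponent but the correct handling of the random sample size $n(z)$: because $n(z)$ is itself a random variable determined by the Markov chain's visits and the concentration statement is conditional on it, the peeling argument must be set up carefully so that the union bound over scales is valid and produces exactly the $\log(\log(n(z)))$ term rather than a larger correction. A secondary delicate point is the self-bounding step in the second part, where one must ensure the constant $C$ is chosen so that the fraction of $M(q_A,z)$ absorbed on the right does not exceed $1$, so that the rearrangement into $M(q_A,z) - M_n(q_A,z)$ remains valid; this requires a slightly careful choice of the balancing split in Young's inequality rather than the naive one.
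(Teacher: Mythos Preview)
The paper does not actually supply a proof of this proposition: immediately after the statement it writes ``The main steps of Proposition's proof are given in \cite[Theorem 8]{bousquet2003introduction}'' and stops there. Your plan---Bernstein's inequality for the bounded i.i.d.\ variables $m(q_A,X_j(z),z)$, a peeling/union-bound argument across dyadic scales of the sample count to generate the $\log(\log n(z))$ correction, and then the self-bounding resolution via Young's inequality to extract the exponent $1/(2-\beta)$---is exactly the standard route behind that cited theorem, so your approach is correct and coincides with what the paper is invoking.

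One minor point worth tightening: the passage from the Bernstein square-root term $\sqrt{\Var(A,z)\log(1/\delta)/n(z)}$ to the product form $\tfrac{\log n(z)}{n(z)}\Var(A,z)$ in the first bullet is not obtained by $\sqrt{ab}\le a+b$ alone (that would give $\Var(A,z)+\log n(z)/n(z)$, not their product). In the Bousquet--Boucheron--Lugosi argument this step relies on the nonnegativity of the excess quantity $M(q_A,z)$ (cf.\ inequality \eqref{ineq:diffusionSubOpti}), which lets one absorb part of the square-root term into the left-hand side before squaring; your proposal already contains the right idea in the second bullet, so you just need to apply the same self-bounding trick one step earlier.
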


The main steps of Proposition's \ref{prop:fastConvRateRl} proof are given in \cite[Theorem 8]{bousquet2003introduction}. It is then standard to derive \eqref{Eq:fast_bound_cond} from Proposition \ref{prop:fastConvRateRl}.

\subsection{Convergence rate of the optimization error}
We turn now to the optimization error. This means that the expected value in \eqref{Eq:Min_pbm} is replaced by the empirical risk, which is known. In such case, one can use many algorithms to find $q_n$. We present in the table below the most important properties of some gradient methods.


{\def\O(#1){O\!\Big(#1\Big)} %
  \def\lO(#1){O\!\Big(\text{\large $\log$}\big(#1\big)\Big)} %
  \def\dlO(#1){O\!\Big(\text{\large $ d \log$}\big(#1\big)\Big)} %
\begin{table}[H]
\resizebox{\linewidth}{!}{
\begin{tabular}{|R{2 cm}|C{3 cm}|C{3 cm}|C{3 cm}|C{3 cm}|C{3 cm}|}
\hline
Algorithm & Cost of one iteration & \multicolumn{2}{|c|}{Iterations to achieve an $\epsilon$ precision }& \multicolumn{2}{|c|}{Time to reach an $\epsilon$ precision} \\
\hline 
 &  & Convex & Strongly convex & Convex & Strongly convex  \\
\hline
GD & $O(\text{\large $ d $}^2)$ & $\O(1/\text{\large$ \epsilon $})$ & {$\lO(1/\text{\large$ \epsilon $})$} & $\O(\text{\large$d$}^2/\text{\large$ \epsilon $})$ & $O\!\Big(\text{\large $ d $}^2\text{\large $ \log $}\big(1/\text{\large$ \epsilon $}\big)\Big)$ \\
\hline 
SGD & $O(\text{\large $ d $})$ & $\O(1/\text{\large$ \epsilon^2 $})$  & $\O(1/\text{\large$ \epsilon $})$   & $\O(1/\text{\large$ \epsilon^2 $})$  & $\O(1/\text{\large$ \epsilon $})$\\
\hline 
Proximal & $O(\text{\large $ d $})$ & $\O(1/\text{\large$ \epsilon $})$  & $\lO(1/\text{\large$ \epsilon $})$   & $\O(1/\text{\large$ \epsilon $})$  & $\dlO(1/\text{\large$ \epsilon $})$\\
\hline 
Acc. prox. & $O(\text{\large $ d $})$ & $\O(1/\text{\large$ \sqrt{\epsilon} $})$  & $\lO(1/\text{\large$ \epsilon $})$   & $\O(\text{\large $ d $}/\sqrt{\epsilon})$  & $\dlO(1/\text{\large$ \epsilon $})$\\
\hline 
SAGA & $O(\text{\large $ d $})$ & $\O(1/\text{\large$ \epsilon $})$  & $\lO(1/\text{\large$ \epsilon $})$   & $\O(1/\text{\large$ \epsilon $})$  & $\dlO(1/\text{\large$ \epsilon $})$\\
\hline 
SVRG & $O(\text{\large $ d $})$ &   & $\lO(1/\text{\large$ \epsilon $})$   &   & $\dlO(1/\text{\large$ \epsilon $})$\\
\hline 
\end{tabular}
}
\caption[Table \ref{Gradientalg}]{Asymptotic properties of some gradient methods.  Note that $d$ is the dimension of the state space $\mathcal{Z}$ and $\epsilon$ is a desired level of accuracy. Here $\epsilon$ corresponds to $1/n$. GD stands for Gradient Descent, SDG for Stochastic Gradient Descent, \emph{Proximal} for Stochastic proximal gradient descent \cite{combettes2011proximal,schmidt2011convergence}, \emph{Acc. prox.} for accelerated proximal stochastic gradient descent \cite{nitanda2014stochastic,schmidt2011convergence}, SAGA for Stochastic accelerated gradient approximation \cite{defazio2014saga}, and SVRG for stochastic variance reduced gradient \cite{johnson2013accelerating}.}
\label{Table:Opti_error_bounds_alg}
\end{table}}
\subsection{Conclusion}
Following the formalism of \cite{bottou2008tradeoffs}, we have decomposed our initial error into
\begin{itemize}
\item \textbf{Estimation error:} its convergence is $O(1/\sqrt{n})$ in pessimistic cases with $n$ the number of iterations. In the other situations, the convergence is faster (i.e. $O\big((\log(n)/n)^{\beta}\big)$) with $1/2 \leq \beta \leq 1$.
\item \textbf{Optimization error:} the convergence is exponential under suitable conditions. In unfavourable cases, the convergence rate is $O(1/n)$.
\end{itemize}
The comparison of these error sources shows that the estimation error is the dominant component. Thus, one can overcome the $O(1/\sqrt{n})$ asymptotic speed, in some situations, by improving the estimation error.

\section{Optimal policy for the learning rate $\gamma$}
\label{sec:optim:rate}

In this section, we take $\mathcal{Z} = \mathbb{N}$ and consider the following type of algorithms:
\begin{align*}
q_{n+1}(Z_n) = q_n (Z_n) - \gamma_n(Z_n) m(q_n,X_{n+1}(Z_n),Z_n), \qquad  \forall n \in \mathbb{N}.
\end{align*}
One can recover the classical SARSA, $Q$-learning, and double $Q$-learning algorithms used in RL by considering a specific expression for $m$ and $X_{n+1}$. In such algorithms the choice of $\gamma_n$ is crucial. One can find in the literature general conditions on $\gamma_n$ needed for convergence of \eqref{Eq:RobinM_RL} such as
\begin{align*}
\sum_{k\geq 0} \gamma_{k}(z) = \infty,\quad a.s,  \qquad \sum_{k\geq 0} \gamma_k^2(z) < \infty,\quad a.s, \quad \forall z  \in \mathcal{Z}.  \numberthis \label{Cond:conv_gamma}
\end{align*}
However, since the set of processes $(\gamma_n)_{n\geq 0}$ satisfying these conditions can be large, and even empty (when $(Z_n)_{n\geq 0}$ is not recurrent), many authors suggest to take $\gamma_n$ proportional to
${1}/{n^{\alpha}}$. The exponent $\alpha$ may vary from
$0$ to $1$ depending on the algorithm used, see \cite{gadat2017optimal,moulines2011non}. Nonetheless, such a choice may be suboptimal. For
example, Figure \ref{Fig:h_2_not_recurrent}.a shows that the blue curve is a way higher than
the orange one. Here, the blue (resp. orange) curve shows how the logarithm of the error varies with $n$ when $\gamma_n = {\eta}/{n}$ (resp. $\gamma_n$ is constant). The constant $\eta$ selected here ensures the fastest convergence for the blue curve.\\

In this paper, we propose to use a stochastic learning rate $(\gamma_k)_{k\geq 0}$; our
learning policy is decomposed into two interacting levels: the inner and the outer level. In
the inner level, we use the \nameref{Alg:v_4_s} algorithm, for ``PAst  Sign Search''. This algorithm
builds a new sequence $(\gamma^i_k)_{k\geq 0}$, based on a predefined sequence
$(\gamma^o_k)_{k\geq 0}$ and the sign variations of $m(q_n,X_{n+1}(Z_n),Z_n)$, whose error
decreases faster than the predefined one. In the outer level, we propose an optimal methodology
for the selection of the predefined sequence $(\gamma^o_k)_{k\geq 0}$. These
two levels are interacting in the sense that the \nameref{Alg:v_4_s} algorithm influences the construction of $(\gamma^o_k)_{k\geq 0}$.
\begin{figure}[H]
\centering
        \includegraphics[width=0.45\linewidth]{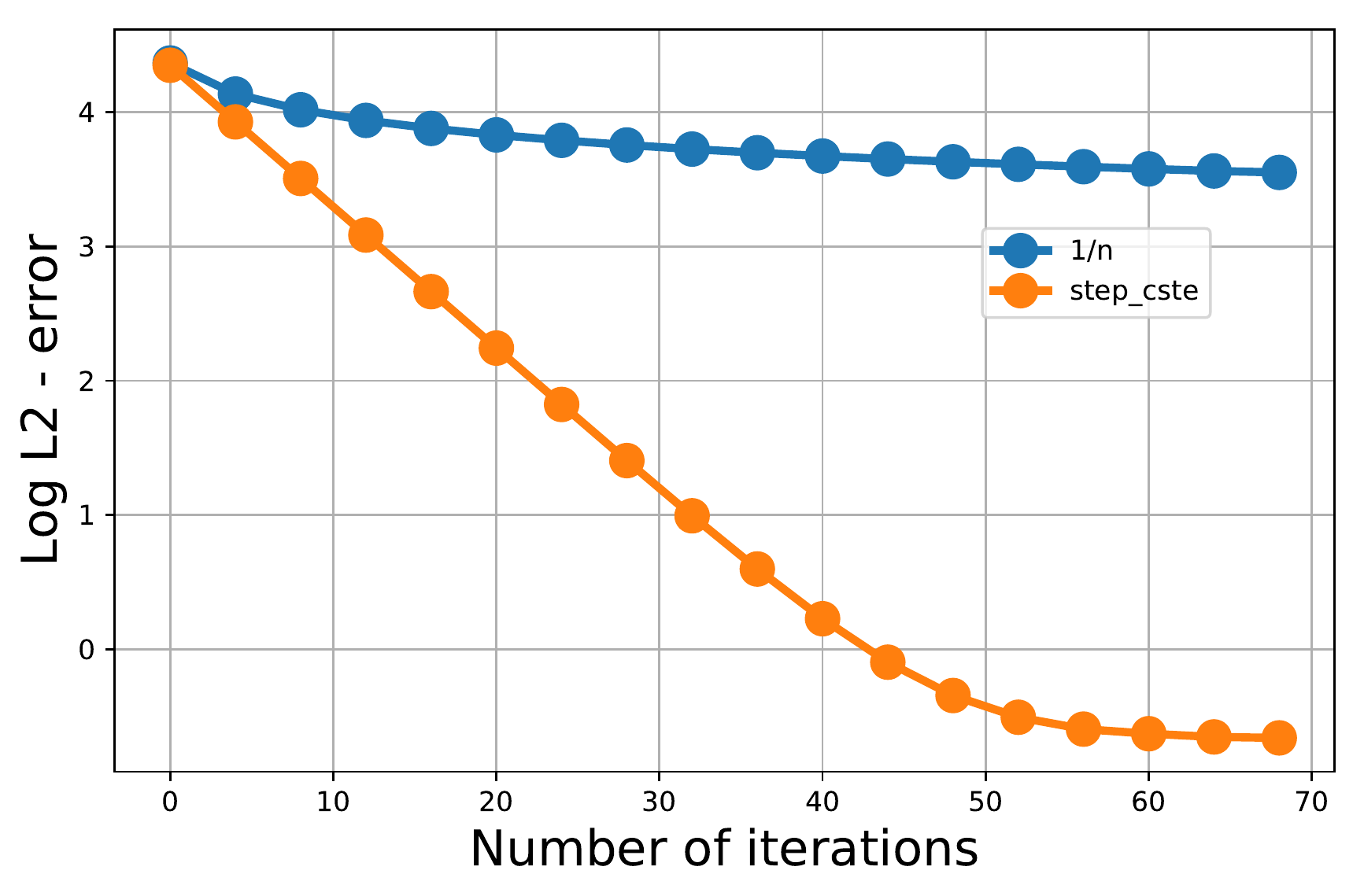}\\
	    \caption{$L^2$-error for the estimation of the drift when $\gamma_k$ is constant in orange and when $\gamma_k \propto \frac{1}{k}$ in blue.}
	     \label{Fig:h_2_not_recurrent}
\end{figure}

\subsection{The inner level}
\subsubsection{The algorithms}
In this part, we introduce three algorithms. We start with our benchmark which is the standard
algorithm used in RL. Then, we present a second algorithm inspired from SAGA
\cite{defazio2014saga}, which is a method used to accelerate the convergence of the stochastic
gradient descent. Under suitable conditions, SAGA has an exponential convergence. Finally, we describe the
\nameref{Alg:v_4_s} algorithm that modifies the learning rate $(\gamma_k)_{k \in
  \mathbb{N}}$ based on the sign variations of $m(q_n,X_{n+1}(Z_n),Z_n)$. The main idea is to
increase $\gamma_n$ as long as the sign of $m(q_n,X_{n+1}(Z_n),Z_n)$
remains unchanged. Then, we reinitialize or lower $\gamma_n$ using a predefined sequence
$(\gamma^o_k)_{k \in \mathbb{N}}$ when the sign of $m(q_n,X_{n+1}(Z_n),Z_n)$ switches. This
algorithm can be seen as an adaptation of the line search strategy, which determines the
maximum distance to move along a given search direction. Actually, the line search method requires a complete knowledge of the cost function because it demands to evaluate several times $g\big(q_k + \gamma
M(q^k)\big) - g\big(q_k\big)$ for different values of $\gamma$, with $g$ being the loss and $M$ representing a proxy of $\nabla g$. However, our approach has neither
access to $g$ nor $M$. It can only compute $m(q_n,X_{n+1}(Z_n),Z_n)$ when the state $z = Z_k$
is visited. Moreover, to get a new observation it needs to wait\footnote{This waiting time may
  be very long depending on the dimension of the state space $\mathcal{Z}$ and the properties
  of the process $(Z_k)_{k \geq 0}$.} for the next visit of the state $z = Z_k$. Nevertheless, it has instantaneous access to previously observed values. Thus, the main idea here is to use these past observations. Some theoretical properties of these algorithms are investigated in Section \ref{subsec:Main_res}. 
\begin{alg}[RL]We start with an arbitrary $q_0 \in \mathcal{Q}$ and define by induction $q_k$\footnote{\label{note1}Non-visited coordinates are not modified. For example, we set $q_{k+1}(z) = q_{k}(z)$ for all $z \ne Z_k$.} as follows:
\begin{align*}
\begin{array}{lcl}
q_{k+1}(Z_k) & = &  q_{k}(Z_k) - \gamma_k(Z_k) m(q_k,X_{k+1}(Z_k),Z_{k}).
\end{array}
\end{align*}

\label{Alg:v_1_s}
\end{alg}
\begin{alg}[SAGA] We start with an arbitrary $q_0 \in \mathcal{Q}$, $M_0 = 0$\footnote{Here $M_0$ is the zero function in the sense that $M_0[z,i] = 0$ for any $z \in \mathcal{Z}$ and $i \in \{1,\ldots,M\}$.} and define by induction $q_k$ and $M_k$\footnoteref{note1} as follows:
\begin{align*}
\begin{array}{lcl}
q_{k+1}(Z_k) & = & q_{k}(Z_k) - \gamma_k(Z_k) \left[m(q_k,X_{k+1}(Z_k),Z_{k}) - M_k[Z_k,\color{blue}i\color{black}] + \color{blue} \cfrac{\big(\sum_{j=1}^M M_k[Z_k,j] \big)}{ M}\color{black}\right] ,\\
M_{k+1}[Z_k,i]  & = & m(q_k,X_{k+1}(Z_k),Z_{k}),
\end{array}
\end{align*}
with $i$ picked from the distribution $p =  (\sum_{i=1}^M \delta_i)/M $.
\label{Alg:v_2_s}
\end{alg}
For the next algorithm, we give ourselves a predefined learning rate called $(\gamma_k)_{k \geq 0}$, a function $h:\mathbb{R}_+ \times \mathbb{R}_+  \rightarrow \mathbb{R}_+$ to increase the current learning rate, and another one
$l:\mathbb{R}_+\times \mathbb{R}_+ \rightarrow \mathbb{R}_+$ to lower it. The function $h$ is used to accelerate the descent, while the function $l$ goes back to a slower pace.
\begin{alg}[PASS] We start with an arbitrary $q_0$ and define by induction $q_k$ and $\hat{\gamma}_k$\footnoteref{note1} as follows:
\begin{itemize}
\item If $m(q_n,X_{n+1}(Z_n),Z_n) \times m(q_{r^n_1},X_{r^n_1+1}(,Z_{r^n_1}),Z_{r^n_1}) \geq 0$, then do 
\begin{align*}
\begin{array}{lcl}
q_{n+1}(Z_n) & = &  q_{n}(Z_n) - h\big(\hat{\gamma}_n(Z_n),\gamma_n(Z_n)\big) m(q_n,X_{n+1}(Z_{n}),Z_{n}),\\
\hat{\gamma}_{n+1}(Z_n) & = & h\big(\hat{\gamma}_n(Z_n),\gamma_n(Z_n)\big), 
\end{array}
\end{align*}
with $r^n_1$ is the index of the last observation when the process $Z$ visits the state $Z_n$.
\item Else, do 
\begin{align*}
\begin{array}{lcl}
q_{n+1}(Z_n) & = &  q_{n}(Z_n) - l\big(\hat{\gamma}_n(Z_n),\gamma_n(Z_n)\big) m(q_n,X_{n+1}(Z_{n}),Z_{n}),\\
\hat{\gamma}_{n+1}(Z_n) & = & l\big(\hat{\gamma}_n(Z_n),\gamma_n(Z_n)\big).
\end{array}
\end{align*}
\end{itemize}
\label{Alg:v_4_s}
\end{alg}
\subsubsection{Assumptions}
In this section, we present the assumptions needed to study the convergence of Algorithms \nameref{Alg:v_1_s}, \nameref{Alg:v_2_s} and \nameref{Alg:v_4_s}. We assume that Assumption \ref{assump:assump__0} is in force. Hence, there exists $q^*$, a solution of \eqref{Eq:init_pbm}. We write $m^*$ for the vector $m^*(x,z) = m(q^*,x,z),\, \forall (x,z) \in \mathcal{X} \times \mathcal{Z}$. Recall that $ \Esp[m^*(X(z),z)]= 0,\, \forall z \in \mathcal{Z}$. Let us consider the following assumptions:

\begin{Assumption}[Pseudo strong convexity 2] There exists a constant $L > 0$ such that

\begin{align*}
\big(\Esp_k[m(q_k,X_{k+1}(Z_{k}),Z_{k})]\big)\big(q_k(Z_{k}) - q^*(Z_{k})\big) \geq L \big(q_k(Z_{k}) - q^*(Z_{k})\big)^2, 
\end{align*}

with $\Esp_k[X] = \Esp[X|\mathcal{F}_k]$ for any random variable $X$.
\label{assump:assump__1_s}
\end{Assumption}

Note that Assumption \ref{assump:assump__1_s} is natural in the deterministic framework. For instance, if we take a strongly convex function $f$ and call $m$ its gradient (i.e $m = \nabla f$). Then, $m$ satisfies Assumption \ref{assump:assump__1_s}. Additionally, the pseudo-gradient property (PG) considered in \cite[Section 4.2]{bertsekas1996neuro} is close to Assumption \ref{assump:assump__1_s}. However, Assumption \ref{assump:assump__1_s} is slightly more general than PG since it involves only the component's norm $(q_k - q^*)(Z_k)$ instead of the vector's $(q_k - q^*)$. To get tighter approximations, we also introduce the quantity $L_k$ as follows:

\begin{equation*}
L_k = \left\{
\begin{array}{ll}
\cfrac{\Esp_k[m(q_k,X_{k+1}(Z_{k}),Z_{k})]}{q^k(Z_{k}) - q^*(Z_{k})},& \text{ If } q^k(Z_{k}) - q^*(Z_{k}) \ne 0,\\
0, & \text{ otherwise.}
\end{array}
\right.
\end{equation*}

Note that $L_k \geq 0$ under Assumption \ref{assump:assump__1_s}. It is also the biggest constant that satisfies Assumption \ref{assump:assump__1_s} for a fixed $k$. In particular, this means that $L_k \geq L$.

\begin{Assumption}[Lipschitz continuity of $m$] There exists a positive constant $B>0$ such that for any random variables $X$ and $X'$ valued in $\mathcal{X} $ we have

\begin{align*}
\Esp_k\big[\big(m(q_k,X,Z_{k}) - m^*(X',Z_{k})\big)^2\big] \leq B \big\{1 + \big({q_k}(Z_{k}) - q^*(Z_{k})\big)^2 + \Esp_k\big[\big(X - X'\big)^2\big]\big\},
\end{align*}

with $\Esp_k[X] = \Esp[X|\mathcal{F}_k]$ for any random variable $X$.
\label{assump:assump__2_s}
\end{Assumption}

Assumption \ref{assump:assump__2_s} guarantees that $m$ is Lipschitz. Authors in \cite[Section 4.2]{bertsekas1996neuro} use a similar condition. To get better bounds, we introduce $B_k$ such that
\begin{equation*}
B_k = \cfrac{\Esp_k\big[\big(m(q_k,X,Z_{k}) - m^*(X',Z_{k})\big)^2\big]}{1 + \big({q_k}(Z_{k}) - q^*(Z_{k})\big)^2 + \Esp_k\big[\big(X - X'\big)^2\big]}.
\end{equation*}
We have $B_k \leq B$ since $B_k$ is the smallest constant satisfying Assumption \ref{assump:assump__2_s} for a fixed $k$. 
We finally add an assumption on the learning $(\gamma_k)_{k \geq 0}$.
\begin{Assumption}[Learning rate explosion]For any $z\in \mathcal{Z}$, we have
\begin{align*}
\sum_{ k \geq 1} \gamma_k(z) = \infty,\quad a.s.
\end{align*} 
\label{assump:assump__3_s}
\end{Assumption}
\vspace{-0.5cm}
When the process $Z$ is Markov and $\gamma_k(z)$ bounded, Assumption \ref{assump:assump__3_s} ensures that $Z$ is recurrent. To see this, we first assume that $\gamma_k(z)$ is uniformly bounded without loss of generality. In such a case, there exists $A$ such that $\gamma_k(z) \leq A,$ for all $k \geq 1$. Thus, we get $\sum_{k \geq 1} \Esp[\gamma_k(z) \mathbf{1}_{Z_k = z}] \leq  A \sum_{k \geq 1}\mathbb{P}[Z_k = z] $. Since the left hand side of the previous inequality diverges under Assumption \ref{assump:assump__3_s}, we have 
$$
\sum_{k \geq 1} \mathbb{P}[Z_k = z] = \infty,
$$
which proves that $Z$ is recurrent. 

\subsubsection{Main results}
\manuallabel{subsec:Main_res}{4.3}
In this section, we compare Algorithms \nameref{Alg:v_1_s}, \nameref{Alg:v_2_s}, and \nameref{Alg:v_4_s} and prove the convergence of \nameref{Alg:v_4_s}. Let $c$ be a positive constant and $k \in \mathbb{N}$. We define the error function as follows:
\begin{align*}
e^k(z) = \left\{
\begin{array}{ll}
(q_{k}(z) - q^*(z))^2 , & \text{ for Algorithms \nameref{Alg:v_1_s}, and \nameref{Alg:v_4_s},}\\
\cfrac{\sum_{j=1}^M \left(M^k[z,j] - m^*(z) \right)^2}{M}  + c (q_{k}(z) - q^*(z))^2, & \text{ for Algorithm \nameref{Alg:v_2_s},}
\end{array}
\right.
\end{align*}
for all $z \in \mathcal{Z}$ and $j \in \{1,\ldots,M\}$. We write $E^k$ for the total error $E^k = \|e^k\|_{\nu} = \sum_{z \in \mathcal{Z}} e^k(z) \nu_z$ with $(\nu_z)_{z \in \mathcal{Z}}$ a non-negative sequence.\footnote{The sequence $(\nu_z)_{z \in \mathcal{Z}}$ is used to ensure that the error is bounded when needed.} We also use the following notations: 
\begin{align*}
p(x) = 2 L x  - B x^2, \qquad p_k(x) = 2 L_k x  - B_k x^2, \qquad \bar{\gamma}_k = \arg\sup_{l \in \mathbb{R}} p_k(l) = \cfrac{L_k}{B_k}, \qquad  \forall x \in \mathbb{R}.
\end{align*}
\begin{prop}Let $z \in \mathcal{Z} $. Under Assumptions \ref{assump:assump__0}, \ref{assump:assump__1_s}, \ref{assump:assump__2_s}, and when there exists $r_1 \geq 1$ such that 
$$
\gamma_2 \leq l(\gamma_1,\gamma_2), \qquad \text{ and } \qquad h(\gamma_1,\gamma_2) \leq r_1 \gamma_2, \qquad \forall (\gamma_1,\gamma_2) \in \mathbb{R}_+^2,
$$
we have
\begin{align*}
\mathbf{1}_{A}\Esp_k[e^{k+1}(z)]  & \leq \mathbf{1}_{A} \big[\alpha_k e^{k}(z) + M_k\big], \numberthis \label{Eq:prop_0_error}
\end{align*}
with $A = \{Z_k = z\}$. The constants $\alpha_k$ and $M_k$ vary from one algorithm to another as follows:
\begin{align*}
\alpha_k(z_1) = \left\{
\begin{array}{ll}
\big[1- p\big(\gamma_k(z)\big)\big],& \text{ for Algorithm \nameref{Alg:v_1_s},}\\
\max \bigg(1- \big(2 L \gamma_k(z)  - 3B \gamma_k(z)^2\big) + \frac{B}{Mc} , 1 - \big(\frac{1}{M} - 6 \gamma^2_k(z) c\big)\bigg),& \text{ for Algorithm \nameref{Alg:v_2_s},}\\
\big\{1- p_k\big(\underline{\gamma}_k\big) + d^1 \gamma_k^2(z) \mathbf{1}_{c_k \geq 1}\big\},& \text{ for Algorithm \nameref{Alg:v_4_s},}\\
\end{array}
\right.
\numberthis \label{Eq:val_eq_alpha}
\end{align*} 
and 
\begin{align*}
M_k = \left\{
\begin{array}{ll}
B \gamma^2_k(z) ( 4 + 3v_k),& \text{ for Algorithm \nameref{Alg:v_1_s},}\\
3B \gamma^2_k(z) ( 4 + 3v_k),& \text{ for Algorithm \nameref{Alg:v_2_s},}\\
B \big(c_k\bar{\gamma}_k\big)^2 ( 4 + 3v_k),& \text{ for Algorithm \nameref{Alg:v_4_s},}\\
\end{array}
\right.
\numberthis \label{Eq:val_eq_M}
\end{align*} 
with $c_k = \cfrac{\Esp_k[\hat{\gamma}_k(z) m(q^k,X_{k+1}(z),z)]}{\bar{\gamma}_k(z)\Esp_k[m(q^k,X_{k+1}(z),z)]}$, $\underline{\gamma}_k = c_k(z) \bar{\gamma}_k \vee \bar{\gamma}_k$, $d^1 =  (r_1-1)^2 B_k$ and $v_k = \Var(Z_k)$.
\label{prop:prop_0}
\end{prop}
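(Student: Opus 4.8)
\textbf{Common starting point and the \nameref{Alg:v_1_s} case.} For the two algorithms whose error is $e^k(z)=(q_k(z)-q^*(z))^2$, namely \nameref{Alg:v_1_s} and \nameref{Alg:v_4_s}, the argument begins identically. Fix $z$ and work on $A=\{Z_k=z\}$, so the $z$-th coordinate is the one updated. Write the update as $q_{k+1}(z)-q^*(z)=(q_k(z)-q^*(z))-\Gamma_k\, m(q_k,X_{k+1}(z),z)$ with $\Gamma_k$ the effective step, expand the square, and take $\Esp_k[\cdot]$. This produces the old error, a cross term $-2(q_k(z)-q^*(z))\,\Esp_k[\Gamma_k m]$, and a second-order term $\Esp_k[\Gamma_k^2 m^2]$. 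Assumption \ref{assump:assump__1_s} bounds the cross term below by the contraction $-2L(q_k(z)-q^*(z))^2$; Assumption \ref{assump:assump__2_s}, together with $\Esp[m^*]=0$ and the splitting $m=(m-m^*)+m^*$ for an independent copy $X'$ of the noise, bounds $\Esp_k[m^2]$ by $B(q_k(z)-q^*(z))^2$ plus a residual of order $B(4+3v_k)$. For \nameref{Alg:v_1_s} the step $\Gamma_k=\gamma_k(z)$ is deterministic given $\mathcal{F}_k$, so collecting coefficients gives exactly $\alpha_k=1-p(\gamma_k(z))$ and $M_k=B\gamma_k^2(z)(4+3v_k)$, matching \eqref{Eq:val_eq_alpha}–\eqref{Eq:val_eq_M}; this case is pure bookkeeping.

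\textbf{Algorithm \nameref{Alg:v_4_s}.} The essential subtlety is that the realised step multiplying $m(q_k,X_{k+1}(z),z)$ — namely $h(\hat\gamma_k,\gamma_k)$ or $l(\hat\gamma_k,\gamma_k)$ — is selected according to the sign of the \emph{current} observation, hence it is correlated with $m$ and cannot be pulled out of $\Esp_k$. This is precisely what $c_k$ encodes: by its definition $\Esp_k[\hat\gamma_k(z)m(q^k,X_{k+1}(z),z)]=c_k\,\bar\gamma_k(z)\,\Esp_k[m(q^k,X_{k+1}(z),z)]$, so after writing the cross term as $-2(q_k(z)-q^*(z))\Esp_k[\hat\gamma_k m]$ I can substitute the deterministic effective rate $c_k\bar\gamma_k$ and apply Assumption \ref{assump:assump__1_s} with the tight constants $L_k,B_k$. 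The contraction then reads $1-p_k(\underline\gamma_k)$ with $\underline\gamma_k=c_k\bar\gamma_k\vee\bar\gamma_k$; taking the maximum with $\bar\gamma_k$ and adding $d^1\gamma_k^2(z)\mathbf{1}_{c_k\ge1}$ accounts for the \emph{overshoot} when the accelerated rate exceeds the optimum $\bar\gamma_k=L_k/B_k$, i.e. when $p_k$ is evaluated past its maximiser. The hypotheses $\gamma_2\le l(\gamma_1,\gamma_2)$ and $h(\gamma_1,\gamma_2)\le r_1\gamma_2$ bound the realised step between $\gamma_k$ and $r_1\gamma_k$, which yields the penalty constant $d^1=(r_1-1)^2 B_k$; the residual $B(c_k\bar\gamma_k)^2(4+3v_k)$ comes from the same second-moment estimate with $c_k\bar\gamma_k$ in place of $\gamma_k$.

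\textbf{Algorithm \nameref{Alg:v_2_s}.} This is the genuinely different case, since $e^k(z)$ is a Lyapunov function combining the table error $\frac1M\sum_j(M^k[z,j]-m^*(z))^2$ and $c$ times the parameter error; I would treat the two pieces separately and recombine. For the parameter error I expand the square as above but with the variance-reduced direction $m-M_k[z,i]+\frac1M\sum_j M_k[z,j]$; since the control variate has conditional mean equal to the table average, the cross term is still governed by Assumption \ref{assump:assump__1_s}, giving a contraction $1-(2L\gamma_k(z)-3B\gamma_k^2(z))$ — the factor $3B$ (against $B$ for \nameref{Alg:v_1_s}) being produced by $(a+b+c)^2\le 3(a^2+b^2+c^2)$ applied to the three-term direction — plus a coupling $B/(Mc)$ feeding the table error in. Symmetrically, only the sampled slot $i$ is refreshed, so the table recursion contracts at rate $1-1/M$ and picks up a coupling $6\gamma_k^2(z)c$ from the parameter update. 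These two effective rates are exactly the two arguments of the $\max$ in \eqref{Eq:val_eq_alpha}; the free weight $c$ is chosen to balance the couplings $B/(Mc)$ and $6\gamma_k^2(z)c$, and the residual $3B\gamma_k^2(z)(4+3v_k)$ again follows from the second-moment estimate (now with the threefold factor).

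\textbf{Main obstacle.} The \nameref{Alg:v_1_s} case is calculation; the real work is twofold. In \nameref{Alg:v_2_s}, one must set up the two-component Lyapunov recursion and track the variance-reduction cross terms so that a single $\max$ contraction factor together with a single balancing constant $c$ dominates both pieces simultaneously. In \nameref{Alg:v_4_s}, one must correctly isolate the overshoot of the data-dependent step through $c_k$ and the indicator $\mathbf{1}_{c_k\ge1}$, i.e. show that when the accelerated step overshoots $\bar\gamma_k$ the loss of contraction is no worse than the explicit $d^1\gamma_k^2(z)$ penalty. Everything else is repeated use of Assumptions \ref{assump:assump__1_s} and \ref{assump:assump__2_s} together with $\Esp[m^*]=0$.
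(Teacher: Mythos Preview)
Your proposal is correct and follows essentially the same approach as the paper: expand the squared error on the event $A$, bound the cross term via Assumption~\ref{assump:assump__1_s}, bound the second moment via Assumption~\ref{assump:assump__2_s} and $\Esp_k[m^*]=0$ (the paper uses the same independent-copy trick to extract the $3B(1+v_k)$ variance piece), treat \nameref{Alg:v_2_s} by the two-component Lyapunov with the $3(a^2+b^2+c^2)$ inequality and take the worse of the two contraction rates, and for \nameref{Alg:v_4_s} absorb the data-dependent step into $c_k\bar\gamma_k$, use $\hat\gamma_k\in[\gamma_k,r_1\gamma_k]$ from the hypotheses on $h,l$, and control the overshoot past $\bar\gamma_k$ via the canonical form of $p_k$ to obtain the $d^1\gamma_k^2\mathbf{1}_{c_k\ge1}$ penalty. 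The paper's write-up differs only cosmetically.
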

The proof of Proposition \ref{prop:prop_0} is given in Appendix \ref{app:proof_prop_0}. Equation \eqref{Eq:prop_0_error} reveals that the performance of Algorithms \nameref{Alg:v_1_s}, \nameref{Alg:v_2_s}, and \nameref{Alg:v_4_s} depends on the interaction between two competing terms:
\begin{itemize}
\item On the one hand the slope $\alpha_k$ controls the decrease of the error from one step to the next.
\item On the other hand the quantity $M_k$ gathers two sources of imprecision: the estimation and optimization errors. Both sources of imprecision have a variance term $v_n$ (because the distribution of $Z$ is unknown), and a positive constant (coming from the noisy nature of observations).
\end{itemize}
There is a competition between these two terms: to decrease $M_k$ we need to send $\gamma_k$ towards zero while the reduction of $\alpha_k$ requires a relatively small but still non-zero $\gamma_k$. Thus, $\gamma_k$ should satisfy a trade-off in order to ensure the convergence of the algorithms. The RM conditions \eqref{Cond:conv_gamma} are a way to address this trade-off. Now, in order to analyse the properties of each  algorithm, we compare for a fixed $\gamma_k$ its respective values of $\alpha_k$, and $M_k$ in Table \ref{Table:Opti_compare_alg}. For sake of clarity, we choose to present the variable $(1 -\alpha_k)$ instead of $\alpha_k$ in this table; note that a large value of $1 -\alpha_k$ means that $\alpha_k$ is small and thus induces a fast convergence.
\begin{table}[H]
\resizebox{\linewidth}{!}{
\begin{tabular}{|R{2.5 cm}|C{1 cm}C{0.5 cm} R{6 cm}|C{2.4 cm}|C{3 cm}|C{2.4 cm}|}
\hline
 \large Algorithms &  \multicolumn{4}{c}{$(1 - \text{\large $\alpha$}_k)$ }& \multicolumn{2}{|c|}{$\text{\large $M$}_k$} \\
\hline 
 &  \multicolumn{3}{c|}{\large Value } & \large Comparison with Algo 1 & \large Value & \large Comparison with Algo 1   \\
 \hline 
RL (Algo 1)  & $2\text{\large $\gamma$}_k(z_1) \text{\large $L$}$ & $-$ & $\text{\large $B$} \text{\large $\gamma$}^2_k(z_1)$ &  --- & $\text{\large $B$} \text{\large $\gamma$}^2_k ( 4 + 3 \text{\large $v$}_k)$ &   --- \\
\hline 
SAGA (Algo 2)  & \multicolumn{3}{l|}{
\hspace{-0.1cm}$\bigg(2\text{\large $\gamma$}_k(z_1) \text{\large $L$} - 3 \text{\large $B$} \text{\large $\gamma$}^2_k(z_1) \color{blue}- \frac{\text{\large $B$}}{\text{\large $M$}\text{\large $c$}}\color{black}\bigg) \vee \color{blue}\bigg(\frac{1}{\text{\large $M$}} - 6 \text{\large $\gamma$}^2_k(z_1) \text{\large $c$}\bigg)\color{black}$ } & smaller   & $3\text{\large $B$} \text{\large $\gamma$}^2_k ( 4 + 3\text{\large $v$}_k)$  & larger\\
\hline 
PASS (Algo 3)  & $2\underline{\text{\large $\gamma$}}_k \text{\large $L$}_k$ & $-$ & $\text{\large $B$}_k \big(\underline{\text{\large $\gamma$}}_k\big)^2 +\color{blue}\text{\large $d$}^1 \text{\large $\gamma$}_k^2 \mathbf{1}_{\text{\large $c$}_k \geq 1}\color{black}$  &  larger   & $\text{\large $B$} \big(\text{\large $c$}_k\bar{\text{\large $\gamma$}}_k\big)^2( 4 + 3\text{\large $v$}_k)$  &  larger\\
\hline 
\end{tabular}}
\caption[Table \ref{Gradientalg}]{Comparison of the algorithms \nameref{Alg:v_1_s}, \nameref{Alg:v_2_s} and \nameref{Alg:v_4_s}.}
\label{Table:Opti_compare_alg}
\end{table}

Let $n \in \mathbb{N}^{*}$, $j \leq n$, $z_1 \in \mathcal{Z}$, and $\tau_{z_1} = \inf \{l>0,\, Z_{l} = z_1\}$. We need to introduce the following notations: $a_{j}= \mathbb{P}[\tau_{z_1} \geq j | Z_{0} = z_1]$, $b_j= \mu_j =  \frac{\Esp_0[\alpha_j e^{j}(z_1)] }{ \Esp_0[e^{j}(z_1)]}$, $r_j = 1 - \mu_j$, $\bar{\mu}^n_{j} = e^{- \sum_{j=n-j+1}^n r_j}$, $\bar{a}_{j} = a_j / r$, and $r = \sum_{j \geq 1} a_{j} $. Finally, we write $\bar{a}^{*\infty}_n = \lim_{m \rightarrow \infty} \bar{a}^{*m}_n $, and define the sequence $(\bar{a}^{* m}_k)_{k \geq 1}$ recursively such that $\bar{a}^{* 1}_k = \bar{a}_k$ and $\bar{a}^{* (m+1)}_k = \sum_{l = 1}^k \bar{a}_{k+1-l} \bar{a}^{* m}_l$ for all $k \geq 1$ and $m \geq 1$. The result below holds only for Algorithms \ref{Alg:v_1_s} and \ref{Alg:v_4_s}.
\begin{theo} Let the Assumptions \ref{assump:assump__0}, \ref{assump:assump__1_s}, \ref{assump:assump__2_s} and \ref{assump:assump__3_s} be in force. Then, Algorithms \ref{Alg:v_1_s} and \ref{Alg:v_4_s} verify
\begin{itemize}
\item When $\sum_{k \geq 0} \text{\large $\gamma$}_k^2(z) < \infty$ for all $z \in \mathbb{Z}$, we have 
\begin{align*}
E^n \underset{n \rightarrow \infty}{\rightarrow} 0.\numberthis \label{eq:conv_error_numscheme_1_1}
\end{align*}
in probability.
\item If in addition $(Z_n)_{n \geq 1}$ is an homogeneous Markov chain, and $\Esp[E^1] < \infty$, then 
\begin{align*}
\Esp[E^n] \underset{n \rightarrow \infty}{\rightarrow} 0.\numberthis \label{eq:conv_error_numscheme_1_11}
\end{align*}
\item Moreover, under the same condition, there exists a constant $B'\geq 0$ such that 
\begin{equation}
\Esp_0[e^n(z_1)]  \leq B' \sum_{(l,j,i)\in (\mathbb{N}^*)^3, l+j+i = n}\bar{\epsilon}_j \bar{\mu}_l^j \bar{a}^{* \infty}_{i},\label{eq:conv_error_numscheme_1_2}
\end{equation}
with $\bar{\epsilon} = b_n \epsilon$, and $\epsilon_n = e^1(z_1)a_n(z_1) + \sum_{j=1}^{n-1} a_{n-j}  \Esp[M_j]$.
\end{itemize}
\label{theo:conv_error_numscheme_1}
\end{theo}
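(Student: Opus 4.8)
The plan is to use the one-step bound \eqref{Eq:prop_0_error} of Proposition \ref{prop:prop_0} as the engine and feed it into two different machines: a Robbins--Siegmund almost-supermartingale argument for the two convergence statements, and a renewal-type decomposition at the fixed state $z_1$ for the explicit bound. For the convergence statements, fix $z \in \mathcal{Z}$; off the event $\{Z_k = z\}$ the coordinate $e^k(z)$ is frozen (non-visited coordinates are unchanged), so combining this with \eqref{Eq:prop_0_error} gives the global inequality $\Esp_k[e^{k+1}(z)] \leq e^k(z) - \mathbf{1}_{Z_k=z}\, p(\gamma_k(z))\, e^k(z) + \mathbf{1}_{Z_k=z} M_k$ for Algorithm \ref{Alg:v_1_s} (with its analogue for Algorithm \ref{Alg:v_4_s}). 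Since $M_k$ is of order $\gamma_k^2(z)$ and $\sum_k \gamma_k^2(z) < \infty$, the injected noise is summable, so the Robbins--Siegmund theorem yields that $e^k(z)$ converges almost surely and that $\sum_k \mathbf{1}_{Z_k=z}\, p(\gamma_k(z))\, e^k(z) < \infty$ a.s. Assumption \ref{assump:assump__3_s} forces recurrence (as argued after it) and the divergence of $\sum_k \mathbf{1}_{Z_k=z}\gamma_k(z)$; because $p(\gamma) \sim 2L\gamma$ for small $\gamma$, the contraction series diverges, which is compatible with its summability only if the almost-sure limit of $e^k(z)$ is zero. Coordinate-wise convergence together with the weights $(\nu_z)$ then gives $E^n \to 0$ in probability, proving \eqref{eq:conv_error_numscheme_1_1}. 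For \eqref{eq:conv_error_numscheme_1_11} I would instead take expectations coordinate-wise, so that $\Esp[e^n(z)]$ obeys a deterministic almost-supermartingale recursion with summable noise; the homogeneous Markov assumption ensures the contraction term still diverges in expectation, forcing $\Esp[e^n(z)] \to 0$, and $\Esp[E^1] < \infty$ together with $(\nu_z)$ lets me pass the limit inside $E^n = \sum_z e^n(z)\nu_z$ by dominated convergence.

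For the explicit bound \eqref{eq:conv_error_numscheme_1_2}, I would condition on $Z_0 = z_1$ and track $e^k(z_1)$ only along the visits of $Z$ to $z_1$, since between visits it is constant. Decomposing the trajectory into the successive excursions away from $z_1$, whose durations have tail $a_j = \mathbb{P}[\tau_{z_1} \geq j \mid Z_0 = z_1]$, and invoking the strong Markov property under homogeneity, the expected error satisfies a renewal inequality whose source term is exactly $\epsilon_n = e^1(z_1)a_n(z_1) + \sum_{j=1}^{n-1} a_{n-j}\Esp[M_j]$: the first summand is the frozen initial error that has not yet returned by time $n$ (weighted by $a_n = \mathbb{P}[\tau_{z_1}\geq n]$), and the convolution $\sum a_{n-j}\Esp[M_j]$ is the noise injected at a visit and carried forward until the next return. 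The contraction supplied by the slope $\alpha_k$ at each visit enters through the effective slopes $\mu_j = b_j$ and their windowed products $\bar{\mu}_l^j$, producing the weighted source $\bar{\epsilon}_j = b_j\,\epsilon_j$.

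Iterating this renewal inequality over the number of completed excursions is what produces the convolution powers: the $m$-fold iteration contributes $\bar{a}^{*m}$, and after normalizing $\bar{a} = a/r$ with $r = \sum_{j\geq 1} a_j$ so that $\bar{a}$ is a sub-probability kernel, summing over $m$ yields the renewal measure $\bar{a}^{*\infty} = \lim_m \bar{a}^{*m}$ interpreted through the Neumann series. Collecting the source, the accumulated contraction, and the renewal measure then assembles the triple convolution $\sum_{l+j+i=n} \bar{\epsilon}_j\, \bar{\mu}_l^j\, \bar{a}^{*\infty}_i$, with $B'$ absorbing the normalization $r$ and the passage from conditional to unconditional slope factors. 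The delicate point, and the step I expect to be the main obstacle, is that $\alpha_k$ and $M_k$ are indexed by the global clock $k$ while the renewal decomposition is naturally indexed by excursion count: bounding the random product of per-visit slopes by the deterministic window factor $\bar{\mu}_l^j$, and convolving the time-dependent noise $\Esp[M_j]$ against random excursion lengths, is where the argument is most technical. A secondary difficulty is justifying finiteness and convergence of the series defining $\bar{a}^{*\infty}$, which rests on the sub-stochasticity of $\bar{a}$ and hence on the recurrence guaranteed by Assumption \ref{assump:assump__3_s}.
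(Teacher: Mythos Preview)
Your treatment of \eqref{eq:conv_error_numscheme_1_1} matches the paper: apply the Robbins--Siegmund theorem to the coordinate-wise inequality from Proposition~\ref{prop:prop_0}, use $\sum_k\gamma_k^2(z)<\infty$ to make the noise summable, and use $\sum_k\gamma_k(z)=\infty$ together with $p(\gamma)\sim 2L\gamma$ to force the almost-sure limit to be zero.

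For \eqref{eq:conv_error_numscheme_1_11} there is a gap. After taking expectations you obtain
\[
\Esp[e^{k+1}(z)] \le \Esp[e^k(z)] - \Esp\!\big[\mathbf{1}_{Z_k=z}\,p(\gamma_k(z))\,e^k(z)\big] + \Esp\!\big[\mathbf{1}_{Z_k=z} M_k\big],
\]
but $e^k(z)$ and $\mathbf{1}_{Z_k=z}$ are correlated (the error at $z$ encodes the entire visiting history of $z$), so the middle term does not factor as $c_k\,\Esp[e^k(z)]$ with a deterministic $c_k$ satisfying $\sum_k c_k=\infty$; the deterministic Robbins--Siegmund lemma therefore does not apply. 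The paper does not go this way: it proves \eqref{eq:conv_error_numscheme_1_11} through the same renewal machinery it builds for \eqref{eq:conv_error_numscheme_1_2}, first for finite $\mathcal Z$ (showing the right-hand side of the renewal bound vanishes by dominated convergence) and then extending to countable $\mathcal Z$ by a finite-set approximation using $\Esp[E^1]<\infty$.

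For \eqref{eq:conv_error_numscheme_1_2} your renewal picture is right in spirit but the execution differs in two places. First, the paper uses a \emph{last-exit} decomposition at $z_1$ (Proposition~\ref{prop:prop_1_2}), which yields directly the recursion $\Esp_0[e^n(z_1)] \le \epsilon_n + \sum_{j=1}^{n-1} a_{n-j} b_j\,\Esp_0[e^j(z_1)]$ indexed by the global clock; this sidesteps the global-clock/excursion-count mismatch you flagged as the main obstacle. Second, this recursion is not iterated heuristically but solved \emph{exactly} by Lemma~\ref{lem:lem_5}, giving $v_n=\sum_j a^{(\mu,b)^j}_{n+1-j}\bar\epsilon_j$ with coefficients $a^{(\mu,b)}_n$ defined through the auxiliary recursion \eqref{eq:seq_a_j_n}. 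Bounding those coefficients by $B\sum_k \bar\mu^n_k\,\bar a^{*\infty}_{n-k}$ and showing they tend to zero is the content of a separate multi-step induction (Lemma~\ref{lem:lem_con_seq}); this is the layer your proposal does not anticipate. One definitional point: in the paper $\bar a^{*\infty}_n$ is the \emph{pointwise limit} $\lim_m \bar a^{*m}_n$ of iterated convolutions, not the Neumann series $\sum_m \bar a^{*m}_n$ you describe.
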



Equations \eqref{eq:conv_error_numscheme_1_1}, and \eqref{eq:conv_error_numscheme_1_11} ensure the convergence of the error $E^n$ towards zero in both probability and $L^2$. Equation \eqref{eq:conv_error_numscheme_1_2} gives an upper bound for the error. In particular, it shows how the terms $\bar{\epsilon}_n$, $\bar{\mu}^n_k$, and $\bar{a}^{* \infty}_n$ interact together to decrease of the error $\Esp_0[e^n(z_1)]$.  We recall that $\bar{\epsilon}_n$ is a noise term that gathers the sources of imprecision, $\bar{\mu}$ represents the influence of the slope factor $\alpha$, $\bar{a}^{* \infty}_n$ is related to the probability distribution of the process $(Z_k)_{k \geq 0}$. Note that the right-hand side of \eqref{eq:conv_error_numscheme_1_2} is not trivial and it converges towards $0$ when there exists $L'\geq 0$ such that $\Esp_0[ \alpha_j e^{j}(z_1)] \leq L' \Esp_0[e^{j}(z_1)] \Esp[\alpha_j]$ for all $ j \geq 1$.

\subsection{The upper level}
\manuallabel{subsec:Opti_policy}{4.2}
In practice, to apply \nameref{Alg:v_4_s} we need an appropriate predefined sequence $(\gamma_k)_{k \in \mathbb{N}}$. It is possible to take $\gamma_k$ proportional to ${1}/{k^{\alpha}}$ with $\alpha \in  (0,1]$ as proposed in \cite{moulines2011non,robbins1951stochastic}. However, in this section, we present an optimal dynamic policy for the choice of the learning rate $(\gamma_k)_{k \in \mathbb{N}}$.  To do so, we assume that 

\begin{align*}
e^{n+1}(z) = \mathbf{1}_{A}\big( \alpha_n e^{n}(z) + M_n + S_n \big) + \mathbf{1}_{A^c} e^{n}(z),\numberthis \label{eq:dynamic_error_e}
\end{align*}

with $A = \{Z_n = z\}$, $S_n$ does not depend on the learning rate and verifies $\Esp[\mathbf{1}_{A}S_n ]\leq 0$. Equation \eqref{eq:dynamic_error_e} is consistent with Proposition \ref{prop:prop_0}. Moreover, we force $e^n$ to stay below the upper bound $x_2 = \arg\sup_{x \in \mathbb{R}_{+}} g(x)$ with 
$$
g(x) =  x - \cfrac{L^2 x^2}{2B(x + (2 + v))}, \forall x \in \mathbb{R}.
$$
Such constraint is not that restrictive since we know that the error $e^{n}$ converges towards $0$. Since $\alpha_n$, and $M_n$ are both functions of the learning rate, the idea is to choose the learning rate $\gamma_n$ such that

\begin{align*}
\gamma_n = \argmin_{\gamma} \big( \alpha_n (\gamma)  e^{n}(z) + M_n(\gamma) \big),\numberthis \label{eq:dynamic_gamma_opti}
\end{align*}

which gives 
\begin{align*}
\gamma_n = \left\{
\begin{array}{ll}
\cfrac{L}{B} \cfrac{ e^{n}(z)}{e^n(z) + (2 + v_n)},& \text{ for Algorithm \nameref{Alg:v_1_s},}\\
\cfrac{L_n}{B_n} \cfrac{e^{n}(z)}{e^n(z) + d_1/B_n \mathbf{1}_{c_n \geq 1} +  c_n^2(2 + v_n)},& \text{ for Algorithm \nameref{Alg:v_4_s},}\\
\end{array}
\right.
\end{align*}

The constants $L$, $L_n$, $B$, $B_n$, $d_1$ and $c_n$ are defined in Proposition \ref{prop:prop_0}. Note that, the value $L/B$ is known to be a good choice for the learning rate. Thus, the proposed $\gamma_n$ introduces a variation around this value that takes into account both the variance $v_n$, and an estimate of the past observed error $e^n$. The algorithm PASS adds a supplementary optimization layer since the global constants $L$, and $B$ are replaced by the more local ones $L_n$, and $B_n$.  

We write $\Gamma$ for the set of processes $\tilde{\gamma} = (\tilde{\gamma}_n)_{n \geq 0}$ adapted to the filtration generated by the observed errors $(e^n)_{n \geq 0}$. We have the following result. 

\begin{prop} The sequence $\gamma = (\gamma_n)_{n \geq 0}$ defined in \eqref{eq:dynamic_gamma_opti} satisfies 

$$
e_{\gamma}^n \leq e_{\tilde{\gamma}}^n, \quad a.s, \quad \forall n \in \mathbb{N},\, \forall \tilde{\gamma} \in \Gamma.
$$

We write $e_{\gamma}^n$ to point out the dependence of the error $e^n$ on the chosen control $\gamma$.
\label{prop:opti_gamma}
\end{prop}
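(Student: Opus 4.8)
The plan is to argue by induction on $n$, using that the rule \eqref{eq:dynamic_gamma_opti} minimises, at each step, the one-step error produced by the dynamics \eqref{eq:dynamic_error_e}. First I would dispose of the base case: both controls are run from the same initial guess $q_0$, so $e^0_\gamma = e^0_{\tilde\gamma}$ and the inequality holds with equality. For the inductive step I would condition on the event $A=\{Z_n=z\}$. On $A^c$ the dynamics \eqref{eq:dynamic_error_e} leave the error untouched, hence $e^{n+1}_\gamma=e^n_\gamma\le e^n_{\tilde\gamma}=e^{n+1}_{\tilde\gamma}$ follows directly from the induction hypothesis. All the content is therefore on $A$, where $e^{n+1}=\alpha_n(\gamma)\,e^n+M_n(\gamma)+S_n$; since $S_n$ does not depend on the learning rate it is common to the two coupled trajectories and cancels in any pathwise comparison, so it suffices to compare $\alpha_n(\gamma)\,e^n+M_n(\gamma)$.

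Next I would introduce the optimised one-step map $\Phi_n(e):=\min_{\gamma\ge 0}\big(\alpha_n(\gamma)\,e+M_n(\gamma)\big)$. Reading off $\alpha_n$ and $M_n$ from Proposition \ref{prop:prop_0}, the objective is a strictly convex quadratic in $\gamma$ (leading coefficient $B_n(e+\cdots)>0$), so its unique minimiser is exactly the non-negative $\gamma_n$ of \eqref{eq:dynamic_gamma_opti} and the minimum is attained. Consequently, on $A$ the greedy trajectory realises $e^{n+1}_\gamma=\Phi_n(e^n_\gamma)+S_n$, whereas for any competitor $\tilde\gamma\in\Gamma$ the choice $\tilde\gamma_n$ is suboptimal for its own current error, giving $e^{n+1}_{\tilde\gamma}\ge \Phi_n(e^n_{\tilde\gamma})+S_n$.

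The decisive ingredient is then a monotonicity property of $\Phi_n$. By the envelope theorem the slope of the optimised map is the contraction coefficient evaluated along the optimal path, $\Phi_n'(e)=\alpha_n\big(\gamma_n(e)\big)$, so $\Phi_n$ is non-decreasing exactly on the range of errors for which the optimal coefficient $\alpha_n(\gamma_n)$ stays non-negative. This is precisely why the construction forces $e^n\le x_2$ with $x_2=\arg\sup_x g(x)$: the level $x_2$ marks the boundary of the region on which the optimised one-step map is monotone, so capping both errors at $x_2$ keeps them inside it. Granting this, the induction hypothesis $e^n_\gamma\le e^n_{\tilde\gamma}$ yields $\Phi_n(e^n_\gamma)\le\Phi_n(e^n_{\tilde\gamma})$, whence on $A$
$$
e^{n+1}_\gamma=\Phi_n(e^n_\gamma)+S_n\le \Phi_n(e^n_{\tilde\gamma})+S_n\le e^{n+1}_{\tilde\gamma},
$$
which, together with the case $A^c$, closes the induction and proves the claim for every $n$.

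I expect the monotonicity/invariance step to be the main obstacle rather than the induction skeleton. Concretely, I would have to verify three things: that the one-step objective is genuinely convex in $\gamma$ so that \eqref{eq:dynamic_gamma_opti} is a true global minimiser; the envelope identity $\Phi_n'(e)=\alpha_n(\gamma_n(e))$ together with the sign computation showing that $\alpha_n(\gamma_n)\ge 0$ for $e\le x_2$ (this is the computation for which the specific form of $g$ and the choice $x_2=\arg\sup g$ are tailored); and that the forcing $e^n\le x_2$ is compatible with the dynamics, i.e. that the admissible region is invariant for both $\gamma$ and every $\tilde\gamma\in\Gamma$, so that $\Phi_n$ is only ever evaluated where it is non-decreasing. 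The same scheme then applies verbatim to \nameref{Alg:v_4_s}, with the local constants $L_n,B_n,c_n$ replacing $L,B$, the only extra care being that the corresponding optimised map remains monotone on the admissible range.
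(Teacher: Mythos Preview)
Your proposal is correct and follows essentially the same approach as the paper's own proof. The paper also argues by induction, introduces the optimised one-step map (denoted $g$ there, your $\Phi_n$), uses that the competitor's choice is suboptimal to get $g(e^n_{\tilde\gamma})\le \alpha_n(\tilde\gamma_n)e^n_{\tilde\gamma}+M_n(\tilde\gamma_n)$, and relies on the monotonicity of $g$ on $[0,x_2)$ together with the capping constraint to propagate the inequality; your envelope-theorem reading of $\Phi_n'(e)=\alpha_n(\gamma_n(e))$ is just a cleaner way to see why $x_2=\arg\sup g$ is the right threshold.
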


The proof of the above result is given in Appendix \ref{sec:proof_speed_conv}. Proposition \ref{prop:opti_gamma} shows that $\gamma$ ensures that fastest convergence speed of the error and 

$$
\Esp[e_{\gamma}^n] = \inf_{\tilde{\gamma} \in \Gamma } \Esp[e_{\tilde{\gamma}}^n], \quad \forall n \in \mathbb{N}.
$$

This guarantees its optimality.\\

We end this section with some practical considerations. Note that $e^n$ is not known in practice because we do not have access to $q^*$. However, one can take the average value of $m(q_n,X_{n+1}(Z_n),Z_n) $ over the last $p \in \mathbb{N}^*$ visit times as a proxy of $e^n(Z_n)$. Moreover, the constants $L$ and $B$ are also unknown in practice. To tackle this issue, a first solution consists of starting with arbitrary values for $B$ and $L$ and generating a sequence of learning rates. If the error $m(q_n,X_{n+1}(Z_n),Z_n)$ increases, we take a larger value for $B$ and a smaller one for $L$ otherwise $B$ and $L$ values are kept unchanged. Finally, an alternative solution for the choice of the upper level learning rate consists of considering a piece-wise constant (PC) policy. To do so, one can track the average error of $m(q_n,X_{n+1}(Z_n),Z_n)$ over the lasts $p$ visit times. If this average error does not decrease, the step size is divided by a factor $\alpha$. 

\subsection{Extension}
The results of this section still hold when the descent sequence \nameref{Alg:v_4_s} is replaced by the vectorial version
\begin{itemize}
\item If $ \langle \gamma_n m(q_n,X_{n+1}) , m(q_{n-1},X_{n}) \rangle \geq 0$, then do 
\begin{align*}
\begin{array}{lcl}
q_{n+1} & = &  q_{n} - h\big(\hat{\gamma}_n,\gamma_n\big) m(q_n,X_{n+1}),\\
\hat{\gamma}_{n+1} & = & h\big(\hat{\gamma}_n,\gamma_n\big), 
\end{array}
\end{align*}
with $m(q,X)$, and $h\big(\hat{\gamma}_n,\gamma_n\big)$ respectively the vectors $m(q,X)(z) = m(q,X(z),z)$, and $h\big(\hat{\gamma}_n,\gamma_n\big)(z) = h\big(\hat{\gamma}_n(z),\gamma_n(z)\big)$ for any $z \in \mathcal{Z}$, $q \in \mathcal{Q}$, and $X \in \mathcal{X}^{\mathcal{Z}}$. 
\item Else, do 
\begin{align*}
\begin{array}{lcl}
q_{n+1} & = &  q_{n} - l\big(\hat{\gamma}_n,\gamma_n\big) m(q_n,X_{n+1}),\\
\hat{\gamma}_{n+1} & = & l\big(\hat{\gamma}_n,\gamma_n\big),
\end{array}
\end{align*}
with $l\big(\hat{\gamma}_n,\gamma_n\big)$ the vector $l\big(\hat{\gamma}_n,\gamma_n\big)(z) = h\big(\hat{\gamma}_n(z),\gamma_n(z)\big)$ for any $z \in \mathcal{Z}$. 
\end{itemize}

When $\gamma_n(z) = 0$ if $z \ne Z_n$, we recover the standard PASS algorithm. Thus, the vectorial version is slightly more general and uses the scalar product between vectors instead of the product between two coordinates. 

\section{Some examples}
\label{sec:examples}
\subsection{Methodology}
\label{subsec:method_num}

The code and numerical results presented in this section can be found in \url{https://github.com/othmaneM/RL_adap_stepsize}. Here, we compare four algorithms. The two first ones are two different versions of \nameref{Alg:v_1_s}. In the first version, the learning rate $\gamma_k(z)$ is taken such that $\gamma_k = \eta/n_k(z)$ with $\eta >0$ selected to provide the best convergence results and $n_k(z)$ the number of visits to the state $z$. In the second version, the step size follows the piece-wise constant policy (PC) described at the end of Section \ref{subsec:Opti_policy}. The third algorithm is \nameref{Alg:v_2_s} where the step size is derived from PC policy. Finally, we use the \nameref{Alg:v_4_s} algorithm presented in the previous sections with a predefined learning rate following the PC policy. We consider three numerical examples to compare the convergence speed of these algorithms: drift estimation, optimal placement of limit orders and the optimal liquidation of shares.	
	
\subsection{Drift estimation}
\label{sec:drift_estim}

\paragraph{Formulation of the problem.} We observe a process $(S_{n})_{n\geq 0}$ which satisfies 
\begin{align*}
S_{n+1} = S_n + f_{n+1} + W_n, \numberthis \label{Eq:S_dyn}
\end{align*}
with $W_n$ a centred noise with finite variance. We want to estimate the quantities $f_i$ with $i \in  \{1,\cdots,n_{max} \}$. Using \eqref{Eq:S_dyn} and $\Esp[W_t] = 0$, we get 
\begin{align*}
\Esp\big[S_{i+1} - S_{i} - f_{i+1} \big] = 0, \qquad \forall i \in \{0,\cdots,n_{max}-1 \}.
\end{align*}
Thus, we can estimate $f_i$ using stochastic iterative algorithms. The pseudo-code of our implementation of \nameref{Alg:v_4_s}  for this problem can be found in the Appendix \ref{sec:implementations_} under the name Implementation \ref{Alg:PASS_drift_estim}.

\paragraph{Numerical results.} Figure \ref{Fig:h_2} shows the variation of the $L^2$-error when the number of iterations increases. We can see that the algorithm \nameref{Alg:v_4_s} outperforms standard stochastic approximation algorithms. Moreover, other algorithms behave as expected: the standard RL decreases very slowly (but we know it will drive the asymptotic error to zero), the constant learning rate and SAGA provides better results than RL, while \nameref{Alg:v_4_s} seems to have captured the best of the two worlds for this application: very fast acceleration at the beginning and the asymptotic error goes to zero.
\begin{figure}[H]
  \centering
  \hfill $L^2$-error against the number of iterations \hfill ~\\
  \includegraphics[width=0.65\linewidth]{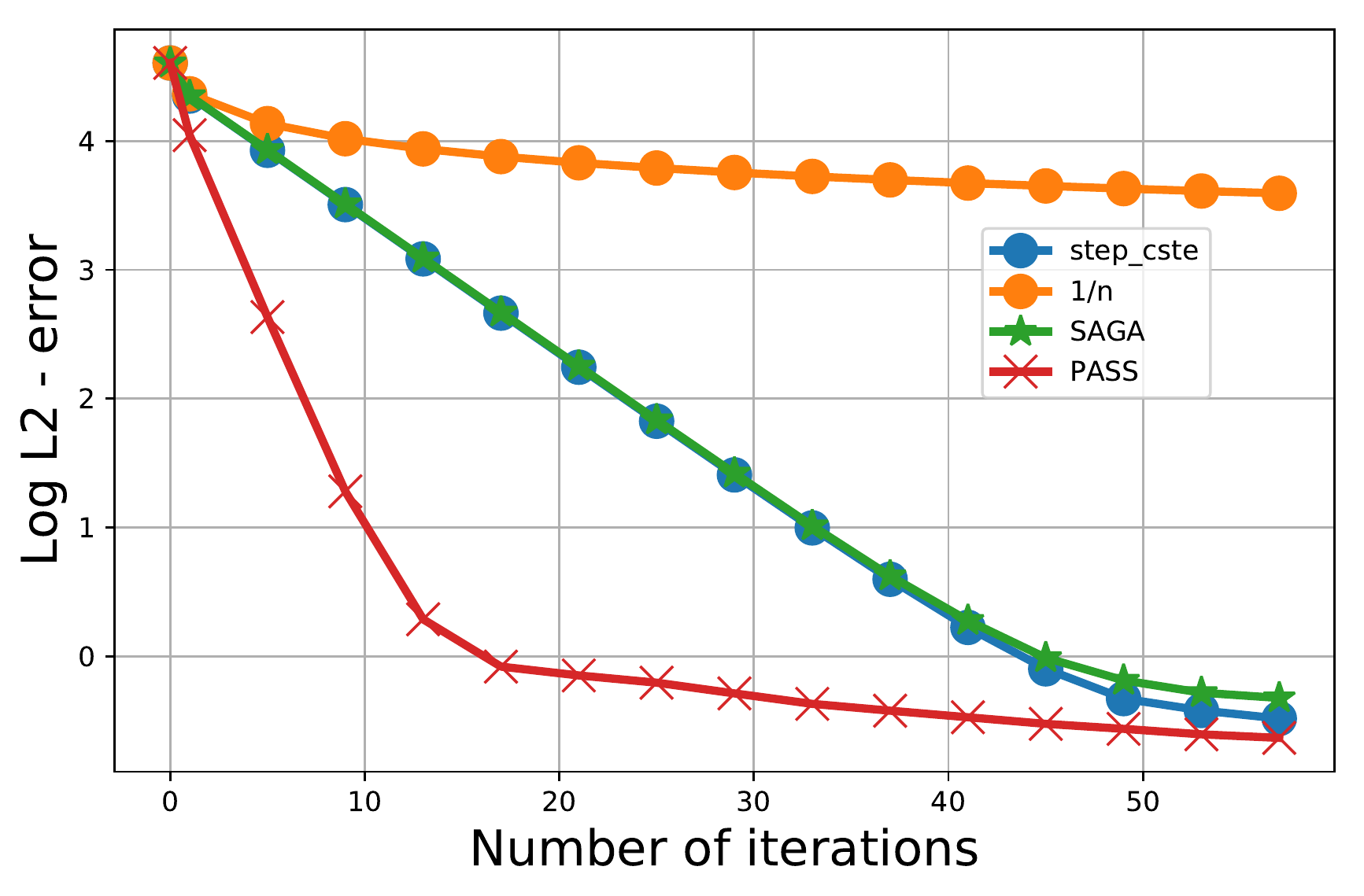}\\
  \caption{The $L^2$-error between $f^{k}$ and $f$ for different numerical methods averaged over 1000 simulated paths.}
  \label{Fig:h_2}
\end{figure}

\subsection{Optimal placement of a limit order}
\label{sec:optim:limit}

\paragraph{Formalisation of the problem.} We consider an agent who aims at buying a unit quantity using limit orders, and market orders during the time interval $[0,T]$ (see \cite{lehalleLimitOrderStrategic2017} for detailed explanations). In such case, the agent wonder how to find the right balance between fast execution and avoiding trading costs associated to the bid-ask spread. The agent state at time $t$ is modelled by $X_t = (Q^{Before},Q^{After},P)$ with $Q^{Before}$ the number of shares placed before the agent's order, $Q^{After}$ the queue size after the agent's order, and $P_t$ the mid price, see Figure \ref{fig:lob:flows-opti_placement}. The agents wants to minimise the quantity 
$$
\Esp[F(X_{T \wedge T^{\textrm{exec}} \wedge  \tau }) + \int_0^{T \wedge T^{\textrm{exec}} \wedge  \tau} c \, ds],
$$
where 
\begin{itemize}
\item $T$ is the final time horizon.
\item $T^{\textrm{exec}} = \inf\{ t \geq 0, \, P_t = 0\}$ is the first time when the limit order gets a transaction.
\item $\tau$ is the first time when a market order is sent.
\item $X=(Q^{Before},Q^{After}, P)$ is the state of the order book.
\item $F(u)$ is the price of the transaction (i.e. $F(u)=p+\psi$ when the agents crosses the spread and $F(u)=p$ otherwise).
\end{itemize}

We show in Section \ref{sec:rel_RL_SA} that the $Q$-function is solution of \eqref{Eq:Rel_RL_SA}, see details in \url{https://github.com/othmaneM/RL_adap_stepsize}. Thus, we can use Algorithms \nameref{Alg:v_1_s}, \nameref{Alg:v_2_s} and \nameref{Alg:v_4_s} to estimate it. The pseudo-code of our implementation of \nameref{Alg:v_4_s} is available as Implementation \ref{Alg:PASS_opti_placement} in Appendix \ref{sec:implementations_}.
\begin{figure}[ht!]
\begin{center}
\begin{tikzpicture}
\draw (3,-1) rectangle (4,-5);
\draw [black,fill=gray!60] (3,-3) rectangle (4,-3.3);
\draw (6,-2) rectangle (7,-5);
\draw [dotted][->](0,-5) -- ++(10,0);
\draw (5,-5) node[sloped]{$|$};
\draw (3.5,-5) node[below]{$ \text{Same} $} ;
\draw (6.5,-5) node[below]{$ \text{Opp} $} ;
\draw (3,-4) node[left]{$Q^{Before}$} ;
\draw (3,-3.15) node[left]{$ \text{q} $} ;
\draw (3,-2) node[left]{$Q^{After}$} ;
\draw (7,-4) node[right]{$Q^{Opp}$} ;
\footnotesize
\draw (5,-5) node[below]{$ P(t) $} ;
\normalsize
\draw (10,-5) node[right]{$Price$} ;
\end{tikzpicture}
\end{center}
    \caption{The state space of our limit order control problem.}
  \label{fig:lob:flows-opti_placement}
\end{figure}
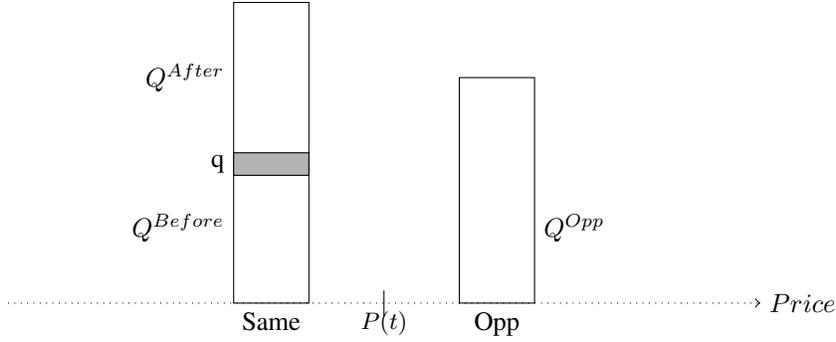
\paragraph{Numerical results.}Figure \ref{Fig:h_2_2_intro} shows three control maps: the x-axis reads the quantity on ``same side'' (i.e. $Q^{same} = Q^{Before}+Q^{After}$) and the y-axis reads the position of the limit order in the queue, i.e. $Q^{Before}$. The color and numbers gives the control associated to a pair $(Q^{same}, Q^{Before})$: 1 (blue) means ``stay in the book'', while 0 (red) means ``cross the spread'' to obtain a transaction. The panel (at the left) gives the reference optimal controls obtained with a finite difference scheme, the middle panel the optimal corresponds to the controls obtained for a \nameref{Alg:v_1_s} algorithm where the step-size $(\gamma_k)_{k\geq 0}$ is derived from the upper level policy, and the right panel the optimal control obtained with our optimal policy (i.e. upper level and inner level combined). It shows that after few iterations our optimal policy already found the optimal controls. Figure \ref{Fig:h_2_3_intro} compares the log of the $L^2$ error, averaged over 100 trajectories, between  the different algorithms. We see clearly that our methodology improves basic stochastic approximation algorithms. Again, the other algorithms behave as expected: SAGA is better than a constant learning rate that is better than the standard RL (at the beginning, since we know that asymptotically RL will drive the error to zeros whereas a constant learning rate does not).
\begin{figure}[ht!]
\centering
    \hfill a) Theoretical optimal control \hfill b) step\_cste optimal control \hfill	 c) PASS optimal control  \hfill ~\\
        \includegraphics[width=0.25\linewidth]{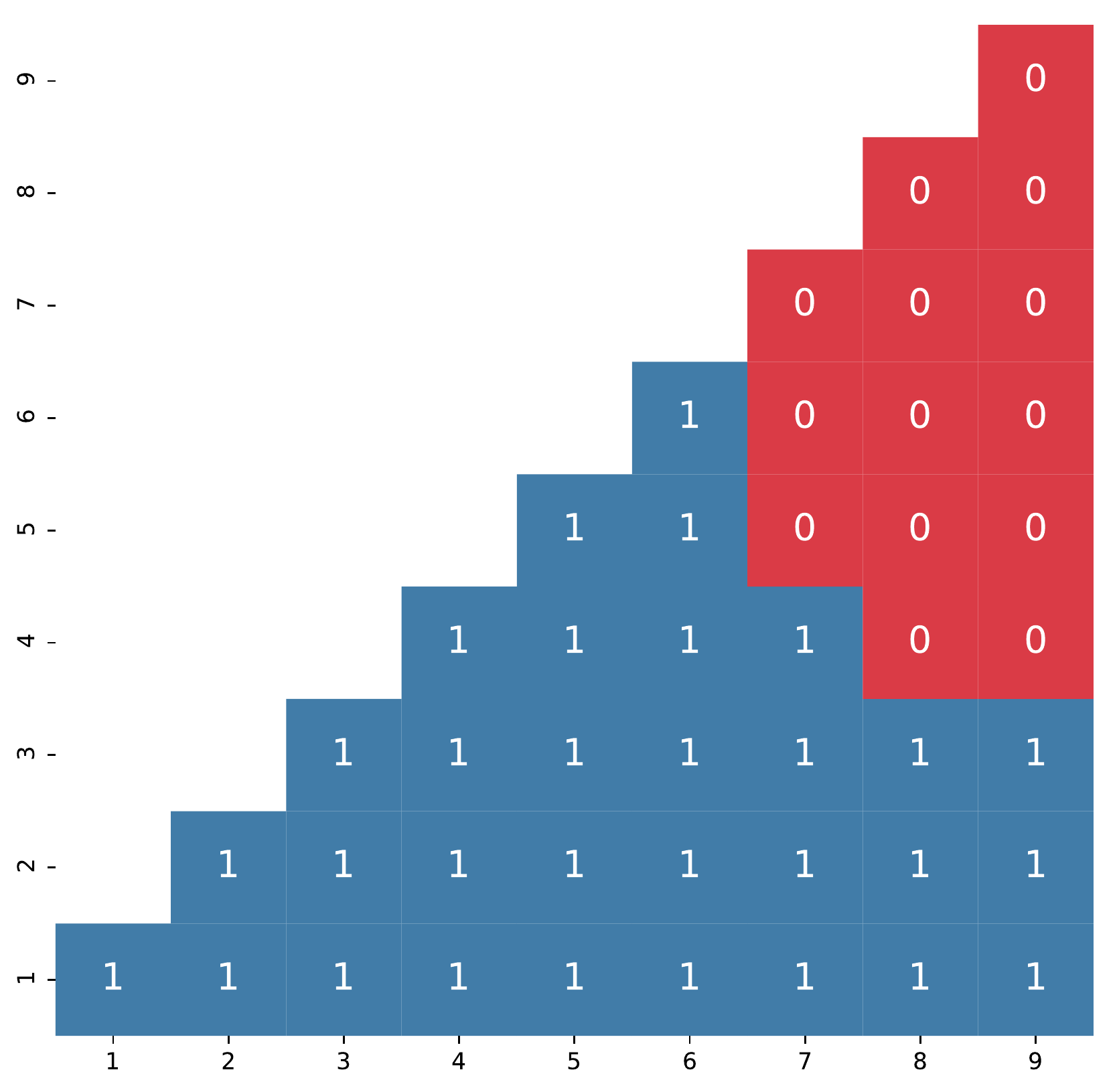}
	    \includegraphics[width=0.25\linewidth]{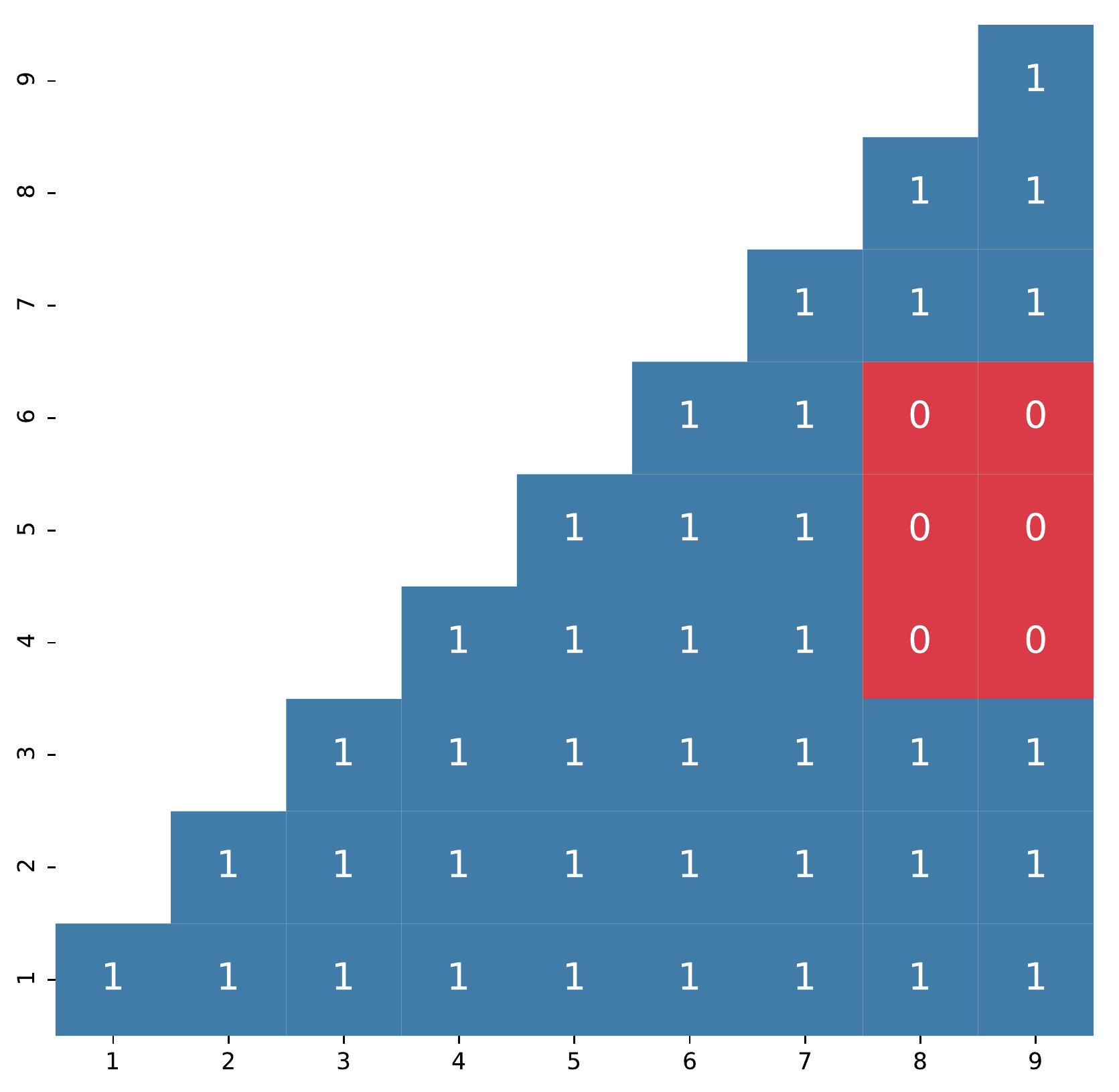}
	    \includegraphics[width=0.25\linewidth]{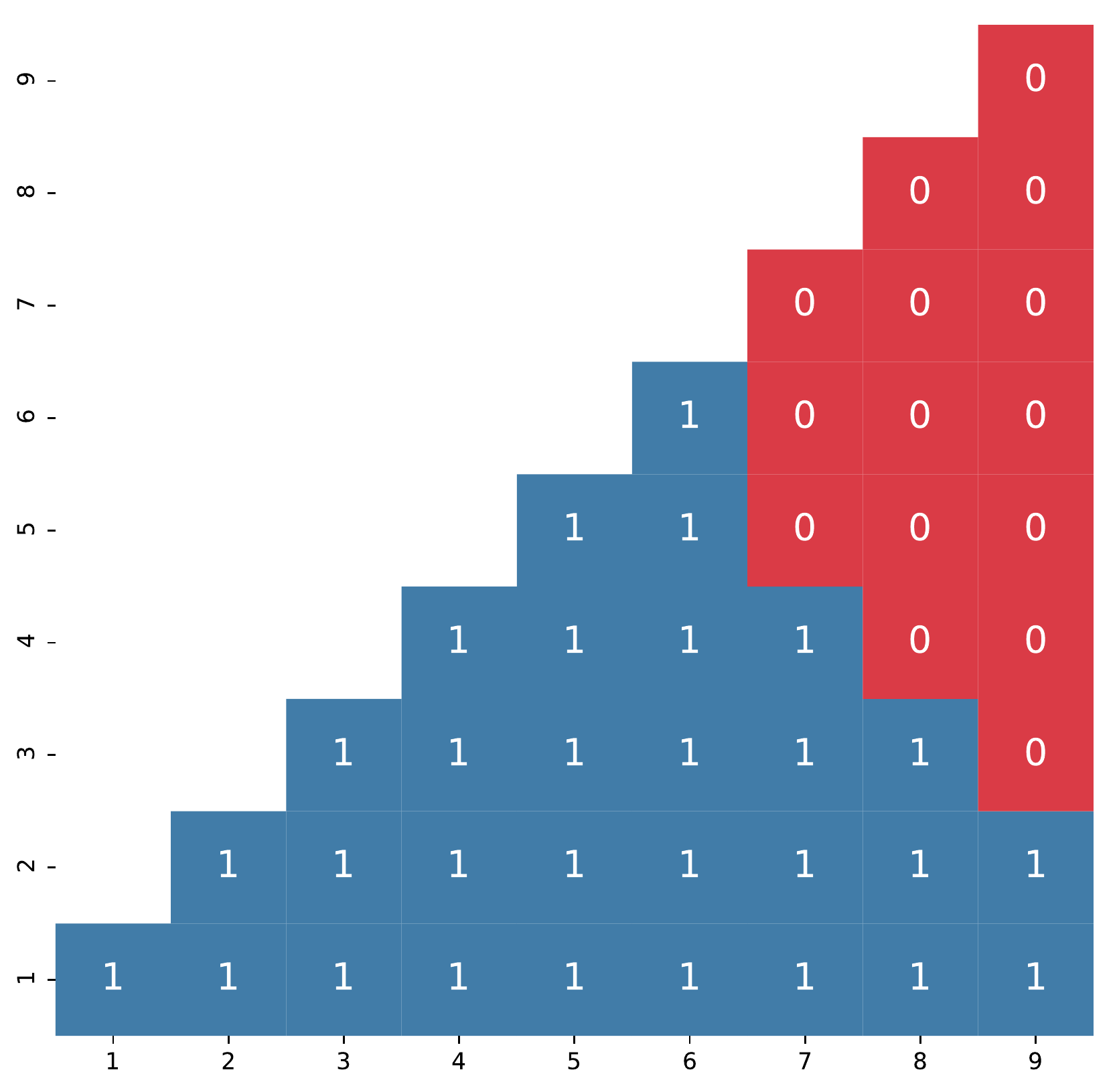}\\
	    \caption{Comparison optimal control after 300 iteration for different methods: left is the optimal control, middle is \nameref{Alg:v_1_s} with a step size derived from the upper level and right is our optimal policy for the step size (i.e. upper level and inner level combined).}
	     \label{Fig:h_2_2_intro}
\end{figure}
\begin{figure}[H]
\centering
    \hfill $L^2$-error against the number of iterations \hfill ~\\
        \includegraphics[width=0.65\linewidth]{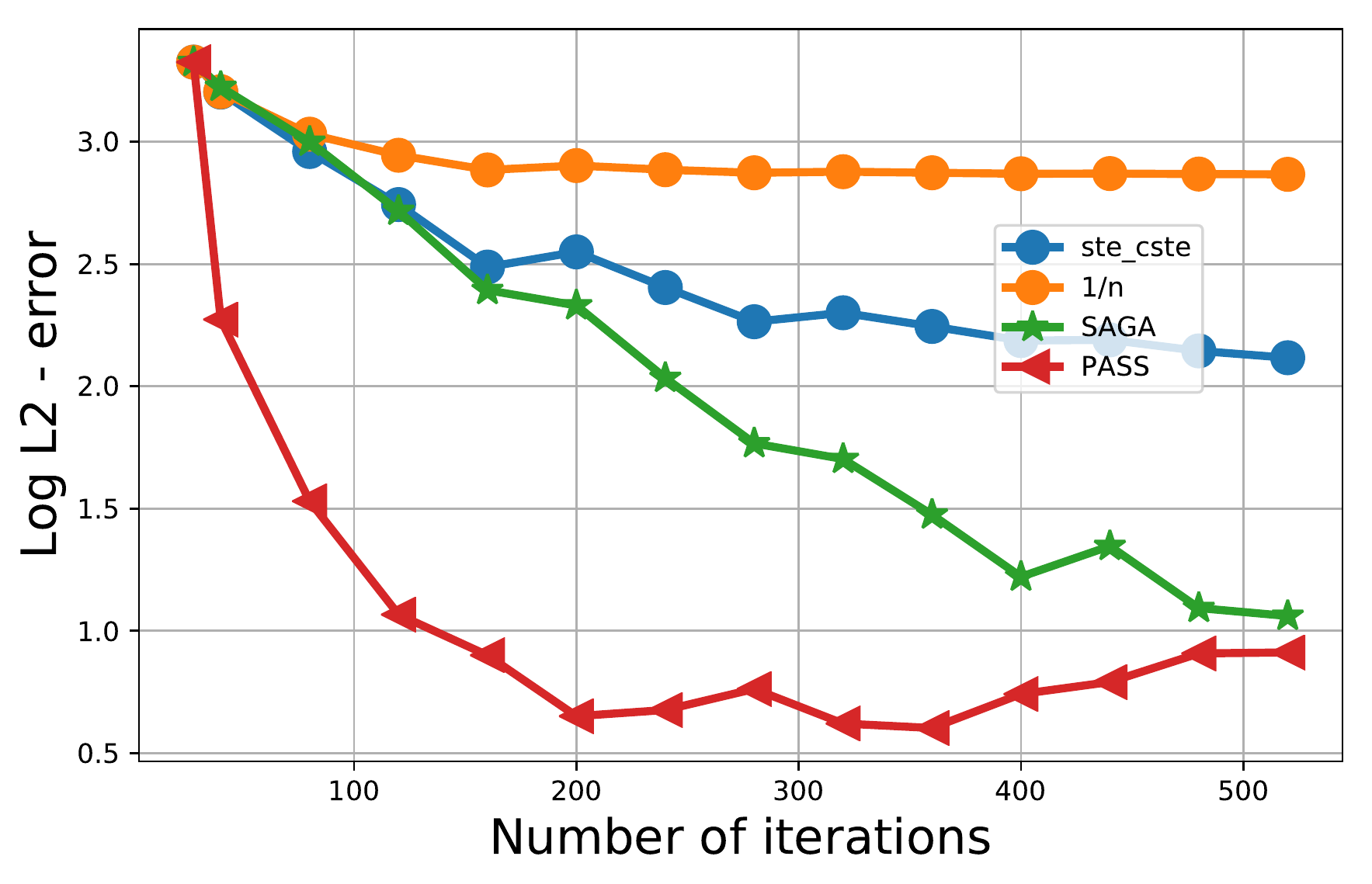}\\
	    \caption{The log $L^2$-error against the number of iterations averaged over 1000 simulated paths.}
	     \label{Fig:h_2_3_intro}
\end{figure}

\subsection{Optimal execution}
\label{sec:optim:cj}

\paragraph{Formalisation of the problem.} This is not the first work where RL is used to solve optimal trading problems. For example, authors in \cite{nevmyvaka2006reinforcement} apply RL techniques to solve optimal execution issues and in \cite{manziukAlgorithmicTradingReinforcement2019} they use deep reinforcement learning to solve a high dimensional market making problem. However, we consider here a different application. An investor wants to buy a given quantity $q_0$ of a tradable instrument (see \cite{cartea15book} and \cite{lehalleMarketMicrostructurePractice2018} for details about this setting). The price $S_t$ of this instrument satisfies the following dynamic:
\begin{align*}
dS_t = \alpha dt + \sigma dB_t, \numberthis \label{Eq:opti:cj:s_dynam}
\end{align*}
where $\alpha \in \mathbb{R}$ is the drift and $\sigma$ is the price volatility. The state of the investor is described by two variables its inventory $Q_t$ and its wealth $X_t$ at time $t$. The evolution of these two variables reads 
\begin{align*}
\left\{
\begin{array}{ll}
dQ_t = \nu_t dt, & Q_0 = q_0,\\
dW_t = - \nu_t (S_t + \kappa \nu_t) dt, & W_0 = 0,
\end{array}
\right.
\numberthis \label{Eq:opti:cj:inv_wealth}
\end{align*}
with $\nu_t$ the trading speed of the agent and $\kappa>0$. The term $\kappa \nu_t$ corresponds to the temporary price impact. The investor wants to maximize the following quantity
\begin{align*}
W_T + Q_T (S_T - A Q_T) - \phi \int_{t}^T Q_s^2 \, ds,
\end{align*}
it represents its final wealth $X_T$ at time $T$, plus the value of liquidating its inventory minus a running quadratic cost. The value function $V$ is defined such that
\begin{align*}
V(t,w,q,s) = \sup_{\nu} \Esp\big[ W_T + Q_T (S_T - A Q_T) - \phi \int_{t}^T Q_s^2 \, ds |W_t = w,\,Q_t = q,\,S_t =s\big].
\end{align*}
We remark that $v(t,w,q,s) = V(t,w,q,s) - w -qs $ verifies
\begin{align*}
v(t,w,q,s) = \sup_{\nu} \Esp\big[ \underbrace{(W_T - W_t) + (Q_T S_T - Q_t S_t)}_{ = M_T^t} - A Q^2_T - \phi \int_{t}^T Q_s^2 \, ds |W_t = w,\,Q_t = q,\,S_t =s\big]. 
\end{align*}
Using \eqref{Eq:opti:cj:s_dynam}, and \eqref{Eq:opti:cj:inv_wealth}, we can see that the variable $M_T^t$ is independent of the initial values $W_t$, and $S_t$. This means that $v$ is a function of only two variables the time $t$ and the inventory $q$. The dynamic programming principle ensures that $v$ satisfies 
\begin{align*}
v(t,q) = \sup_{\nu} \Esp\big[ M_{t+\Delta}^t - \phi \int_{t}^{t+\Delta} Q_s^2 \, ds + v(t+\Delta,Q_{t+\Delta}) |Q_t = q\big].\numberthis \label{Eq:v_dpp}
\end{align*}
We fix a maximum inventory $\bar{q}$. Let $k = (k_T,k_q) \in (\mathbb{N}^*)^2$, $\Delta = T/k_T$, $D_T = \{t_i^{k_T};\,i\leq k_T\}$, and $D_q = \{q_i^{k_q};\,i\leq k_q\}$ with $t_i^{k_T} = i \Delta$ and $q_i^{k_q} = -\bar{q} + 2i \bar{q}/ k_q $. To estimate $v$ we use the numerical scheme $(v_n^k)_{n \geq 1,\,k\in(\mathbb{N}^*)^2}$ defined below:
$$
v_{n+1}^k(Z_n) = v_{n}^k(Z_n) + \gamma_n(Z_n) \big[ \sup_{\nu \in A(Z_n)} \{ M^\nu_{n+1} - \phi \Delta Q_n^2 + v^k_n(Z^\nu_{n+1})  - v^k_n(Z_n)\} \big],
$$
with $Z_n = (n \Delta, Q_{n\Delta})$ and $ A(Z_n) \in D_q$ is the set of admissible actions\footnote{We do not allow controls that lead to states where the inventory exceeds $\bar{q}$.}. When the final time $T$ is reached (i.e. $ n = k_T$), we pick a new initial inventory from the set $D_q$ and start again its liquidation. At a first sight, it is not clear that $v_n^k$ approximates $v$. However, we have the following result.

\begin{prop}
The sequence $(v^k_n)_{n \geq 1, k \geq 1}$ converges point-wise towards $v$ on $D_T \times D_q$ when $n \rightarrow \infty$ and $k \rightarrow \infty$.
\label{prop:conv_num_schem}
\end{prop}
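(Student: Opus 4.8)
The plan is to split the total error through an intermediate object. Let $v^k$ denote the solution of the \emph{discretized} dynamic programming equation on the grid $D_T \times D_q$, i.e. the fixed point of the (expected) Bellman operator that the scheme $(v_n^k)_n$ targets for a fixed mesh $k=(k_T,k_q)$. Writing, for any grid point $z \in D_T \times D_q$,
$$
|v_n^k(z) - v(z)| \leq |v_n^k(z) - v^k(z)| + |v^k(z) - v(z)|,
$$
reduces the statement to two independent limits: a stochastic-approximation error $|v_n^k - v^k| \to 0$ as $n \to \infty$ (mesh fixed), and a discretization error $|v^k - v| \to 0$ as $k \to \infty$.

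For the discretization error I would use that $v^k$ is the value function of the discrete-time, discrete-inventory control problem obtained from \eqref{Eq:v_dpp} by restricting times to $D_T$, inventories to $D_q$, and freezing the control over each interval of length $\Delta = T/k_T$. As $k_T \to \infty$ one has $\Delta \to 0$, and the footnote in Section \ref{sec:rel_RL_SA} gives convergence of the time-discretized value function and optimal control to the continuous-time ones; as $k_q \to \infty$ the admissible grid $D_q$ becomes dense in $[-\bar{q},\bar{q}]$, so the supremum over $A(Z_n)\subset D_q$ converges to the supremum over admissible speeds. I would make this rigorous either through the explicit linear--quadratic structure of the problem --- since \eqref{Eq:opti:cj:s_dynam}--\eqref{Eq:opti:cj:inv_wealth} yield a value function $v(t,q)$ quadratic in $q$ with coefficients solving a Riccati ODE, so that $v^k - v$ is bounded by the corresponding ODE/Euler discretization error --- or, more robustly, by a Barles--Souganidis-type consistency, monotonicity and stability argument for the monotone scheme defined by the discrete DPP.

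For the stochastic-approximation error I would identify the recursion defining $v_n^k$ with Algorithm \nameref{Alg:v_1_s} applied to the map $m(v,x,z) = v(z) - H(v,x,z)$, where $H$ is the sampled Bellman term $\sup_{\nu \in A(z)}\{M^\nu_{n+1} - \phi \Delta Q_n^2 + v(Z^\nu_{n+1})\}$, whose expectation vanishes exactly at $v^k$ (Assumption \ref{assump:assump__0} holds since the finite backward recursion has a unique solution). The episodic restart --- resetting the time component to $0$ and redrawing the initial inventory from $D_q$ whenever $n=k_T$ --- makes $(Z_n)$ a recurrent homogeneous Markov chain on the finite grid, which together with Assumption \ref{assump:assump__3_s} supplies the recurrence needed in Theorem \ref{theo:conv_error_numscheme_1}. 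The remaining hypotheses, pseudo strong convexity (Assumption \ref{assump:assump__1_s}) and Lipschitz continuity (Assumption \ref{assump:assump__2_s}), I would verify by backward induction on the time layers $i = k_T, k_T-1, \ldots, 0$: on the terminal layer $H$ reduces to the known terminal penalty, and on layer $i$ the continuation value involves only layers $i+1,\ldots,k_T$, so the one-step operator contracts the per-coordinate error and the residual $v_n(z) - \Esp_n[H(v_n,\cdot,z)]$ satisfies the one-sided bound required by Assumption \ref{assump:assump__1_s} once the later layers have converged. Applying Theorem \ref{theo:conv_error_numscheme_1} layer by layer then gives $v_n^k \to v^k$.

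The main obstacle is twofold. First, establishing pseudo strong convexity (Assumption \ref{assump:assump__1_s}) for the non-smooth $\sup_\nu$ operator: the clean route is to exploit the finite-horizon (nilpotent) structure so that the per-layer contraction plays the role of a discount factor, but one must check that the asynchronous, simultaneous updating of all layers does not break the inductive argument --- this is where an ordering of the visits, or a two-timescale reading of the recursion, is needed. Second, passing to the joint limit: the two error terms are controlled in different regimes ($n\to\infty$ at fixed $k$, then $k\to\infty$), so a diagonal argument together with a bound on $|v_n^k - v^k|$ sufficiently uniform in $k$ is required to conclude the pointwise convergence of $v_n^k$ to $v$.
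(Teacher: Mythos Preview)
Your overall architecture --- split into a stochastic-approximation error $|v_n^k - v^k|$ handled by Theorem \ref{theo:conv_error_numscheme_1} and a discretization error $|v^k - v|$ --- matches the paper's. But there is a genuine gap in how you identify $v^k$.

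You define $v^k$ as the fixed point of the expected update that the scheme targets, and then assert that this $v^k$ is the value function of the discrete-time control problem. These are \emph{not} the same object. In the recursion for $v_n^k$ the supremum over $\nu$ sits \emph{inside} the sample (it acts on the random $M^\nu_{n+1}$), so the fixed point of the expected update is
\[
v^k(z) = \Esp\Big[\sup_{\nu}\big\{ M^\nu - \phi\Delta Q^2 + v^k(z^\nu)\big\}\,\Big|\, z\Big],
\]
whereas the discrete value function $\bar v^k$ coming from the DPP has the supremum \emph{outside}:
\[
\bar v^k(z) = \sup_{\nu}\Esp\big[ M^\nu - \phi\Delta Q^2 + \bar v^k(z^\nu)\,\big|\, z\big].
\]
Your Riccati or Barles--Souganidis arguments establish $\bar v^k \to v$, not $v^k \to v$; and Theorem \ref{theo:conv_error_numscheme_1} drives $v_n^k$ to $v^k$, not to $\bar v^k$. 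The two-term decomposition therefore does not close: a third piece controlling $|v^k - \bar v^k|$ is required, and nothing in your proposal supplies it.

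This missing step is exactly the paper's Step (ii). It bounds the per-step commutation error $\big|\sup_\nu \Esp[M^{n_t}_{n_t+1}] - \Esp[\sup_\nu M^{n_t}_{n_t+1}]\big|$ by $K\Delta_t^2$, using that $M^{n_t}_{n_t+1}$ is explicitly quadratic in $\nu$ (both sides are computed in closed form from \eqref{Eq:opti:cj:s_dynam}--\eqref{Eq:opti:cj:inv_wealth}), and then a backward induction over time layers propagates this to $|\bar v^k - v^k| \leq K(T-t)\Delta_t \to 0$. The argument genuinely relies on the linear--quadratic structure; a generic Barles--Souganidis appeal would not see the sup/expectation swap. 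Once you insert this intermediate comparison, the rest of your plan (SA convergence via Theorem \ref{theo:conv_error_numscheme_1}, discretization convergence of $\bar v^k$) is in line with the paper.
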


The proof of Proposition \ref{prop:conv_num_schem} is given in Appendix \ref{app:proof_v}. see Appendix \ref{sec:implementations_} for a detailed implementation of the algorithm with the corresponding pseudo-code (as Implementation \ref{Alg:PASS_opti_cj}).

\paragraph{Numerical results.} Figure \ref{Fig:opti_exec_v} shows the value function $v$ for different values of the elapsed time $t$ and the remaining inventory $Q_t$. The panel (at the left) gives the reference value function. It is computed by following the same approach of \cite{MFGTradeCrow2016}. The middle panel the value function obtained obtained after 120 000 iterations for \nameref{Alg:v_1_s} algorithm where the step-size $(\gamma_k)_{k \geq 0}$ is derived from the upper level of our optimal policy, and the right panel the value function obtained with our optimal policy (i.e upper level and inner level combined). It shows that our optimal strategy leads to better performance results. We also plot, in Figure \ref{Fig:opti_exec_error}, a simulated path for the variations of the log $L^2$ error for different algorithms. Here again, we notice that our methodology improves the basic RL algorithm and that the ordering of other approaches is similar to the one of the ``drift estimation'' approximation (i.e. SAGA and the constant learning rate are very similar).

 \begin{figure}[H]
 \centering
     \hfill a) Theoretical value function \hfill b) step\_ cste value function \hfill	 c) PASS value function  \hfill ~\\
         \includegraphics[width=0.3\linewidth]{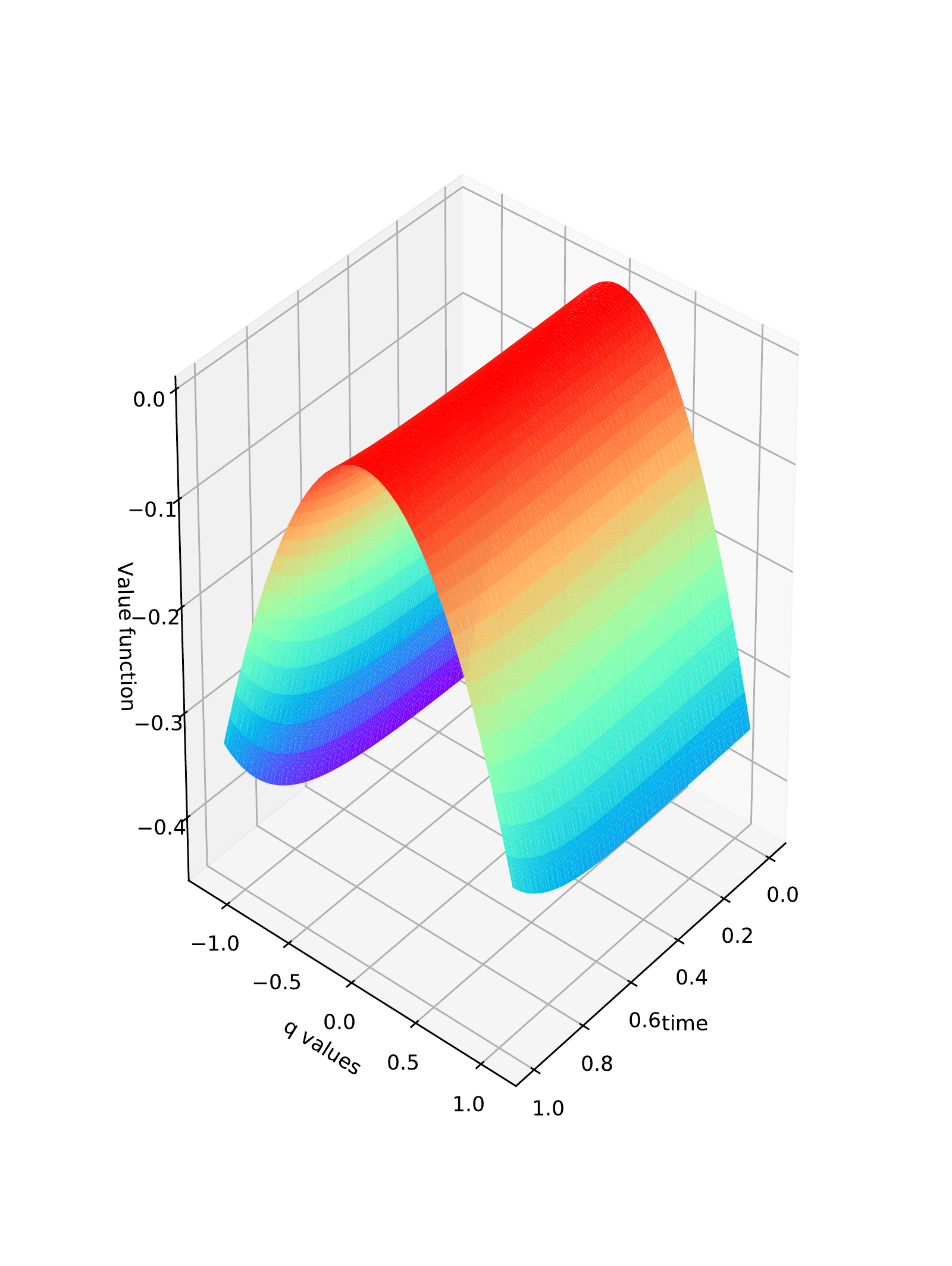}
 	    \includegraphics[width=0.3\linewidth]{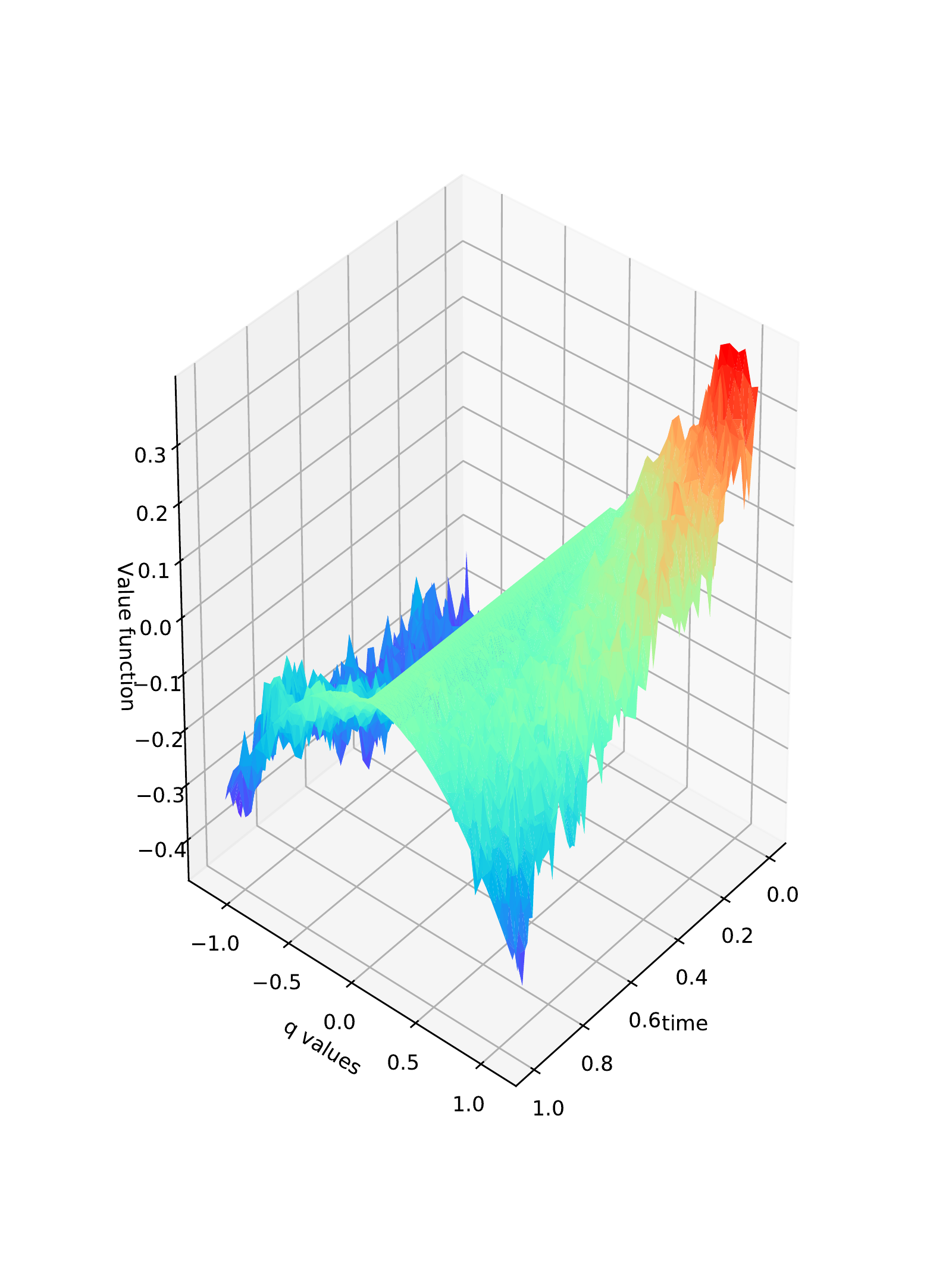}
 	    \includegraphics[width=0.3\linewidth]{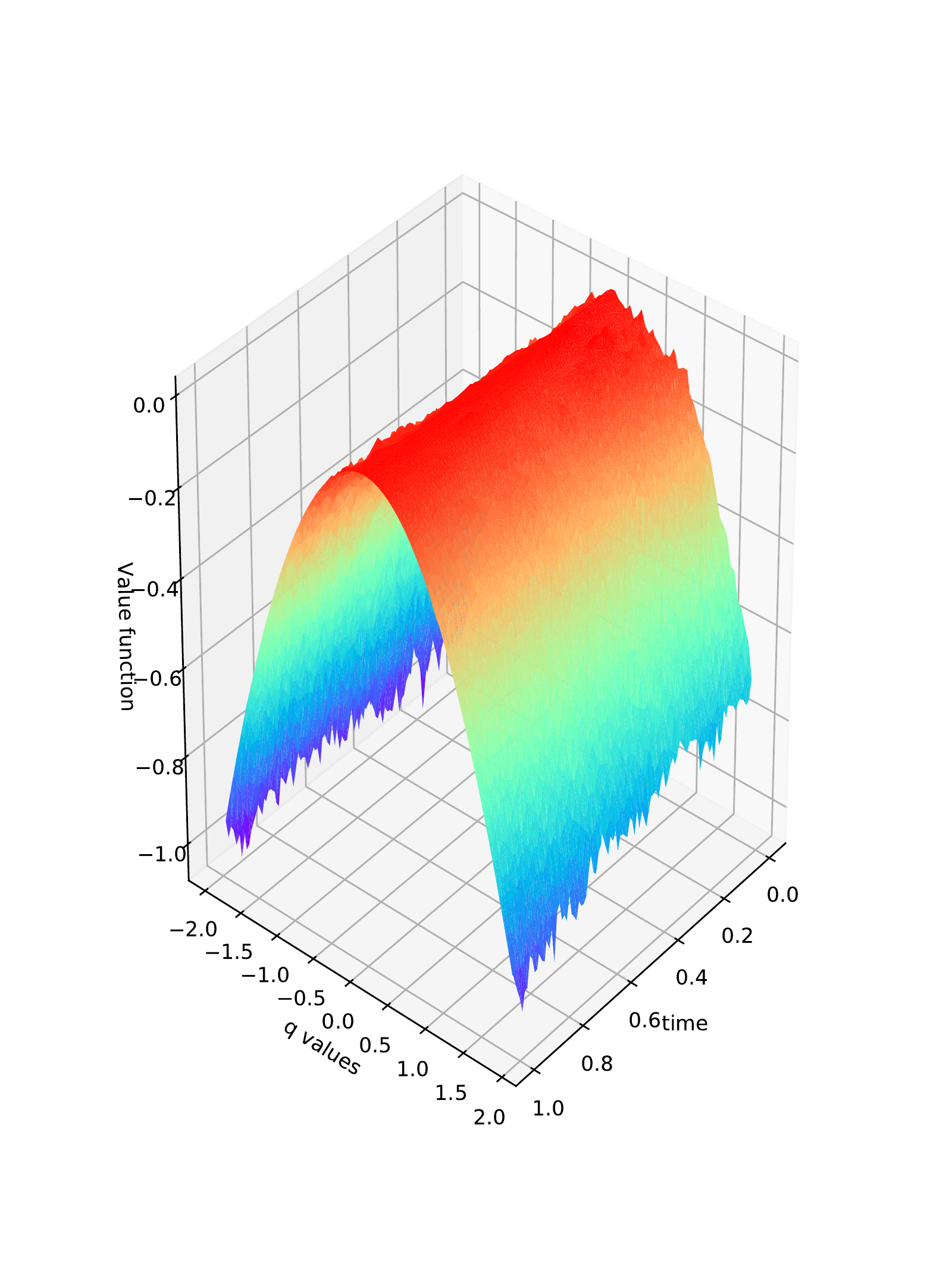}\\
 	    \caption{Comparison value function between methods.}
 	     \label{Fig:opti_exec_v}
 \end{figure}
 
\begin{figure}[H]
\centering
    \hfill $L^2$-error against the number of iterations \hfill ~\\
        \includegraphics[width=0.65\linewidth]{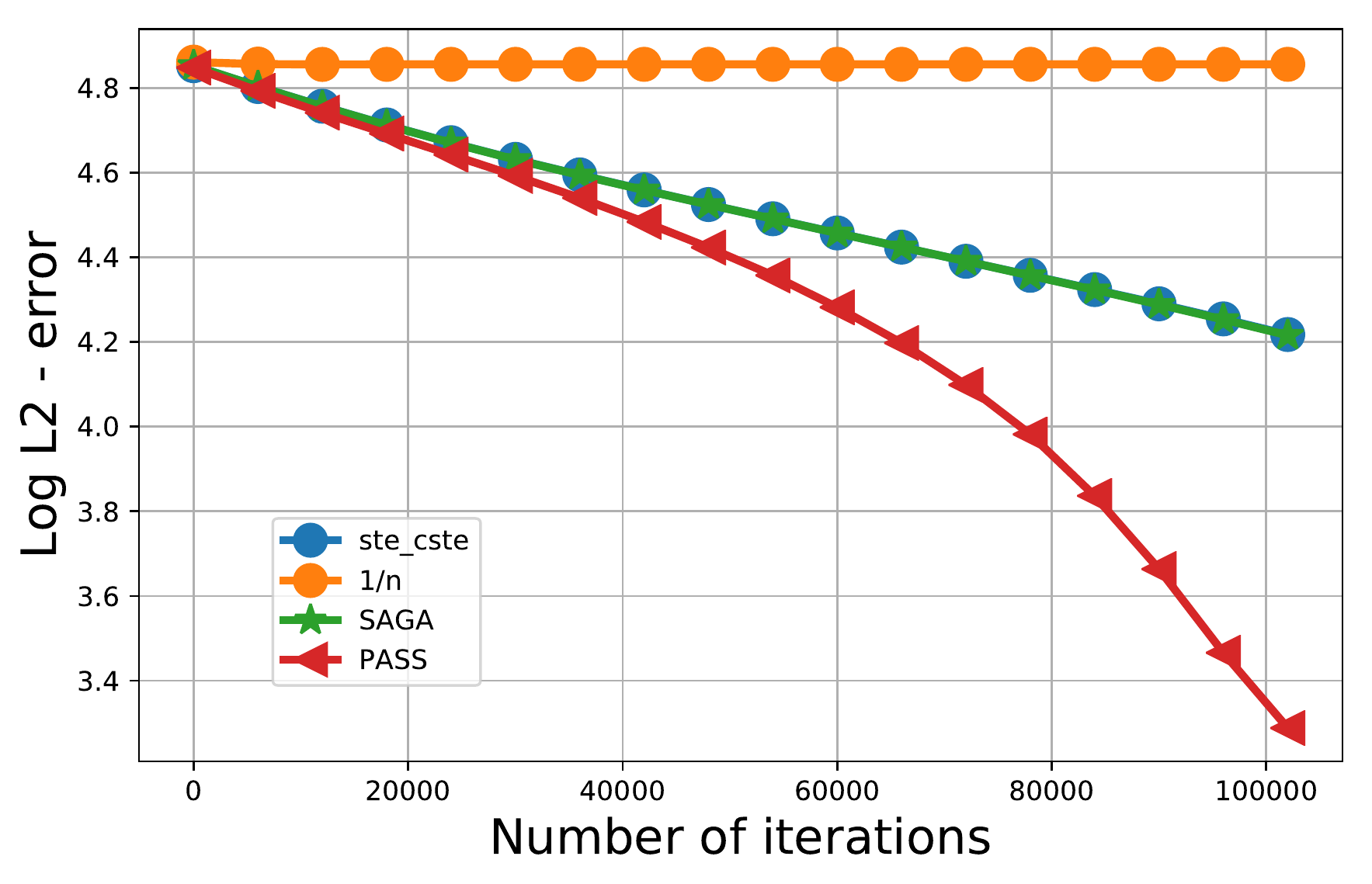}\\
	    \caption{The log $L^2$-error against the number of iterations averaged over 1000 simulated paths.}
	     \label{Fig:opti_exec_error}
\end{figure}

\clearpage
\appendix

\section{Actions of the agent}
\label{sec:actions_agent}
We present here policies that encourage exploration and others that maximize the agent's decisions.

\paragraph{Exploration policies:} Since it is important to visit the state space sufficiently enough, we propose to set the conditional distribution of the random variable $A_k$ such that
\begin{equation}
\mathbb{P}[A_k = a|\mathcal{F}_k] = \cfrac{e^{\bar{\epsilon}_k(Z^a_k)}}{\sum_{a'} e^{\bar{\epsilon}_k(Z^{a'}_k)}}, \qquad \forall a \in \mathcal{A},
\label{Eq:Policy_1}
\end{equation}
with $Z^a_k = (k,U_k,a)$, $\bar{\epsilon}_k(Z^a_k) = \beta(Z^a_k)\sum_{a'}  \epsilon_k(U^{Z^a_k}_{k+1},a')$ and 
$$
\epsilon_k(z) = \left\{
\begin{array}{ll}
|m(q_{r_1(z)},X_{r_1(z) +1}(z),z)| & \text{when the state } z \text{ already visited},\\
b & \text{otherwise,}
\end{array}
\right.
$$
where $b>0$ encourages the exploration, $q_k$ satisfies \eqref{Eq:RobinM_RL} and $r_1(z)$ is the last observation time of the state $z$.

\paragraph{Optimal policies.} To give more importance to the maximizing action, one may consider the following policy:
\begin{equation}
\mathbb{P}[A_k = a|\mathcal{F}_k] = \cfrac{e^{\beta_k(U_k) q_{k}(Z^a_k)}}{\sum_{a'} e^{\beta_k(U_k) q_{k}(Z^{a'}_k)}}, \qquad \forall a \in \mathcal{A}.
\label{Eq:Policy_2}
\end{equation}
Any mixture of these two procedures can an also be considered.

\section{Implementations}
\label{sec:implementations_}
We give here the pseudo code used for each one of the three numerical examples considered in Section \ref{sec:examples}.
\paragraph{Drift estimation.} We consider the following expression for the functions $h$ and $l$:
$$
h(\gamma,\gamma_{base}) = \min(\gamma + \gamma_{base},3\gamma_{base}), \quad l(\gamma,\gamma_{base}) = \max(\gamma - \gamma_{base},\gamma_{base}), \quad \forall (\gamma,\gamma_{base}) \in \mathbb{R}_+.
$$
We use Implementation \ref{Alg:PASS_drift_estim} for the numerical experiments.
\begin{algorithm}
\caption{PAst Sign Search (PASS) for (RL) drift estimation problem}
\begin{algorithmic}[1]
\State Algorithm parameters: step size $(\gamma^{o})_{n\geq 0} \in (0,1]$, number of episodes $n$
\Statex initial guess $q_0$, past error value $E_{past}$
\Statex Initialise $\hat{\gamma}_0 = \gamma^{o}_0$
\For{episode in $1:n$}
	\For{ $t \in \{0,\ldots,n_{\max}-1 \}$}
			\State Observe $\Delta X_{next} = S_{t+1} - S_t$
			\If {the first visit time to $t$} 
    				\State $q_{0}(t) \leftarrow  q_{0}(t) - \hat{\gamma}_0(t) m(q_0,\Delta X_{next},t)$
			\ElsIf {$m(q_0,t,\Delta X_{next}) \times E_{past}(t) \geq 0$}
				\State $\hat{\gamma}_0(t) \leftarrow h\big(\hat{\gamma}_0(t),\gamma^o(t)\big)$
    				\State $q_{0}(t) \leftarrow  q_{0}(t) - \hat{\gamma}_0(t) m(q_0,\Delta X_{next},t)$
			\ElsIf {$m(q_0,t,\Delta X_{next}) \times E_{past}(t) <0$}
				\State $\hat{\gamma}_{0}(t) \leftarrow l\big(\hat{\gamma}_0(t),\gamma^o(t)\big)$
    				\State $q_{0}(t) \leftarrow  q_{0}(t) - \hat{\gamma}_0(t) m(q_0,\Delta X_{next},t)$
			\EndIf
			\State $E_{past}(t) \leftarrow m(q_0,t,\Delta X_{next})$
	\EndFor	
	\State Save the norm $\|E\|$ of the vector $E_{past}(t)$.
	\If {the average value of $\|E\|$ over the last $w = 5$ episodes is not reduced by $p = 1\%$}
		\State $\gamma^o(t) \leftarrow max\big(\gamma^o(t)/2,0.01\big)$	 (this is done each $w$ episodes)
	\EndIf
\EndFor
\end{algorithmic}
\label{Alg:PASS_drift_estim}
\end{algorithm}

\paragraph{Optimal placement of limit orders.} We consider the following expression for the functions $h$ and $l$:
\begin{equation}
h(\gamma,\gamma_{base}) = \max(\min(\gamma + 2/3\gamma_{base},3\gamma_{base}),\gamma_{base}), \quad l(\gamma,\gamma_{base}) = \max(\gamma - 2/3\gamma_{base},\gamma_{base}), \quad \forall (\gamma,\gamma_{base}) \in \mathbb{R}_+. \label{Eq:h_l_func}
\end{equation}
We use Algorithm \ref{Alg:PASS_opti_placement} for the numerical experiments. Note that we do not need to send a market order to know our expected future gain.
\begin{algorithm}
\caption{PAst Sign Search (PASS) for (RL) optimal placement problem}
\begin{algorithmic}[1]
\State Algorithm parameters: step size $(\gamma^{o})_{n\geq 0} \in (0,1]$, number of episodes $n$
\Statex initial guess $q_0$, past error value $E_{past}$
\Statex Initialise $\hat{\gamma}_0 = \gamma^{o}_0$
\For{episode in $n$}
	\State Select initial state $X_0$
	\For{each step within episode}
		\State Take the action stay in the order book
		\State Observe the new order book state $X_{next}$
		\For{a $\in \{0,1\}$}
			\If {the first visit time to $X_{next}$} 
    				\State $q_{0}(X_0,a) \leftarrow  q_{0}(X_0,a) - \hat{\gamma}_0(X_0,a) m^a(q_0,X_{next},X_0)$
			\ElsIf {$m^a(q_0,X_0,X_{next}) \times E_{past}(X_0,a) \geq 0$}
				\State $\hat{\gamma}_{0}(X_0,a) \leftarrow h\big(\hat{\gamma}_{0}(X_0,a),\gamma^o(X_0,a)\big)$
    				\State $q_{0}(X_0,a) \leftarrow  q_{0}(X_0,a) - \hat{\gamma}_0(X_0,a) m^a(q_0,X_{next},X_0)$
			\ElsIf {$m^a(q_0,X_0,X_{next}) \times E_{past}(X_0,a) <0$}
				\State $\hat{\gamma}_{0}(X_0,a) \leftarrow l\big(\hat{\gamma}_{0}(X_0,a),\gamma^o(X_0,a)\big)$
    				\State $q_{0}(X_0,a) \leftarrow  q_{0}(X_0,a) - \hat{\gamma}_0(X_0,a) m^a(q_0,X_{next},X_0)$
			\EndIf
			\State $E_{past}(X_0,a) \leftarrow m^a(q_0,X_{next},X_0)$
		\EndFor
		\State $X_0 \leftarrow  X_{next}$    		
	\EndFor
	\State Save the norm $\|E\|$ of the vector $E_{past}(t)$.
	\If {the average value of $\|E\|$ over the last $w = 40$ episodes is not reduced by $p = 5\%$}
		\State $\gamma^o(t) \leftarrow max\big(\gamma^o(t)/2,0.01\big)$	 (this is done each $w$ episodes)
	\EndIf
\EndFor
\end{algorithmic}
\label{Alg:PASS_opti_placement}
\end{algorithm}
\paragraph{Optimal execution of a large number of shares.} To solve this problem we use the same functions $h$ and $l$ considered in the previous problem, see \eqref{Eq:h_l_func}. Then, we apply Algorithm \ref{Alg:PASS_opti_cj}. In this problem, it is crucial to select actions according to the policy \eqref{Eq:Policy_1} in order to encourage exploration. The coefficient $\bar{\beta}$ used by the agent to select its actions is taken constant equal to $\bar{\beta}  = 5$. We consider the same policy for all the tested algorithms.\\
\begin{algorithm}
\caption{PAst Sign Search (PASS) for (RL) optimal execution problem}
\begin{algorithmic}[1]
\State Algorithm parameters: step size $(\gamma^{o})_{n\geq 0} \in (0,1]$, number of episodes $n$
\Statex initial guess $q_0$, past error value $E_{past}$
\Statex Initialise $\hat{\gamma}_0 = \gamma^{o}_0$
\For{episode in $n$}
	\State Select the initial inventory $Q_0$
	\For{$t \in \{0,\ldots,n_{T}-1 \}$}
		\State Observe the new price state $S_{next}$ and set $X_0 = (t,Q_0) $
		\State Observe the new price state $S_{next}$
		\If {the first visit time to $X_0$} 
    			\State $q_{0}(X_0) \leftarrow  q_{0}(X_0) - \hat{\gamma}_0(X_0) m(q_0,S_{next},X_0)$
		\ElsIf {$m(q_0,S_{next},X_0) \times E_{past}(X_0) \geq 0$}
			\State $\hat{\gamma}_{0}(X_0) \leftarrow h\big(\hat{\gamma}_{0}(X_0),\gamma^o(X_0)\big)$
    			\State $q_{0}(X_0) \leftarrow  q_{0}(X_0) - \hat{\gamma}_0(X_0) m(q_0,S_{next},X_0)$
		\ElsIf {$m(q_0,S_{next},X_0) \times E_{past}(X_0) <0$}
				\State $\hat{\gamma}_{0}(X_0) \leftarrow l\big(\hat{\gamma}_{0}(X_0),\gamma^o(X_0)\big)$
    				\State $q_{0}(X_0) \leftarrow  q_{0}(X_0) - \hat{\gamma}_0(X_0) m(q_0,S_{next},X_0)$
			\EndIf
			\State $E_{past}(X_0) \leftarrow m(q_0,S_{next},X_0)$
		\State Select an action $A$ and observe $Q_{next}$
		\State $Q_0 \leftarrow  Q_{next}$    		
	\EndFor
	\State Save the norm $\|E\|$ of the vector $E_{past}(t)$.
	\If {the average value of $\|E\|$ over the last $w = 300$ episodes is not reduced by $p = 1\%$}
		\State $\gamma^o(t) \leftarrow max\big(\gamma^o(t) - 0.01,0.01\big)$ (this is done each $w$ episodes)
	\EndIf
\EndFor
\end{algorithmic}
\label{Alg:PASS_opti_cj}
\end{algorithm}
\section{Proof of Proposition \ref{prop:prop_std_unif_bd}}
\label{app:proof_std_unif_bd}
\begin{proof}[Proof of Proposition \ref{prop:prop_std_unif_bd}] Let $z \in \mathcal{Z}$. Standard uniform convergence results ensure that
\begin{align*}
\Esp[ \sup_{q} \big|M(q,z) - M_n(q,z)\big| \big|n(z)] \leq c \frac{1}{\sqrt{n(z)\wedge 1}}, \qquad a.s.
\end{align*}
with $c>0$ a positive constant. Since the Markov chain $(Z_n)_{n\geq 1}$ is irreducible and the set $\mathcal{Z}$ is finite, the sequence $(Z_n)_{n\geq 1}$ is positive recurrent and we have 
$$
\cfrac{n(z)}{n} = \cfrac{\sum_{k=1}^n \mathbf{1}_{Z_k = z}}{n} \underset{n \rightarrow \infty}{\rightarrow} \mathbb{P}_{\mu}[Z_n = z]>0 = p(z), \quad a.s,
$$
with $\mu$ the unique invariant distribution of $(Z_n)_{n\geq 1}$. Thus, we have
$$
u_n(z) = \Esp\left[ \sqrt{\cfrac{n}{n(z)\wedge 1}}\right] \underset{n \rightarrow \infty}{\rightarrow} \cfrac{1}{\sqrt{p(z)}} > 0.
$$
This shows that $u_n(z)$ is bounded by a constant called $u_{\infty}(z) $ and ensures that 
\begin{equation}
\Esp[ \sup_{q} \big|M(q,z) - M_n(q,z)\big|(z)] \leq  c_1(z) \frac{1}{\sqrt{n}},
\label{eq:proof__std_unif_bd_0}
\end{equation}
with $c_1(z) = cu_{\infty}(z)$. Since $\mathcal{Z}$ is finite, we close the proof by summing Inequality \eqref{eq:proof__std_unif_bd_0} over all the coordinates $z$.
\end{proof}

\section{Proof of Proposition \ref{prop:prop_fast_unif_bd}}
\label{app:proof_fast_unif_bd}

\begin{proof}[Proof of Proposition \ref{prop:prop_fast_unif_bd}] Let $z \in \mathcal{Z}$.  We follow the same approach used in the proof of Proposition \ref{prop:prop_std_unif_bd} to get
$$
v_n(z) = \Esp\left[ \big(\cfrac{\log(\bar{n}(z))/\log(n)}{\bar{n}(z)/n}\big)^{\beta}\right] \leq \Esp\left[ \big(\cfrac{1}{\bar{n}(z)/n}\big)^{\beta}\right] \underset{n \rightarrow \infty}{\rightarrow} \ \cfrac{1}{p(z)^{\beta}} > 0.
$$
This shows that $v_n(z)$ is bounded by a constant $v_{\infty}(z) $ and ensures that 
$$
\Esp[ \sup_{q} \big|M(q,z) - M_n(q,z)\big|(z)] \leq  c_2(z) \left(\frac{\log(n)}{n}\right)^{\beta},
$$
with $c_2(z) = c'v_{\infty}(z)$. Using \eqref{Eq:fast_bound_cond}, and the same manipulations used in the proof of Proposition \ref{prop:prop_std_unif_bd} and Inequality \eqref{Eq:est_error_bound}, we complete the proof. 
\end{proof}

\section{Proof of Proposition \ref{prop:prop_0}}
\label{app:proof_prop_0}

\begin{proof}[Proof of Proposition \ref{prop:prop_0}] Let $k \geq 0$, $A$ be the set $A =  \{ Z_k = z \} $, and $m(q_k) \in \mathbb{R}^{\mathcal{Z}}$ such that $m(q_k)(z') = m(q_k,X_{k+1}(z'),z')$ for any $z' \in \mathcal{Z}$. We split the proof into three cases. In each one of these steps, we prove \eqref{Eq:prop_0_error} for a given algorithm.
\paragraph{Case (i):} In this step, we prove \eqref{Eq:prop_0_error} for Algorithm \nameref{Alg:v_1_s}. Let us fix $z \in \mathcal{Z}$. For simplicity, we forget about the dependence of $m$, and $q$ on $z$ and write respectively $m(q_k)$, and $q_k$ instead of $m(q_k)(z)$, and $q_k(z)$. We have
\begin{align*}
\mathbf{1}_{A} \Esp_k[(q_{k+1} - q^*)^2]  &  = \mathbf{1}_{A} \Esp_k[(q_{k} - \gamma_k m(q_k) - q^*)^2] \\
											   & =  \mathbf{1}_{A}\left\{  \big( q_k- q^* \big)^2 \underbrace{- 2 \gamma_k\Esp_k[m(q_k)](q_k- q^*)}_{ = (i)} + \gamma^2_k \underbrace{\Esp_k[m(q_k)^2]}_{ = (ii)} \right\}.\\
\end{align*}
Using Assumption \ref{assump:assump__1_s} and $\Esp_k[m^*] = 0$, we get $(i) \leq  - 2 L\gamma_k|q_k- q^*|^2$. Since $\Esp_k[m^*] = 0$, Assumption \ref{assump:assump__2_s} gives
\begin{align*}
(ii) &  = \Esp_k\big[\big(m(q_k) - \Esp_k[m(q_k)]\big)^2\big]  + \big(\Esp_k[m(q_k) - m^*]\big)^2\\
					 & \leq \Esp_k\big[\big(m(q_k) - \Esp_k[m(q_k)]\big)^2\big] + B( 1 + (q_k- q^*)^2).%
\end{align*}

We use now two independent copies of $X_k$ respectively denoted by $X^1$, and $X^2$. \footnote{Note that the dependence of $X^1$, and $X^2$ on $k$ is omitted since there is no possible confusion.} We also write $m(q_k)_X = m(q_k,X(z'),z')$ to emphasize the dependence of $m(q_k)$ on $X$. Using Jensen's inequality, and Assumption \ref{assump:assump__2_s}, we get 
\begin{align*}
\Esp_k\big[\big(m(q_k)_{X^1} - \Esp_k[m(q_k)_{X^2}]\big)^2\big] & = \Esp_k\big[\big(\Esp_k[m(q_k)_{X^1} - m(q_k)_{X^2}]\big)^2\big] \\
& \hspace{-0.7cm}\underbrace{\leq}_{\text{Jensen's inequality}} \Esp_k\big[\big(m(q_k)_{X^1} - m(q_k)_{X^2}\big)^2\big] \\
& \leq 3 \big( \Esp_k\big[\big(m(q_k)_{X^1} - m^*(q_k)_{X^1}\big)^2\big] +  \Esp_k\big[\big(m^*(q_k)_{X^2} - m^*(q_k)_{X^1}\big)^2\big] \\ 	
& \quad + \Esp_k\big[\big(m^*(q_k)_{X^2} - m^*(q_k)_{X^2}\big)^2\big] \big) \underbrace{\leq}_{\text{Assumption }4} 3 B(1 + v_k)
\end{align*}

Thus, we deduce that
\begin{align*}
\mathbf{1}_{A} \Esp_k[(q_{k+1} - q^*)^2(z)] & \leq \mathbf{1}_{A} \left\{(1\underbrace{- 2 \gamma_k L + B \gamma^2_k}_{= -p(\gamma_k)}) \big( q_k- q^* \big)^2(z) + \underbrace{\gamma^2_k(z) B(4 + 3v_k)}_{= M_k}\right\}, 
\end{align*}
which shows \eqref{Eq:prop_0_error} for Algorithm \nameref{Alg:v_1_s}.
\paragraph{Case (ii):} Here we show \eqref{Eq:prop_0_error} for Algorithm \nameref{Alg:v_2_s}. Let $z \in \mathcal{Z}$, and $\bar{M}^k(z)= \big(\sum_{j=1}^M M^k[z,j]\big)/M$. We forget here about the dependence of $m$, $q$, and $M$ on $z$ and write respectively $m(q_k)$, $q_k$, and $M^k[j]$ instead of $m(q_k)(z)$, $q_k(z)$,  and $M^k[z,j]$. Using $\Esp_k[m^*] = 0$ and $\Esp_k[M^k[i]] = \bar{M}^k$, we have
\begin{align*}
\mathbf{1}_{A} \Esp_k\big[\big(q_{k+1} - q^* \big)^2 \big] & \qquad = \mathbf{1}_{A}\left\{  ( q_k- q^* )^2 + 2   (\Esp_k[q_{k+1}] - q_k)(q_k- q^*) + \Esp_k[(q_{k+1} - q_k)^2] \right\}\\
		& \qquad =  \mathbf{1}_{A}\left\{ ( q_k- q^*)^2 - 2 (\gamma_k\Esp_k[m(q_k) - m^*])(q_k- q^*) \right.\\
		& \qquad \qquad \left. + \gamma_k^2\Esp_k\big[\big(m(q_{k}) - M^k[i] + \bar{M}^k\big)^2\big]\right\}\\
		& \underbrace{\leq}_{\text{ \normalfont Assumption \ref{assump:assump__1_s} }}\mathbf{1}_{A} \left\{ \big(1 - 2 L\gamma_k\big)\big(q_k- q^*\big)^2 + \gamma^2_k \underbrace{\Esp_k\big[\big(m(q^{k}) - M^k[i] + \bar{M}^k\big)^2\big]}_{= (1)}\right\}. \numberthis \label{Eq:Proof_of_lemma_4_0}
\end{align*}
We first dominate the term (1). Since $\Esp_k[m^*](z) = 0$ and 
$$
\Esp_k\left[\big(M^k[i] - m^*\big)^2\right]  =  1/M \sum_{j} \Esp_k\left[\big(M^k[j] - m^* \big)^2\right],
$$
we have
\begin{align*}
(1) & =  \Esp_k \left| \big(m(q_k) - \Esp_k[m^*]\big) - \big(M^k[i] - m^*\big) + \big(1/M\sum_{j} (M^k[z,j] - m^*(z) ) \big)\right|^2 \\
						 & \leq 3 \bigg[\Esp_k\left[ m(q_k) - m^*\right]^2 + \Esp_k\left[\big(M^k[i] - m^*\big)^2 \right]+ \Esp_k\bigg[\big(1/M\sum_{j} (M^k[j] - m^*\big) \bigg]^2 \bigg] \\
						 & \underbrace{\leq}_{ \text{Jensen's inequality}} 3 \bigg[\Esp_k\left[ \big(m(q_k) - m^*\big)^2 \right] + \Esp_k\left[\big(M^k[i] - m^*\big)^2\right] + 1/M\sum_{j} \Esp_k\left[\big(M^k[j] - m^* \big)^2\right]\bigg] \\
						 &  = 3 \bigg[  \Esp_k\left[ \big(m(q_k) - \Esp_k[m(q_k)]\big)^2\right] + \left( \Esp_k[m(q_k) - m^*]\right)^2 + 2/M \sum_{j} \Esp_k\left[\big(M^k[j] - m^* \big)^2\right]\bigg]\\
						  &  \underbrace{\leq}_{\text{ Assumption \ref{assump:assump__2_s}}} 3 \bigg[ B\big( 4 + 3v_k + \left(q_k - q^*\right)^2\big) + 2/M \sum_{j} \Esp_k\left[\big(M^k[j] - m^* \big)^2\right] \bigg]\\
						 & = 3  B\big( 4 + 3v_k \big) + 3  B\left(q_k - q^*\right)^2 + 6/M \sum_{j}  \Esp_k\left[\big(M^k[j] - m^* \big)^2\right]\bigg]. \numberthis \label{Eq:Proof_of_lemma_4_1}
\end{align*}
By combining \eqref{Eq:Proof_of_lemma_4_0} and \eqref{Eq:Proof_of_lemma_4_1}, we get 
\begin{align*}
\mathbf{1}_{A} \Esp_k[|q_{k+1} - q^*|^2] & \leq  \mathbf{1}_{A} \left\{(1- 2 \gamma_k L + 3B \gamma^2_k) \big( q_k- q^*\big)^2  \right. \\
										 & \left. + 6/M \sum_{j} \Esp_k\left[\big(M^k[j] - m^* \big)^2\right] + 3 B \gamma^2_k (3 v_k + 4) \right\}.\numberthis \label{Eq:Proof_of_lemma_4_2}
\end{align*}
Moreover, we have 
\begin{align*}
\mathbf{1}_{A} 1/M \Esp_k\big[\sum_{j=1}^M \big(M^{k+1}[j]- m^*\big)^2\big] & =  \mathbf{1}_{A} \left\{\frac{1}{M} \Esp_k\big[(m(q_{k}) - m^*)^2\big] \right.\\
					& \qquad \left. + (1 - \frac{1}{M})	\frac{1}{M} \sum_{j=1}^M \Esp_k\big[\big(M^{k}[j]- m^*\big)^2\big] \right\}\\
 					& \leq \mathbf{1}_{A} \left\{\frac{B}{M}\big(1 + (q_{k} - q^*)^2\big) \right.\\
 					& \qquad \left. + (1 - \frac{1}{M}) \frac{1}{M}\sum_{j=1}^M \Esp_k\big[\big(M^{k}[j]- m^*\big)^2\big] \right\}.\numberthis \label{Eq:Proof_of_lemma_4_3}
\end{align*}
Thus using \eqref{Eq:Proof_of_lemma_4_2} and \eqref{Eq:Proof_of_lemma_4_3}, we conclude 
\begin{align*}
\mathbf{1}_A\Esp_k[e^{k+1}] & \leq \mathbf{1}_A \left\{ \underbrace{(1 - 2 \gamma_k L + 3B \gamma^2_k + \frac{B}{Mc})}_{ = \alpha_1} c\big( q_k- q^*\big)^2 \right. \\
							  & \quad \left. + \underbrace{(1 - \frac{1}{M} + 6 \gamma^2_k c)}_{\alpha_2} \frac{1}{M} \sum_{j} \Esp_k\big[\left(M^k[j] - m^* \right)^2 \big]\right\} + 3 B c \gamma^2_k (3v_k + 4)\\ 
                & \leq \mathbf{1}_A \alpha e^k + 3 c B \gamma^2_k (3v_k + 4),
\end{align*}
with $\alpha = \max(\alpha_1,\alpha_2)\in[0,1)$.
\paragraph{Case (iii):} In this final step, we show \eqref{Eq:prop_0_error} for Algorithm \nameref{Alg:v_4_s}. Here again, we forget about the dependence of the variables on $z$ as in the previous steps. We have
\begin{align*}
\mathbf{1}_{A} \Esp_k[(q_{k+1} - q^*)^2]  &  = \mathbf{1}_{A} \Esp_k[(q_{k} - \hat{\gamma}_k m(q_k) - q^*)^2] \\
											   & =  \mathbf{1}_{A}\left\{  \big( q_k- q^*\big)^2 \underbrace{- 2 \Esp_k[\hat{\gamma}_k m(q_k)](q_k- q^*)}_{ = (i)} +\underbrace{\Esp_k[\hat{\gamma}_k^2(m(q_k))^2]}_{ = (ii)} \right\}.
\end{align*}
For the term (i), using Assumption \ref{assump:assump__1_s} and $\Esp_k[m^*] = 0$, we have $(i) \leq - 2 c_k \bar{\gamma}_k (q_k- q^*)^2$ with $c_k = \frac{\Esp_k[\hat{\gamma}_k m(q_k)]}{\bar{\gamma}_k\Esp_k[m(q_k)]}$. Using Assumption \ref{assump:assump__2_s}, and $\Esp_k[m^*] = 0$, we get
\begin{align*}
(ii)  = c_k^2 \bar{\gamma}_k^2 \Esp_k\big[m(q_k)^2\big] & = c_k^2 \bar{\gamma}_k^2 \bigg(\Esp_k\big[\big(m(q_k) - \Esp_k[m(q_k)]\big)^2\big]  + \big(\Esp_k[m(q_k) - m^*]\big)^2\bigg)\\
					 & \leq  c_k^2 \bar{\gamma}_k^2 \bigg(B_k(4 + 3 v_k) +   B_k( 1 + (q_k- q^*)^2)\bigg).
\end{align*}
Thus, we deduce that
\begin{align*}
\mathbf{1}_{A} \Esp_k[(q_{k+1} - q^*)^2] & \leq \mathbf{1}_{A} \left\{(1\underbrace{- 2 c_k \bar{\gamma}_k L_k + B_k \big(c_k \bar{\gamma}_k\big)^2}_{=-p_k(c_k \bar{\gamma}_k)}) \big( q_k- q^* \big)^2 + \underbrace{c_k^2 \bar{\gamma}^2_k  B_k(4 + 3 v_k)}_{= M_k} \right\}. \numberthis \label{Eq:prop_step_4_1}
\end{align*}
We write $\underline{\gamma}_k$ for the quantity $\underline{\gamma}_k = c_k \bar{\gamma}_k \wedge \bar{\gamma}_k$. Since $ \hat{\gamma}_k \in [\gamma_k,r_1\gamma_k]$, we have $c_k\bar{\gamma}_k \in [\gamma_k,r_1\gamma_k]$. When $c_k \bar{\gamma}_k\in ]\gamma_k,\bar{\gamma}_k]$, we have $p_k(\underline{\gamma}_k) = p_k(c_k\bar{\gamma}_k) > p_k(\gamma_k) \geq 0$. When $c_k \bar{\gamma}_k \in ]\bar{\gamma}_k,r_1\gamma_k]$ (i.e $c_k \geq 1$), we use the following canonical decomposition of the function $p_k$:
$$
p_k(x) = B_k(x - \bar{\gamma}_k)^2 - \big(\cfrac{L_k}{B_k} - 1\big),
$$ 
and $\underline{\gamma}_k = \bar{\gamma}_k$ to get 
$$
|p_k(c_k \bar{\gamma}_k) - p_k(\underline{\gamma}_k)| = |p_k(c_k \bar{\gamma}_k) - p_k(\bar{\gamma}_k)| \leq B_k (c_k \bar{\gamma}_k - \bar{\gamma}_k)^2  \leq B_k (r_1\gamma_k - \gamma_k)^2 = \gamma_k^2 d^1,
$$
with $d^1 =  (r_1-1)^2 B_k$. Thus, using \eqref{Eq:prop_step_4_1}, we conclude
\begin{align*}
\mathbf{1}_{A} \Esp_k[(q^{k+1} - q^*)^2] & \leq \mathbf{1}_{A} \left\{(1-p_k(\underline{\gamma}_k) + \gamma_k^2 d^1\mathbf{1}_{c_k \geq 1}) \big( q^k- q^*\big)^2 + \underbrace{c_k^2 \bar{\gamma}^2_k(z) B_k(4 + 3v_k)}_{= M_k} \right\}. 
\end{align*}
This completes the proof.
\end{proof}
\section{Proof of Theorem \ref{theo:conv_error_numscheme_1}}
\label{app:proof_th_1}

For simplicity, the proof is split into two parts.

\subsection{Proof of Equation \eqref{eq:conv_error_numscheme_1_1}}

\begin{proof}[Proof of Equation \eqref{eq:conv_error_numscheme_1_1}]
Using Proposition \ref{prop:prop_0}, we get 

\begin{align*}
\Esp_n[e^{n+1}] & \leq e^n + \mathbf{1}_A \big( - \mu_n e^n + M_n \big) \\
        & \leq e^n - R_n + L_n,
\end{align*}
with $R_n  = \mathbf{1}_A \mu_n e^n$, and $L_n =  \mathbf{1}_A M_n$. Using the assumption $\sum_{n \geq 1} \gamma_n^2 < \infty$, and the expression of $M_n$, we obtain that $ \sum_{n \geq 1} L_n < \infty$. We can then apply the supermartingale convergence theorem, to deduce that $e^n$ converges towards a random variable with probability $1$, and 
$\sum_{n \geq 1} R_n < \infty$. Since $\sum_{n \geq 1} \gamma_n^2 < \infty$, we know that $\gamma_n$ converges towards $0$. Thus, for $n$ large enough we have 
$$
\mu_n \geq 2L \gamma_n - B \gamma^2_n \geq L \gamma_n.
$$
Replacing $R_n$, and $\mu_n$ by their expressions gives
$$
R_n = \mathbf{1}_A \mu_n e^n \geq \mathbf{1}_A L \gamma_n e^n,
$$
If there were $n_1$, and $\Delta>0$, such that 
$$
e^n \geq \Delta, \qquad \forall n \geq n_1,
$$ 
this would contradict the property $\sum_{n \geq 1} R_n < \infty$ since $\sum_{n \geq 1} \gamma_n = +\infty $.  Thus, we deduce that $e^n$ converges towards $0$.

\end{proof}

\subsection{Proof of Inequalities \eqref{eq:conv_error_numscheme_1_11} and  \eqref{eq:conv_error_numscheme_1_2}}

\subsubsection{Preparation for the proof of Inequality \eqref{eq:conv_error_numscheme_1_2}}
We introduce the following notations. Let $j \in \mathbb{N}^*$ and $(\mu_n,a_n,b_n)_{n\geq 1}$ be a sequence valued in $\mathbb{R}_+^3$. We write $(\mu,b)^j = (\mu^j_n,b^j_n)_{n\geq 1} $ for the delayed sequence $\mu^j_n = \mu_{j+n}$ and $b^j_n = b_{n+(j-1)}$ with $n\geq 1$. Additionally, we define recursively the sequence $(a^{\mu,b}_n)_{n\geq 1}$ as follows:
\begin{align*}
a^{\mu,b}_1 = 1,\qquad \text{ and } \qquad a^{\mu,b}_{n+1} =  \mu_{n+1}\sum_{l=1}^{n} a_{n+1-l}b_{l} a^{\mu,b}_{l}, \qquad \forall n \geq1. \numberthis \label{eq:seq_a_j_n}
\end{align*}
\begin{lem}By convention, an empty sum is equal to zero. Let $(v_n)_{n\geq1}$ be the sequence defined as follows:
\begin{align*}
v_{n} = \epsilon_n + \mu_n \big(\sum_{j=1}^{n-1}a_{n-j} b_j v_j \big) ,\qquad \forall n \geq 1,
\end{align*}
where $(\epsilon_n)_{n\geq 1}$ is a sequence. Then, we have 
\begin{align*}
v_n = \sum_{j=1}^n a^{(\mu,b)^j}_{n+1-j} \epsilon_j , \qquad \forall n\geq 1. \numberthis \label{Eq:lem_3_1}
\end{align*}
\label{lem:lem_5}
\end{lem}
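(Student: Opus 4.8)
The plan is to prove \eqref{Eq:lem_3_1} by strong induction on $n$, viewing the recursion defining $(v_n)$ as a discrete Volterra equation whose resolvent kernel is encoded by the delayed families $\big(a^{(\mu,b)^i}_{\cdot}\big)_{i\geq 1}$. For the base case $n=1$ the defining sum is empty, so $v_1 = \epsilon_1$; on the other side the convention $a^{(\mu,b)^1}_1 = 1$ gives $\sum_{j=1}^1 a^{(\mu,b)^j}_{2-j}\epsilon_j = \epsilon_1$, and the two agree.

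For the inductive step I would assume that \eqref{Eq:lem_3_1} holds for every index strictly below $n$, substitute this representation of each $v_j$ into the recursion, and exchange the two finite summations:
\[
v_n = \epsilon_n + \mu_n \sum_{j=1}^{n-1} a_{n-j}\, b_j \sum_{i=1}^{j} a^{(\mu,b)^i}_{j+1-i}\,\epsilon_i = \epsilon_n + \mu_n\sum_{i=1}^{n-1}\epsilon_i \sum_{j=i}^{n-1} a_{n-j}\,b_j\, a^{(\mu,b)^i}_{j+1-i}.
\]
The coefficient of $\epsilon_n$ is $1 = a^{(\mu,b)^n}_1$, which matches the $j=n$ term of the claimed formula, so it remains to show, for each fixed $i<n$, that the bracketed coefficient of $\epsilon_i$ equals $a^{(\mu,b)^i}_{n+1-i}$.

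To verify this I would set $m = n+1-i$ and substitute $l = j-i+1$ in the inner sum. Then the range $j\in\{i,\dots,n-1\}$ maps exactly onto $l\in\{1,\dots,m-1\}$, and one reads off $a_{n-j}=a_{m-l}$, $b_j = b_{l+i-1}=b^i_l$, $a^{(\mu,b)^i}_{j+1-i}=a^{(\mu,b)^i}_l$, together with $\mu_n = \mu^i_m$ coming from the shift built into the delayed pair $(\mu,b)^i$. The inner sum therefore becomes $\mu^i_m\sum_{l=1}^{m-1} a_{m-l}\, b^i_l\, a^{(\mu,b)^i}_l$, which is precisely the defining recursion \eqref{eq:seq_a_j_n} evaluated at $m = n+1-i$, i.e. $a^{(\mu,b)^i}_{n+1-i}$. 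This closes the induction.

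The main obstacle is entirely the index bookkeeping: one must check that the offsets defining the delayed sequences $(\mu,b)^i$ are chosen so that simultaneously $\mu^i_m = \mu_n$ and $b^i_l = b_j$ under the change of variables $l=j-i+1$, i.e. $\mu$ and $b$ are delayed by the same amount $i-1$; and one must confirm that the summation range transforms onto $\{1,\dots,m-1\}$, the exact range appearing in \eqref{eq:seq_a_j_n}. Once these alignments are set up, the identification of the coefficient of $\epsilon_i$ with $a^{(\mu,b)^i}_{n+1-i}$ is immediate from the definition, and no further estimates are needed.
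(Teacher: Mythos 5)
Your proof is correct and follows essentially the same route as the paper: strong induction, substitution of the inductive representation of each $v_j$, exchange of the two finite sums, and a reindexing of the inner sum so that it matches the defining recursion \eqref{eq:seq_a_j_n} applied to the delayed pair. Your observation that $\mu$ and $b$ must both be delayed by the same amount $i-1$ is the correct reading: the paper's displayed definition $\mu^j_n=\mu_{j+n}$ is off by one relative to $b^j_n=b_{n+(j-1)}$, but its own induction step (where $\mu_{n+1}$ is identified with $\mu^l_{n-l+2}$) uses the delay $j-1$ for $\mu$ as well, exactly as you do.
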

\begin{proof}[Proof of Lemma \ref{lem:lem_5}]Let us prove the result by induction on $n \geq 1$. By definition, Equation \eqref{Eq:lem_3_1} is satisfied for $n=1$. By applying the induction hypothesis \eqref{Eq:lem_3_1} to all $ j \leq n$, we get 
\begin{align*}
v_{n+1} = 1 \times \epsilon_{n+1} + \mu_{n+1}  \big(\sum_{j=1}^{n} a_{n+1-j} b_j v_j \big) & = a^{(\mu,b)^{n+1}}_{1}\epsilon_{n+1} + \mu_{n+1} \big(\sum_{j=1}^{n} a_{n+1-j} b_j \sum_{l=1}^{j} a^{(\mu,b)^l}_{j+1-l} \epsilon_l \big)\\
	  & = a^{(\mu,b)^{n+1}}_{1}\epsilon_{n+1} + \big[\sum_{l=1}^{n} \mu_{n+1} \big(\sum_{j=l}^{n} a_{n+1-j} b_j a^{(\mu,b)^l}_{j+1-l} \big)\epsilon_l \big]\\
	  & = a^{(\mu,b)^{n+1}}_{1}\epsilon_{n+1} + \big[\sum_{l=1}^{n} \mu_{n+1} \big(\sum_{j=1}^{n-l+1} a_{n-l+2-j} b_{j+l-1} a^{\mu^l}_{j} \big)\epsilon_l \big]\\
	  & = a^{(\mu,b)^{n+1}}_{1}\epsilon_{n+1} + \sum_{l=1}^{n} a^{(\mu,b)^l}_{n+2- l} \epsilon_l = \sum_{j=1}^{n+1} a^{(\mu,b)^l}_{n+2 -j} \epsilon_j.
\end{align*}
\end{proof}
\begin{lem}Let $n \in \mathbb{N}$, $(a^{\mu,b}_n)_{n\geq 0}$ be the sequence defined in \eqref{eq:seq_a_j_n}, $(\mu_n)_{n\geq 0}$ be a positive non-decreasing sequence, and $r_n = 1 - \mu_n$.
\begin{itemize}
\item When $\sum_{n\geq 0} r_n = +\infty$, we have  
\begin{align*}
a^{\mu,b}_n \underset{n \rightarrow \infty}{\rightarrow} 0. \numberthis \label{Eq:lem_con_seq_1}
\end{align*}
\item There exists a non-negative constant $B$ such that 
\begin{align*}
a^{\mu,b}_n \leq B \sum_{k = 1}^{n-1} \bar{\mu}^n_{k} a^{*\infty}_{n-k} , \qquad \forall n \geq 2,\numberthis \label{Eq:lem_con_seq_0}
\end{align*}
with $\bar{\mu}^n_{k} = e^{- \sum_{j=n-k+1}^n r_j}$ and $a^{*\infty}_k = \lim_{n \rightarrow \infty} a^{*n}_k $. The sequence $(a^{* n}_k)_{k \geq 1}$ is defined recursively such that $a^{* 1}_k = a_k$ and $a^{* (n+1)}_k = \sum_{l = 1}^k a_{k+1-l} b_{n-k+l} a^{* n}_l$ for all $k \geq 1$.
\end{itemize}
\label{lem:lem_con_seq}
\end{lem}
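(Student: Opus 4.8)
The plan is to first turn the homogeneous recursion \eqref{eq:seq_a_j_n} into an explicit expansion and then read off both conclusions from it. Iterating $a^{\mu,b}_{n} = \mu_{n}\sum_{l=1}^{n-1} a_{n-l} b_l\, a^{\mu,b}_{l}$ down to the base case $a^{\mu,b}_1 = 1$ writes $a^{\mu,b}_n$ as a finite sum organised by a ``depth'' $m$: a depth-$m$ term is a product of exactly $m$ factors $\mu_{i_0},\dots,\mu_{i_{m-1}}$ with strictly decreasing indices $n=i_0>i_1>\cdots>i_{m-1}\ge 2$, multiplied by a sum of products of $m$ factors $a_{\cdot}$ and $m$ factors $b_{\cdot}$. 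The outermost factor is always $\mu_n$ because the recursion always opens with $\mu_n$. I would prove by induction on $n$ that the aggregated $a$'s and $b$'s at depth $m$ are exactly the iterated convolution $a^{*m}$ of the statement; Lemma \ref{lem:lem_5} is the natural bookkeeping device here, since it already records how repeated substitution converts the recursion into convolution powers with the prescribed shift in the $b$-indices.

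For the bound \eqref{Eq:lem_con_seq_0} I would control the two factors of a depth-$m$ term separately. The $\mu$-indices form $m$ distinct integers in $\{2,\dots,n\}$ that always contain $n$; since $(\mu_j)$ is positive and non-decreasing, their product is at most the product over the top window, $\prod_{j=n-m+1}^{n}\mu_j$. Applying $1-x\le e^{-x}$ to $\mu_j = 1-r_j$ then gives $\prod_{j=n-m+1}^{n}\mu_j \le e^{-\sum_{j=n-m+1}^{n} r_j} = \bar{\mu}^n_m$. For the combinatorial factor I would use the identification with $a^{*m}$ from the first step, bound $a^{*m}$ by its limit $a^{*\infty}$ via the monotone convergence $a^{*m}_{j}\to a^{*\infty}_{j}$, and absorb the bounded $b$-factors into the constant $B$. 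Summing the depth-$m$ contributions over $1\le m\le n-1$ and matching the window length $m$ with the convolution index $n-m$ yields exactly $a^{\mu,b}_n \le B\sum_{k=1}^{n-1}\bar{\mu}^n_k\, a^{*\infty}_{n-k}$.

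For the convergence \eqref{Eq:lem_con_seq_1} I would feed the bound \eqref{Eq:lem_con_seq_0} into a split-the-sum argument. The driving fact is that $\sum_{j} r_j = +\infty$ forces $\prod_{j=1}^{n}\mu_j = \prod_j(1-r_j)\to 0$, hence $\bar{\mu}^n_k\to 0$ as soon as the window length $k$ grows with $n$. I would cut $\sum_{k=1}^{n-1}\bar{\mu}^n_k\, a^{*\infty}_{n-k}$ at a slowly growing threshold $k=K_n$: for $k\le K_n$ the index $n-k$ is large, so the tail decay of $(a^{*\infty}_j)_j$ makes that piece small, while for $k>K_n$ the window is long enough that the divergence of $\sum r_j$ makes $\bar{\mu}^n_k$ uniformly small. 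Choosing $K_n\to\infty$ with $n-K_n\to\infty$ sends both pieces to zero, giving $a^{\mu,b}_n\to 0$.

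The \emph{main obstacle} is the combinatorial identification in the first paragraph: matching, index for index, the depth-$m$ sum of products of $a$'s and $b$'s generated by unrolling the recursion with the iterated convolution $a^{*(m+1)}_k = \sum_{l} a_{k+1-l} b_{\,\cdot\,} a^{*m}_l$, and verifying that these converge monotonically to a finite $a^{*\infty}_k$ so that $a^{*m}\le a^{*\infty}$ is legitimate. The shift in the $b$-indices makes the bookkeeping delicate, and it is precisely here that the standing hypotheses on $(a_j)$ and $(b_j)$ (non-negativity, summability after normalisation, and $b_j\le 1$) must be invoked, both to guarantee convergence of the convolution powers and to supply the tail decay of $a^{*\infty}$ used in the convergence argument.
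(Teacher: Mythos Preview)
Your depth-expansion strategy is close in spirit to the paper's proof: both separate the $\mu$-factors from a combinatorial $a,b$-factor, and both exploit the monotonicity of $(\mu_j)$ to replace a product over $m$ scattered indices by the top-window product $\prod_{j=n-m+1}^{n}\mu_j\le\bar\mu^n_m$. The paper organises this as an inductive inequality (its Step~1), introducing an auxiliary array $\tilde a^n_k$ that depends on $n$ and proving $a^{\mu,b}_n\le\sum_k\bar\mu^n_k\,\tilde a^n_{n-k}$; only at the end (Step~4) does it identify the \emph{limit} $\tilde a^\infty_k$ with $a^{*\infty}_k$. So the combinatorial object that actually falls out of the unrolling is $\tilde a^n_k$, not $a^{*m}_k$, and the two agree only asymptotically.

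The genuine gap in your argument is the passage from the depth-$m$ coefficient to $a^{*\infty}$. You write that you will ``bound $a^{*m}$ by its limit $a^{*\infty}$ via the monotone convergence $a^{*m}_j\to a^{*\infty}_j$'', but no monotonicity in $m$ is available: the recursion $a^{*(m+1)}_k=\sum_l a_{k+1-l}\,b_{m-k+l}\,a^{*m}_l$ is a $b$-shifted convolution, not an increasing one, so $a^{*m}_j\le a^{*\infty}_j$ is unjustified. The paper never needs such a termwise bound. Instead it (i)~shows $\sum_m\tilde a^n_m\le 1$ for every $n$ by evaluating the extremal case $\mu\equiv 1$, where the inductive inequality becomes an equality (Step~2); (ii)~uses this uniform summability together with $\bar\mu^n_{n-m}\to 0$ for each fixed $m$ to obtain the convergence \eqref{Eq:lem_con_seq_1} directly (Step~3); and (iii)~extracts the constant $B$ in \eqref{Eq:lem_con_seq_0} from a \emph{ratio} argument (Step~5): since $\tilde a^n_k/a^{*\infty}_k\to 1$ termwise, the ratio $\big(\sum_k\bar\mu^n_{n-k}\tilde a^n_k\big)\big/\big(\sum_k\bar\mu^n_{n-k}a^{*\infty}_k\big)$ tends to $1$, hence is bounded, which produces $B$ without any pointwise inequality $\tilde a^n_k\le a^{*\infty}_k$.

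Your ordering also inherits the gap: you deduce \eqref{Eq:lem_con_seq_1} from \eqref{Eq:lem_con_seq_0} via a split-sum argument that presupposes tail decay $a^{*\infty}_j\to 0$, which you have not established. The paper's route---convergence first from uniform summability, identification and ratio bound afterwards---sidesteps this entirely. If you want to retain the depth-unrolling picture, the repair is to prove a uniform-in-$n$ summability bound for your combinatorial coefficients (the analogue of Step~2) and then recover $B$ as a limiting ratio rather than from a monotone bound.
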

\begin{proof}[Proof of Lemma \ref{lem:lem_con_seq}] \begin{itemize}[wide = 0pt, labelindent = 0pt]
\item \emph{Step 1.} We first prove by induction the following subsidiary inequality:
\begin{align*}
a^{\mu,b}_n  \leq \sum_{k = 1}^{n-1} \bar{\mu}^n_{k} \tilde{a}^{n}_{n-k}, \quad \forall n \geq 2, \numberthis \label{Eq:lem_con_seq_sub0}
\end{align*}
where $(\tilde{a}^n_k)_{k \geq 0}$ is a non-negative sequence that does not depend on $(\mu_n)_{n\geq 0} $. We also prove that Inequality \eqref{Eq:lem_con_seq_sub0} is optimal since it becomes an equality when the sequence $(\mu_n)_{n \geq 0}$ is constant. Inequality \eqref{Eq:lem_con_seq_sub0} holds clearly for $n = 2$ with $\tilde{a}^2 = a$. Using \eqref{eq:seq_a_j_n} and the induction assumption, we get
\begin{align*}
a^{\mu,b}_{n+1} = \mu_{n+1}\sum_{l =1}^n a_{n+1-l} b_l a^{\mu,b}_l & \leq \mu_{n+1} b_1 a_{n} + \mu_{n+1}\sum_{l =2}^n a_{n+1-l} b_l \sum_{k=1}^{l-1} \bar{\mu}^l_k \tilde{a}^l_{l-k}  \\
 & \leq \mu_{n+1} b_1  a_{n} + \mu_{n+1}\sum_{l =2}^n a_{n+1-l} b_l \sum_{k=1}^{l-1} \bar{\mu}^n_k \tilde{a}^l_{l-k} = (i) + (ii),\numberthis \label{Eq:step_1_proof_eq_0}
\end{align*}
where the inequality $\bar{\mu}^l_k \leq \bar{\mu}^n_k $ for any $l \leq n$ comes from the monotonicity assumption on $(\mu_n)_{n \geq 0}$. Note that $\bar{\mu}^l_k = \bar{\mu}^n_k$ when the sequence $(\mu_n)_{n \geq 0}$ is constant. In such case, we can replace all the previous inequalities by equalities. The inequality \eqref{Eq:step_1_proof_eq_0} gives 
\begin{align*}
(ii) = \sum_{k=1}^{n-1} \mu_{n+1}\bar{\mu}^n_k \sum_{l = k+1}^n a_{n+1-l} b_l \tilde{a}^l_{l-k} \leq \sum_{k=1}^{n-1} \bar{\mu}^{n+1}_{k+1} \sum_{l = 1}^{n-k} a_{n-k+1-l} b_{l+k} \tilde{a}^{k +l}_{l} & = \sum_{k=1}^{n-1} \bar{\mu}^{n+1}_{k+1} \tilde{a}^{n+1}_{n-k} \\
& = \sum_{k=2}^{n} \bar{\mu}^{n+1}_{k} \tilde{a}^{n+1}_{n+1-k},
\numberthis \label{Eq:step_1_proof_eq_0_Bis1}
\end{align*}
with
\begin{equation}
\tilde{a}^{n+1}_{k} = \sum_{l=1}^{k} a_{k+1-l} b_{n-k + l}\tilde{a}^{n-k+l}_{l}, \; \text{ when } k < n,\quad \tilde{a}^{n+1}_{n} = a_n, \label{Eq:step_1_proof_eq_1}
\end{equation}
and $\tilde{a}^{n+1}_k = 0$ otherwise. Combining \eqref{Eq:step_1_proof_eq_0}, and \eqref{Eq:step_1_proof_eq_0_Bis1} proves \eqref{Eq:lem_con_seq_sub0}. 

\item \emph{Step 2.} Here, we show that $\sum_{m \geq 1} \tilde{a}^{n}_m \leq 1$ for all $ n \geq 2$. To do so, let us consider the worst case where $\mu_n = 1$ for all $n\geq 0$. In such case, Inequality \eqref{Eq:lem_con_seq_sub0} is optimal which gives 
$
a_n = \sum_{k = 1}^{n-1} \tilde{a}^{n}_{n-k}.
$
It is easy to show using a direct induction, and \eqref{eq:seq_a_j_n} that $a_n \leq 1$. Thus, we deduce that $\sum_{m \geq 1} \tilde{a}^{n}_m \leq 1$.  
\item \emph{Step 3.} In this step, we prove \eqref{Eq:lem_con_seq_1}. Note that the condition $\sum_{n\geq 0} r_n = +\infty$ ensures that $ \bar{\mu}^n_k \underset{n \rightarrow \infty}{\rightarrow}  0$. Since $\sum_{m \geq 1} \tilde{a}^{n}_m \leq 1$, we can then apply the monotone convergence theorem to \eqref{Eq:lem_con_seq_sub0} and get \eqref{Eq:lem_con_seq_1}.
\item \emph{Step 4.} Let us prove by induction 
\begin{equation}
\tilde{a}^{\infty}_k = a^{*\infty}_k,\quad \forall k \geq 1, \label{Eq:step_4_proof_eq_1}
\end{equation}
with $\tilde{a}^{\infty}_k = \lim_{n \rightarrow \infty} \tilde{a}^n_k $, and $a^{*\infty}_k$ defined in \eqref{Eq:lem_con_seq_0}. Using \eqref{Eq:step_1_proof_eq_1}, and the definition of $a^{* n}_{1}$, we directly check that $\tilde{a}^{n}_1 = \tilde{a}^{*n}_1$ for any $ n \geq 2$. This proves $\tilde{a}^{\infty}_1 = a^{*1}_1$ and shows \eqref{Eq:step_4_proof_eq_1} for $k = 1$. Since $\sum_{l \leq n} a_{n+1-l} b_l \leq 1$ and $(\tilde{a}^n_l)_{n\geq 2,\,l \geq 1}$ is bounded by $1$, see Step 2, we can apply the dominated convergence theorem to \eqref{Eq:step_1_proof_eq_1} and get 
\begin{equation}
\tilde{a}^{\infty}_k = \sum_{l=1}^{k-1} a_{k+1-l} b_l \tilde{a}^{\infty}_{l} + a_1 b_{\infty}\tilde{a}^{\infty}_k. \label{Eq:step_3_proof_eq_1}
\end{equation}
When $a_1 b_{\infty} = 1$, we obtain $a_{j} b_{k'} = 0$ for any $k' \ne k$ or $j \ne 1$. Thus, a direct induction applied to \eqref{Eq:step_1_proof_eq_1} shows that $\tilde{a}^{n}_k = 0$ for any $k \geq 2$. By sending $n$ to $+\infty$, we find $\tilde{a}^{\infty}_k = 0$ for any $k \geq 2$ which guarantees \eqref{Eq:step_4_proof_eq_1}. We can then assume that $a_1 b_{\infty} < 1$. Under this assumption, Equation \eqref{Eq:step_3_proof_eq_1} reads  
\begin{equation}
\tilde{a}^{\infty}_k = \sum_{l=1}^{k-1} \underline{a}_{k+1-l} b_l\tilde{a}^{\infty}_{l}, \label{Eq:step_3_proof_eq_2}
\end{equation}
with $ \underline{a}_{l} = a_l/(1 - a_1 b_{\infty})$. The same lines of arguments show that $(a^{*\infty}_k)_{k \geq 1}$ also satisfies \eqref{Eq:step_3_proof_eq_2}. We can then apply the induction assumption to complete the proof \eqref{Eq:step_4_proof_eq_1}. 
\item \emph{Step 5.} Finally, we show \eqref{Eq:lem_con_seq_0}. Using \eqref{Eq:step_4_proof_eq_1}, we have $\lim_{n \rightarrow \infty} \bar{\mu}^n_{n-k} \tilde{a}^{n}_{k}/\bar{\mu}^n_{n-k} {a}^{* \infty}_{k} = 1$ for all $ k \geq 1$. This means that $(1-\epsilon) \bar{\mu}^n_{n-k} {a}^{* \infty}_{k} \leq \bar{\mu}^n_{n-k} \tilde{a}^{n}_{k} \leq (1+\epsilon) \bar{\mu}^n_{n-k} {a}^{* \infty}_{k}$ for any $\epsilon>0$ when $n$ becomes large enough. By summing the previous inequality over all the possible values of $k$ we get
\begin{equation}
\lim_{n \rightarrow \infty} \cfrac{\sum_{k = 1}^n \bar{\mu}^n_{n-k} \tilde{a}^{n}_{k}}{\sum_{k = 1}^n \bar{\mu}^n_{n-k} \tilde{a}^{* \infty}_{k}} = 1. \label{Eq:step_5_proof_eq_1}
\end{equation}
We combine \eqref{Eq:lem_con_seq_sub0}, and \eqref{Eq:step_5_proof_eq_1}, to show 
$
\lim_{n \rightarrow \infty} \cfrac{a^{\mu,b}_n}{\sum_{k = 1}^n \bar{\mu}^n_{n-k} \tilde{a}^{* \infty}_{k}} \leq \lim_{n \rightarrow \infty} \cfrac{\sum_{k = 1}^n \bar{\mu}^n_{n-k} \tilde{a}^{n}_{k}}{\sum_{k = 1}^n \bar{\mu}^n_{n-k} \tilde{a}^{* \infty}_{k}} = 1.
$
Thus, there exists $B\geq 0$ such that $a^{\mu,b}_n \leq B \sum_{k = 1}^n \bar{\mu}^n_{k} \tilde{a}^{* \infty}_{n-k}$ for all $n \geq 2$ which proves \eqref{Eq:lem_con_seq_0}.
\end{itemize}
\end{proof}

\subsubsection{Propagation of the error}
\manuallabel{subsec:Opti_policy}{E.2.2}
We introduce the following notations. Let $n \in \mathbb{N}^{*}$, and $z^1 \in \mathcal{Z}$. We write $\tau_{z_1} = \inf \{l>0,\, Z_{l} = z_1\}$, and $a_{k} = \mathbb{P}[\tau_{z_1 } \geq k, |Z_0 = z_1] $. We have the following result. \\
\begin{prop} Let $z_1 \in \mathcal{Z}$, and $n \in \mathbb{N}^{*}$. Under Assumptions \ref{assump:assump__0}, \ref{assump:assump__1_s}, \ref{assump:assump__2_s}, and \ref{assump:assump__3_s}, we have
\begin{align*}
\Esp_{0}[e^{n}(z_1)] \leq \epsilon_n + \sum_{j=1}^{n-1} a_{n - j}b_j\Esp_0[e^{j}(z_1)] ,
\end{align*}
with $\epsilon_n = e^1(z_1)a_n(z_1) + \sum_{j=1}^{n-1} a_{n-j}  \Esp[M_j]$ and $b_j = \cfrac{\Esp_0[\alpha_j \,  e^{j}(z_1)]}{\Esp_0[e^{j}(z_1)]}$. The variables $\alpha_j$, and $M_j$ are given by \eqref{Eq:val_eq_alpha}, and \eqref{Eq:val_eq_M}.
\label{prop:prop_1_2}
\end{prop}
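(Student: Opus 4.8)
The plan is to expand $\Esp_0[e^n(z_1)]$ by conditioning on the \emph{last} time the chain sits on $z_1$ before step $n$, and then to apply the one-step contraction of Proposition \ref{prop:prop_0} at that single visit. Since non-visited coordinates are never modified, the coordinate $e^{\cdot}(z_1)$ is frozen whenever $Z_{\cdot}\neq z_1$; hence, if the last visit before $n$ occurs at index $k$ (i.e. $Z_k=z_1$ and $Z_{k+1},\dots,Z_{n-1}\neq z_1$), then $e^n(z_1)=e^{k+1}(z_1)$. Writing $B_k=\{Z_k=z_1\}\cap\{Z_{k+1}\neq z_1,\dots,Z_{n-1}\neq z_1\}$ and noting that the events $(B_k)_{0\le k\le n-1}$ partition the whole space (because $Z_0=z_1$ guarantees at least one visit before $n$), I would start from
\begin{align*}
\Esp_0[e^n(z_1)] = \sum_{k=0}^{n-1}\Esp_0\big[e^{k+1}(z_1)\,\mathbf{1}_{B_k}\big].
\end{align*}

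Next I would treat a generic term with $k\ge1$. Conditioning on $\mathcal{F}_k$ and working on $\{Z_k=z_1\}$, the factor $e^{k+1}(z_1)$ (a function of the current iterate and of the fresh observation $X_{k+1}(z_1)$) should be independent of the subsequent non-return event $\{Z_{k+1},\dots,Z_{n-1}\neq z_1\}$, which is governed only by the trajectory of the homogeneous chain $(Z_l)_{l>k}$ restarted at $z_1$. Homogeneity then identifies the conditional probability of no return during $n-k$ steps with $a_{n-k}=\mathbb{P}[\tau_{z_1}\ge n-k\,|\,Z_0=z_1]$, so that $\Esp_0[e^{k+1}(z_1)\mathbf{1}_{B_k}]= a_{n-k}\,\Esp_0[e^{k+1}(z_1)\mathbf{1}_{Z_k=z_1}]$. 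Applying Proposition \ref{prop:prop_0} on $\{Z_k=z_1\}$, i.e. $\mathbf{1}_{Z_k=z_1}\Esp_k[e^{k+1}(z_1)]\le \mathbf{1}_{Z_k=z_1}(\alpha_k e^k(z_1)+M_k)$, and then bounding $\mathbf{1}_{Z_k=z_1}\le 1$ (licit since $\alpha_k e^k(z_1)+M_k\ge0$) upgrades the restricted expectations to full ones:
\begin{align*}
\Esp_0\big[e^{k+1}(z_1)\,\mathbf{1}_{B_k}\big]\le a_{n-k}\big(\Esp_0[\alpha_k e^k(z_1)]+\Esp[M_k]\big)= a_{n-k}\big(b_k\,\Esp_0[e^k(z_1)]+\Esp[M_k]\big),
\end{align*}
the last equality being just the definition $b_k=\Esp_0[\alpha_k e^k(z_1)]/\Esp_0[e^k(z_1)]$.

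The boundary term $k=0$ is handled separately: on $B_0=\{\tau_{z_1}\ge n\}$ the error has stayed frozen at $e^1(z_1)$ since the initial update, and the same independence argument gives $\Esp_0[e^1(z_1)\mathbf{1}_{B_0}]\le e^1(z_1)\,a_n$, reading $e^1(z_1)$ as its $\mathcal{F}_0$-conditional value. Summing the bound over $k=1,\dots,n-1$ and adding the $k=0$ contribution reproduces exactly $\epsilon_n=e^1(z_1)a_n+\sum_{j=1}^{n-1}a_{n-j}\Esp[M_j]$ together with the renewal sum $\sum_{j=1}^{n-1}a_{n-j}b_j\,\Esp_0[e^j(z_1)]$, which is the claimed inequality.

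The step I expect to be the crux is the factorization of the no-return probability: one must argue carefully that, conditionally on $\{Z_k=z_1\}$, the frozen error $e^{k+1}(z_1)$ is independent of the future excursion of $(Z_l)_{l>k}$ away from $z_1$. This rests both on the homogeneous Markov assumption on $(Z_n)_{n\ge1}$ and on the observation noise entering $e^{k+1}(z_1)$ being decoupled from the visiting schedule; making this precise, together with checking $\alpha_k\ge 0$ so that dropping the indicator is genuinely an upper bound, is where the real work lies, the remainder being bookkeeping of the renewal sum.
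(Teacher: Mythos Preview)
Your proposal is correct and follows essentially the same route as the paper: the paper also invokes the last-exit decomposition (citing Meyn's book) to write $\Esp_0[e^n(z_1)]$ as a sum over the last visit time to $z_1$, uses the freezing of $e^{\cdot}(z_1)$ between visits to replace $e^n$ by $e^{j+1}$, factorizes the no-return probability via the Markov property to extract $a_{n-j}$, and then applies Proposition~\ref{prop:prop_0} followed by dropping the indicator $\mathbf{1}_{Z_j=z_1}$. The only cosmetic difference is that the paper writes the factorization step as an inequality rather than an equality and does not pause on the conditional-independence issue you (rightly) flag as the crux; your account is in fact more explicit on that point than the paper's own proof.
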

\begin{proof}[Proof of Proposition \ref{prop:prop_1_2}] Using the last-exit decomposition, see Section 8.2.1 in \cite{meyn2012markov}, we have 
\begin{align*}
\Esp_{0}[e^n(z_1)] & = \Esp_{0}[e^n(z_1)\mathbf{1}_{\tau_{z_1} \geq n,A^n }] + \sum_{j=1}^{n-1} \Esp_{0}[e^n(z_1)\mathbf{1}_{\{\tau_{z_1} \geq n-j,\,Z_j =z_1\}}] \\
				   & = \Esp_{0}[e^1(z_1)\mathbf{1}_{\tau_{z_1} \geq n}] + \sum_{j=1}^{n-1} \Esp_{0}[e^{j+1}(z_1)\mathbf{1}_{\{\tau_{z_1} \geq n-j,\,Z_j =z_1\}}] \\
				   & \leq \Esp_{0}[e^1(z_1)\mathbf{1}_{\tau_{z_1} \geq n }] + \sum_{j=1}^{n-1} \mathbb{P}[\tau_{z_1 } \geq n-j|Z_j = z_1] \Esp_{0}[e^{j+1}(z_1) \mathbf{1}_{\{Z_j =z_1\}}] \\
				   & \underbrace{\leq}_{\text{Proposition \ref{prop:prop_0}}} \Esp_{0}[e^1(z_1)\mathbf{1}_{\tau_{z_1} \geq n,\,A^n}] + \sum_{j=1}^{n-1}  \mathbb{P}[\tau_{z_1 } \geq n-j|Z_j = z_1]  \Esp_{0}[(\alpha_j e^{j}(z_1) + M_j)] \\
				   					& \leq \epsilon_n + \sum_{j=1}^{n-1} a_{n - j} b_j \Esp_0[e^{j}(z_1)], 
\end{align*}
with $\epsilon_n$, $a_j$, and $b_j$ defined in Proposition \ref{prop:prop_1_2}. The variables $\alpha_n$, and $M_n$ are defined in Proposition \ref{prop:prop_0}. In the second equality, we use that $e^n(z_1)$ does not change as long as the state $z_1$ is not reached. This completes the proof.
\end{proof}
\subsubsection{Proof of Inequalities \eqref{eq:conv_error_numscheme_1_11} and  \eqref{eq:conv_error_numscheme_1_2}}
\begin{proof}[Proof of Theorem \ref{theo:conv_error_numscheme_1}] We split the proof in two steps. In Step (i), we show \eqref{eq:conv_error_numscheme_1_11} and then in Step (ii) we show \eqref{eq:conv_error_numscheme_1_2}.
\paragraph{Step (i):} In this part, we first prove \eqref{eq:conv_error_numscheme_1_11} when the space $\mathcal{Z}$ is finite. Then, we show how to extend \eqref{eq:conv_error_numscheme_1_1} to the general case. 
\paragraph{Sub-step (i-1):} Let us demonstrate \eqref{eq:conv_error_numscheme_1_1} when the space $\mathcal{Z}$ is finite. We define $v_n$ by $v_n = b_n \Esp[e^n(z_1)]$, and $r = \sum_{j \geq 1} a_{j} \leq \Esp[ \tau_{z_1} ] < \infty$. Using Proposition \ref{prop:prop_1_2}, the sequence $v_n$ verifies
\begin{align*}
v_n \leq \bar{\epsilon}_n + \mu_n \sum_{j=1}^{n-1} \bar{a}_{n-j} v_j(z_1), 
\end{align*}
with $\bar{\epsilon}_n(z_1) = b_n\epsilon_n$, $\mu_n = b_n$, and $\bar{a}_{j} = a_j / r$. Thus, using Lemma \ref{lem:lem_5}, we get 
\begin{align*}
v_n \leq \sum_{j=1}^n \bar{a}^{(\mu,b)^j}_{n+1-j} \bar{\epsilon}_j, \numberthis \label{Eq:reformulation_step1_1_th1}
\end{align*}
with $(\bar{a}^{(\mu,b)}_n)_{n \geq 1}$ defined in \eqref{eq:seq_a_j_n}.  We recall that $\epsilon_n = e^1 \bar{a}_n + \sum_{k=1}^{n-1}\bar{a}_{n-k}b_k\Esp[M_k]$.

Let us prove $\sum_{k \geq 1} \epsilon_k < \infty$. For this, we assimilate the sequence $\bar{\epsilon}$ to the measure $\mu = \sum_{k\geq 1} \bar{\epsilon}_k \delta_k$ with $\delta_k$ the Dirac measure at $k$ and recall that $\epsilon_n = e^1\bar{a}_n + \sum_{k=1}^{n-1}\bar{a}_{n-k}\Esp[M_k]$. Then, we introduce the variable $\tau_{z_1} = \inf \{l>0,\, Z_{l} = z_1\}$ already defined in Section \ref{subsec:Opti_policy}. By definition of $\bar{a}_{n-k}$, we have $\sum_{n \geq 1} \bar{a}_{n} = 1 < \infty$. Given that $\sum_{k} \Esp[\gamma^2_k] < \infty,$ and $\Esp[M_k]=O(\Esp[\gamma^2_k])$ for all the algorithms, we also get 
\begin{equation}
\sum_{k} \Esp[M_k] < \infty. \label{Eq:reformulation_step1_1_th1_002}
\end{equation}
We deduce from  \eqref{Eq:reformulation_step1_1_th1_002} that 
\begin{align*}
\Esp[\sum_{n\geq 1} \epsilon_k] & \leq e^1 \Esp[\sum_{n \geq 1} \bar{a}_{n} ] +  \sum_{k \geq 1} \Esp[M_k] \big( \Esp[\sum_{n \geq k} \bar{a}_{n-k} ]\big) \leq  e^1 + \sum_{k \geq 1} \Esp[M_k] < \infty.
\end{align*} 
This ensures that $\sum_{k} \epsilon_k < \infty$ and shows that the measure $\mu$ has a finite mass. 

Now, Lemma \ref{lem:lem_con_seq} gives $a^{(\mu,b)^j}_{n} \underset{n \rightarrow \infty}{\rightarrow} 0$, for any $j \geq 1$. Thus, the dominated convergence theorem ensures that $v_n\underset{n \rightarrow \infty}{\rightarrow} 0 $. Since $\alpha_n$ converges towards $1$, $b_n$ converges towards $1$ as well, which means that $\Esp[e^n(z_1)]  \underset{n \rightarrow \infty}{\rightarrow} 0 $. Since the space $\mathcal{Z}$ is finite, we deduce that $ \Esp[E^n] \underset{n \rightarrow \infty}{\rightarrow } 0$.
\paragraph{Sub-step(i-2):} In this second step, we show \eqref{eq:conv_error_numscheme_1_11} when the space $\mathcal{Z}$ is countable. Let $\epsilon>0$. Since $\Esp[E^1] < \infty$, there exists $k_0\in \mathbb{N}$ such that $$\sum_{k\geq k_0} \Esp[e^1(z_k)]\nu_k < \frac{\epsilon}{2}.$$
We write $A_{k_0}$ for the set $A_{k_0} = \{z_k,\; k\leq k_0\}$. Since $A_{k_0}$ is finite, we use \textbf{Sub-step(i-1)} to show the existence of $k_1\in \mathbb{N}$ such that $$\sum_{k\leq k_0} \Esp[e^{k_1}(z_k)]\nu_k < \frac{\epsilon}{2}, \quad \forall k\geq k_1, \forall z_k \in A_{k_0}.$$
We take now $k \geq k_1$. Using $(\Esp[e^{l}(z)])_{l \geq 1}$ is non-increasing for any $z \in \mathcal{Z}$, we get $E^{k} = \sum_{k'\geq k_0} \Esp[e^k(z_{k'})]\nu_{k'} + \sum_{k'< k_0} \Esp[e^k(z_{k'})]\nu_{k'} \leq \frac{\epsilon}{2} + \frac{\epsilon}{2} = \epsilon$.

\paragraph{Step (ii):} In this step, we show \eqref{eq:conv_error_numscheme_1_2}. By applying Lemma \ref{lem:lem_con_seq}, we obtain the existence of a constant $B$ such that
$$
a^{(\mu,b)}_n \leq B \sum_{k = 1}^{n-1} \bar{\mu}^n_{k} \bar{a}^{*\infty}_{n-k},
$$
with $\bar{\mu}^n_{k}$, and $\bar{a}^{*\infty}$ defined in Equation \eqref{Eq:lem_con_seq_0}. We can then use Equation \eqref{Eq:reformulation_step1_1_th1}, to get 
$$
v_n \leq B \sum_{j= 1}^{n} \sum_{l = 1}^j \bar{\mu}^n_{k} \bar{a}^{* \infty}_{l-j} \bar{\epsilon}_j.
$$

Since $\alpha_n $ converges towards $1$ almost surely, the variable $b_n$ is bounded from below and thus there exists a constant $B'$ such that 
\begin{equation}
\Esp[e^n(z_1) ] \leq B' \sum_{j= 1}^{n} \sum_{l = 1}^j \bar{\mu}^n_{k} \bar{a}^{* \infty}_{l-j} \bar{\epsilon}_j.\label{Eq:proof_th1_step2_01}
\end{equation} 
This completes the proof.

\end{proof}

\section{Proof of Proposition \ref{prop:opti_gamma}}
\label{sec:proof_speed_conv}
\begin{proof}[Proof of of Propostion \ref{prop:opti_gamma}] For simplicity, we prove the result for Algorithm \ref{Alg:v_1_s} however the same argument holds for Algorithm \ref{Alg:v_4_s}. Let us prove by induction on $n \in \mathbb{N}$ that 
\begin{align*}
e^{n}_{\gamma} \leq e^{n}_{\tilde{\gamma}}, \quad a.s, \quad \forall \tilde{\gamma} \in \Gamma. \numberthis \label{ineq:to_prove_induc}
\end{align*}

For $n = 0$, Inequality \eqref{ineq:to_prove_induc} is directly satisfied since the initial error $e^0$ does not depend on the choice of the learning rate. Let $\tilde{\gamma} \in \Gamma$. We assume now that $e^n_{\gamma} \leq e^n_{\tilde{\gamma}}$, $a.s.$ Using Equation \eqref{eq:dynamic_error_e} and the definition of $(\gamma_n)_{n \geq 0}$, we have  
\begin{align*}
e^{n+1}_{\gamma} = \mathbf{1}_{A} \big(g(e^{n}_{\gamma}) + S_n \big) + \mathbf{1}_{A^c}e^{n}_{\gamma} , \numberthis \label{ineq:to_prove_induc1}
\end{align*}

with $g(x) = x - \cfrac{L^2 x^2}{2B(x + (2 + v_n))}$. The previous expression of $g$ is obtained by minimising the function $y:\rightarrow (1 - 2L y + B y^2)e^{n}_{\gamma} + B(2 + v_n) \times y^2$. This means that 

\begin{align*}
g(x) \leq (1 - 2L y + B y^2)x + M_n y^2, \quad \forall y \in \mathbb{R}, \forall x \in \mathbb{R}_+.\numberthis \label{ineq:to_prove_induc2}
\end{align*}

A study of the function $g$ shows that it is non-decreasing on the interval $[0,x_2[$ with $x_2 = \arg\sup_{x \in \mathbb{R}_+} g(x)$ and $g(x) \leq x$ for any $x \in \mathbb{R}_+$. Thus, we have necessarily $\mathbf{1}_{A} e^n_{\tilde{\gamma}} \leq \mathbf{1}_{A} g(e^{n-1}_{\tilde{\gamma}}) \leq g(x_2) \leq x_2$ for any $n \geq 1$, and $\tilde{\gamma} \in \Gamma$. Note that when $x_2 = + \infty$, we do not need to assume that $\mathbf{1}_A S_n \leq 0$.  Using that $e^n_{\gamma} \leq e^n_{\tilde{\gamma}}$ a.s, the monotonicity of $g$ on $[0,x_2[$, and Equation \eqref{ineq:to_prove_induc2}, we get 

\begin{align*}
\mathbf{1}_{A}e^{n+1}_{\gamma} = \mathbf{1}_{A}g(e^{n}_{\gamma}) + \mathbf{1}_{A}S_n & \leq \mathbf{1}_{A}g(\mathbf{1}_{A}e^{n}_{\tilde{\gamma}}) + \mathbf{1}_{A}S_n \\
      & \leq (1 - 2L \tilde{\gamma}_n + B \tilde{\gamma}_n^2)\mathbf{1}_{A}e^{n}_{\tilde{\gamma}} + \mathbf{1}_{A}M_n \tilde{\gamma}_n^2 + \mathbf{1}_{A}S_n \\
      & = \mathbf{1}_{A}e^{n+1}_{\tilde{\gamma}}, \quad a.s \numberthis \label{ineq:to_prove_induc3}.
\end{align*}
We complete the proof by combining \eqref{ineq:to_prove_induc1}, \eqref{ineq:to_prove_induc3}, and the induction assumption.
\end{proof}

\section{Proof of Proposition \ref{prop:conv_num_schem}}
\label{app:proof_v}

\begin{proof}[Proof of Proposition \ref{prop:conv_num_schem}] We prove this result in three steps. First, we show that $v$ can be approximated by a numerical scheme $\bar{v}^{k}$. Then, we replace $\bar{v}^{k}$ by another scheme $v^k$ that also converges towards $v$. Finally, we show that $v^k_n$ tends to $v^k$ when $n \rightarrow \infty$. 
\paragraph{Step (i):} We start with our initial control problem where the agents may choose its trading speed at any time. It was studied by many authors, see for example \cite{MFGTradeCrow2016}, who show that the optimal trading speed verifies
\begin{align*}
\nu(t,q) = \cfrac{h_1(t) - q h_2(t)}{2\kappa}, \numberthis \label{Eq:eq_nu}
\end{align*}
with $h_1 : [0,T]\rightarrow \mathbb{R}$, and  $h_2 : [0,T]\rightarrow \mathbb{R}_+$ a positive function. This means that the optimal inventory follows:
$$
Q'(t) = \cfrac{h_1(t) - Q(t) h_2(t)}{2\kappa}, \quad \forall t \in [0,T],
$$
with $Q(0) = q_0$ given. Thus, $Q$ verifies 
\begin{align*}
Q(t) = q_0 e^{- \int_0^t \frac{h_2(s)}{2\kappa}\, ds} + e^{- \int_0^t \frac{h_2(s)}{2\kappa}\, ds} \int_{0}^t \cfrac{h_1(t)}{2\kappa}e^{- \int_0^s \frac{h_2(u)}{2\kappa}\, du}\, ds. \numberthis \label{Eq:eq_test_q}
\end{align*}
Using Equation \eqref{Eq:eq_test_q}, and basic inequalities, one can show that if we take a large enough $\bar{q} \in \mathbb{R}_+$ and place ourselves in $S_{q} = [-\bar{q},\bar{q}]$ then 
$$
Q(t) \in S_{q},
$$
when $Q(0) \in S_{q}$. Hence, we rewrite the dynamic programming principle as follows:
\begin{align*}
v(t,q) = \sup_{\nu \in \bar{A}(t,q) } \Esp\big[ M_{t+\Delta}^t - \phi \int_{t}^{t+\Delta} Q_s^2 \, ds + v(t+\Delta,Q_{t+\Delta}) |Q_t = q\big].\numberthis \label{Eq:v_dpp_v2}
\end{align*}
where $ \bar{A}(t,q) \subset S_q$ is the set of admissible actions\footnote{We only allow controls that lead to states where the inventory stays in $S_q$.}. We can focus on the set $ \bar{A}(t,q)$ instead of $\mathbb{R}$ since other controls are not optimal. Now, we approximate this problem in a classical way using the numerical scheme $\bar{v}^{k}$ defined such that 
\begin{align*}
\bar{v}^k(n_t,n_q) = \sup_{\nu \in D_a} \Esp\big[ M_{n_t+1}^{n_t} - \phi Q_{n_t\Delta_t}^2 \Delta  + \bar{v}^k(n_t+1,n^\nu_{q +1}) |Q_{n_t\Delta_t} =n_q\Delta_q\big], \quad \forall (n_t,n_q) \in D_T \times D_q, 
\end{align*}
with $M_{n_t+1}^{n_t} = M_{\Delta_t (n_t+1)}^{n_t \Delta}$, $n^\nu_{q +1}$ the index such that $ Q^\nu_{(n_t+1)\Delta_t} = n^\nu_{q +1} \Delta_q$ and $\bar{A} = A \cap \{i \Delta_q,\, i \in \mathbb{Z}\}$. The convergence of $(\bar{v}^k)_{k \geq \in (\mathbb{N}^*)^2}$ towards $v$ on the set $D_T \times D_q$ when $k \rightarrow \infty$ is standard.
\paragraph{Step (ii):} We denote by $v^k$ the numerical scheme 
\begin{align*}
v^k(n_t,n_q) =  \Esp\big[\sup_{\nu \in D_a}\big\{ M_{n_t+1}^{n_t} - \phi Q_{n_t\Delta_t}^2 \Delta  + v^k(n_t+1,n^\nu_{q +1})\big\} |Q_{n_t\Delta_t} =n_q\Delta_q\big], \quad \forall (n_t,n_q) \in D_T \times D_q, . 
\end{align*}
Let us show that $\bar{v}^k$ and $v^k$ have the same limit. For this, we use a backward recurrence. For the moment, we assume that $\big|\sup_{\nu} \Esp[M^{n_t}_{n_t+1}] - \Esp[\sup_{\nu}M^{n_t}_{n_t+1}]\big|  \leq K \Delta^2_t$ and we will prove it at the end of Step (ii). We want to show that $|\bar{v}^k(n_t,n_q) -  v^k(n_t,n_q)| \leq K (T-t) \Delta_t$ for all $(n_t,n_q) \in D_T \times D_q$. At the terminal time $\bar{v}^k$ and $v^k$ coincide. We move now to the induction part. We have
\begin{align*}
|\bar{v}^k(n_t,n_q) -   v^k(n_t,n_q)| & = \big| \sup_{\nu} \Esp[M^{n_t}_{n_t+1} - \phi q^2 \Delta_t + \bar{v}^k(n_t+1,n^{\nu}_{q_{t}+1})] \\
													      & \qquad - \Esp[\sup_{\nu} M^{n_t}_{n_t+1} - \phi q^2 \Delta_t + v^k(n_t+1,n^{\nu}_{q_t+1})]\big| \\
										  & \hspace{-1cm}\underbrace{\leq}_{\bar{v}\text{ and }v\text{ are not random}} \big|\sup_{\nu} \Esp[M^{n_t}_{n_t+1}] - \Esp[\sup_{\nu}M^{n_t}_{n_t+1}]\big| + \big|\sup_{\nu}\big\{\bar{v}^k(n_t+1,n^{\nu}_{q_{t}+1})- v^k(n_t+1,n^{\nu}_{q_{t}+1})\big\}\big|\\
													 & \leq \Esp\big[ \sup_{\nu}\big\{K \Delta_t^2  + K (T-t -\Delta_t) \Delta_t \big\}\big] \\
													 & =  K (T - t) \Delta_t.
\end{align*} 
In the third inequality, we use the induction assumption to complete the proof. Now, let us show that $\big|\sup_{\nu} \Esp[M^{n_t}_{n_t+1}] - \Esp[\sup_{\nu}M^{n_t}_{n_t+1}]\big|  \leq K \Delta^2_t$. For this, we write $t = n_t \Delta_t $, $\Delta Q = Q_{t+\Delta_t} - Q_t = \nu \Delta_t$, $\Delta S = S_{t+\Delta_t} - S_t$, and $\Delta \bar{S} = \int_{t}^{t + \Delta_t} (S_s - S_t) \, ds$. Thus, we have 
\begin{align*}
M^{n_t}_{n_t+1} &= (W_{t+\Delta} - W_t) +  (Q_{t+\Delta} S_{t+\Delta} - Q_t S_t) \\
			  & = -\nu \Delta \bar{S} - \Delta Q S_t - \kappa \nu^2 \Delta_t + Q_t \Delta	 S + \Delta Q S_t + \Delta Q \Delta S \\
			  & = -\nu \Delta \bar{S}  - \kappa \nu^2 \Delta_t + Q_t \Delta S + \nu\Delta_t \Delta S. \numberthis \label{Eq:M_dpp_opti_cj}
\end{align*}
The above equation shows $\sup_{\nu}  M^{n_t}_{n_t+1} = \frac{(\Delta_t \Delta S - \Delta \bar{S})^2}{4 \kappa \Delta_t} + Q_t \Delta S$. Using $\Esp[\Delta S] = \alpha \Delta_t$ and 
$$
\Esp[\Delta \bar{S}] = \int_{t}^{t+\Delta} \alpha (s-t) \, ds = \alpha\Delta_t^2/2,
$$
we get 
$$
\sup_{\nu} \Esp[M^{n_t}_{n_t+1}] =\frac{\alpha^2 \Delta_t^3}{16 \kappa} + \alpha Q_t \Delta_t , \quad \text{ and } \quad  \Esp[ \sup_{\nu} M^{n_t}_{n_t+1}] = \frac{\alpha^2 \Delta_t^3}{16 \kappa} + \frac{\sigma^2 \Delta^2_t}{12 \kappa}+ \alpha Q_t \Delta_t.
$$
Thus we deduce that $\big|\sup_{\nu} \Esp[M^{n_t}_{n_t+1}] - \Esp[\sup_{\nu}M^{n_t}_{n_t+1}]\big|  \leq K \Delta^2_t $ with  $ K = \frac{\sigma^2}{12 \kappa}$.
\paragraph{Step (iii):} Theorem \ref{theo:conv_error_numscheme_1}, proves that $v^k_n$ converges towards $v^k$. Thus by composition we have $v^k_n$ converges point-wise towards $v$ when $n \rightarrow \infty$ and $k \rightarrow \infty$ which completes the proof.
\end{proof}

\bibliographystyle{plain}
\bibliography{Learning_alg_sto}

\end{document}